\documentclass{article}

\usepackage{PRIMEarxiv}

\usepackage[utf8]{inputenc} 
\usepackage[T1]{fontenc}    
\usepackage{hyperref}       
\usepackage{url}            
\usepackage{booktabs}       
\usepackage{amsfonts}       
\usepackage{nicefrac}       
\usepackage{microtype}      
\usepackage{lipsum}
\usepackage{fancyhdr}       
\usepackage{graphicx}       
\graphicspath{{media/}}     

\usepackage{hyperref}
\usepackage{url}
\usepackage{enumitem}
\usepackage{booktabs}
\usepackage{amsmath,amssymb,amsthm}
\usepackage{bbm}
\usepackage{graphicx} 

\newtheorem{proposition}{Proposition}
\theoremstyle{remark}

\theoremstyle{remark}
\usepackage{multirow}
\usepackage{booktabs}
\usepackage[table]{xcolor}
\usepackage{array}
\usepackage{xcolor}

\usepackage{subcaption}
\usepackage{bm}
\usepackage[most]{tcolorbox}

\newtcolorbox{theorembox}[1]{
    enhanced,
    breakable,
    colback=blue!2,
    colframe=blue!50!black,
    boxrule=0.8pt,
    arc=1.5mm,
    title={#1},
    fonttitle=\bfseries,
    left=8pt,
    right=8pt,
    top=6pt,
    bottom=6pt
}
\usepackage{booktabs}
\usepackage{makecell}
\usepackage{adjustbox}
\usepackage{threeparttable}
\usepackage{todonotes}
\usepackage{tabularx}
\usepackage{booktabs}
\usepackage{array}
\usepackage{makecell}
\usepackage{algorithm}
\usepackage{algpseudocode}
\newcolumntype{C}{>{\centering\arraybackslash}X}

\definecolor{covgray}{gray}{0.38}
\definecolor{bestblue}{RGB}{20,80,180}

\providecommand{\method}{\textit{CoCoNet}}

\usepackage{booktabs}
\usepackage{xcolor}
\usepackage{array}

\usepackage{amsthm}
\usepackage{booktabs}
\usepackage{xcolor}
\usepackage{natbib}

\definecolor{rev}{rgb}{0,0,0.55}

\title{\textbf{\method}: \textbf{Co}nvex \textbf{Co}nformal \textbf{Net}work for Multi-dimensional Uncertainty Regions}
  
\title{\textbf{\method}: \textbf{Co}nvex \textbf{Co}nformal \textbf{Net}work for Multi-dimensional Uncertainty Regions}

\author{
Alireza Moradi$^{1,3}$,
Thomas Bruys$^{2,3}$,
Pascal Van Hentenryck$^{1,3}$,
Reza Zandehshahvar$^{1,3}$\thanks{Corresponding author} \\
\small $^{1}$H. Milton Stewart School of Industrial and Systems Engineering (ISyE), Georgia Institute of Technology \\
\small $^{2}$School of Electrical and Computer Engineering (ECE), Georgia Institute of Technology \\
\small $^{3}$NSF AI Institute for Advances in Optimization (AI4OPT), Georgia Institute of Technology \\
\small Atlanta, GA, USA \\
\small \texttt{alirezamoradi@gatech.edu, thomas.bruys@gatech.edu} \\
\small \texttt{pascal.vanhentenryck@isye.gatech.edu, reza@isye.gatech.edu}
}

\begin{document}
\maketitle

\begin{abstract}
Conformal prediction (CP) provides distribution-free coverage guarantees under exchangeability, but the efficiency of its prediction regions depends critically on the choice of the conformity score. This is particularly important in multivariate regression, where dependencies across output dimensions induce complex joint uncertainty structures. This paper introduces \method{} (\textbf{\textit{Co}}nvex \textbf{\textit{Co}}nformal \textbf{\textit{Net}}work) for learning convex conformity scores that induce flexible, data-adaptive prediction regions. \method{} learns the score from held-out data by optimizing prediction-region efficiency while enforcing convexity with respect to the residual. \method{} then applies split-conformal calibration to retain finite-sample coverage guarantees. The paper presents two variants of the method: \method{}$_{\mathrm{Spline}}$, based on piecewise-linear scores, and \method{}$_{\mathrm{ICNN}}$, based on Input Convex Neural Networks. Experiments compare \method{} with other CP baselines over synthetic and real-world datasets with output dimensions ranging from 2 to 16. The results demonstrate the flexibility of \method{} in learning different response types. Across 8 models and 30 multi-dimensional regression tasks, \method{}$_\text{ICNN}$ and \method{}$_{\mathrm{Spline}\text{-}\ell_1}$ achieve the best efficiencies with average ranks of 1.45 and 2.75, respectively. In particular, \method{}$_\text{ICNN}$ ranks first on 22 tasks out of 30 in terms of efficiency.

\end{abstract}

\keywords{Conformal Prediction, Uncertainty Quantification, Multivariate Prediction Regions, Learned Conformity Scores, Input-Convex Neural Networks, Adaptive Prediction Sets, Distribution-Free Prediction}

\section{Introduction}

\label{sec:intro}

Prediction regions complement point predictions by characterizing which outcomes remain plausible for a given input, providing a principled representation of predictive uncertainty for reliable decision-making and optimization. This is particularly important in multivariate regression, where multiple, often strongly dependent, outcomes jointly influence downstream decisions. Conformal prediction (CP) provides a distribution-free and model-agnostic framework for constructing prediction regions with finite-sample marginal coverage guarantees under exchangeability assumption~\citep{vovk2005algorithmic,lei2018distribution}.

While CP provides marginal coverage validity, the efficiency (i.e., size) and geometry of the resulting prediction regions depend critically on the choice of \textit{conformity score}~\citep{lei2018distribution,romano2019conformalized,bai2022efficient}. Different conformity scores can achieve the same target coverage while producing regions with substantially different sizes and shapes. This distinction becomes particularly important in multivariate settings, where dependencies across output dimensions can induce complex joint uncertainty structures. A conformity score that has poor alignment with the uncertainty structure of the output can therefore produce unnecessarily large prediction regions despite satisfying the desired coverage. Designing conformity scores that capture the geometry of the joint uncertainty is thus a central challenge in constructing efficient multivariate prediction regions.

This work focuses on constructing \emph{convex} CP regions for multi-dimensional regression. Beyond providing a coherent and interpretable representation of joint uncertainty, convexity serves as a useful form of geometric regularization: it prevents fragmented prediction regions while retaining the flexibility to adapt to different uncertainty geometries. Convex regions are also particularly desirable when uncertainty estimates inform downstream decisions. They support efficient computational operations, such as membership testing and projection, and can be directly incorporated as uncertainty sets into well-developed frameworks for robust optimization, planning, and control, substantially improving the tractability of the resulting decision-making problems~\citep{bertsimas2011theory,johnstone2021conformal,tumu2024multi}.

Several conformal methods have been developed to construct structured convex prediction regions for multivariate regression. Coordinatewise constructions naturally produce axis-aligned hyperrectangles~\citep{sampson2024conformal,schlembach2025dynamic}, while covariance- and Mahalanobis-based scores yield ellipsoidal regions that account for dependencies among outputs~\citep{johnstone2021conformal,messoudi2022ellipsoidal}. Minimum-volume conformal sets (MVCS) formulate conformity-score construction as an optimization problem over flexible norm-based families, including single- and multi-norm constructions~\citep{braun2026minimum}. These approaches provide increasingly adaptive convex uncertainty regions, but their flexibility remains tied to a prespecified geometric family: the prediction region is constrained to remain a hyperrectangle, ellipsoid, norm-based set, or another predefined shape. Consequently, even when the parametric scores are learned from data, their induced geometry can adapt only within the selected family and may result in unnecessarily large prediction regions. \cite{yeh2025endtoend} introduced an end-to-end conformal calibration approach for robust optimization that uses a partially input convex neural network (PICNN) to learn flexible, convex uncertainty sets. By employing ReLU activation functions to generate polytope uncertainty regions, their method preserves the tractability of the downstream optimization problem. However, their primary focus is on optimizing the downstream decision rather than producing efficient prediction regions. Extended literature review is presented in Appendix~\ref{app:related_work}.

\begin{figure}[!t]
    \centering
    \includegraphics[width=\linewidth]{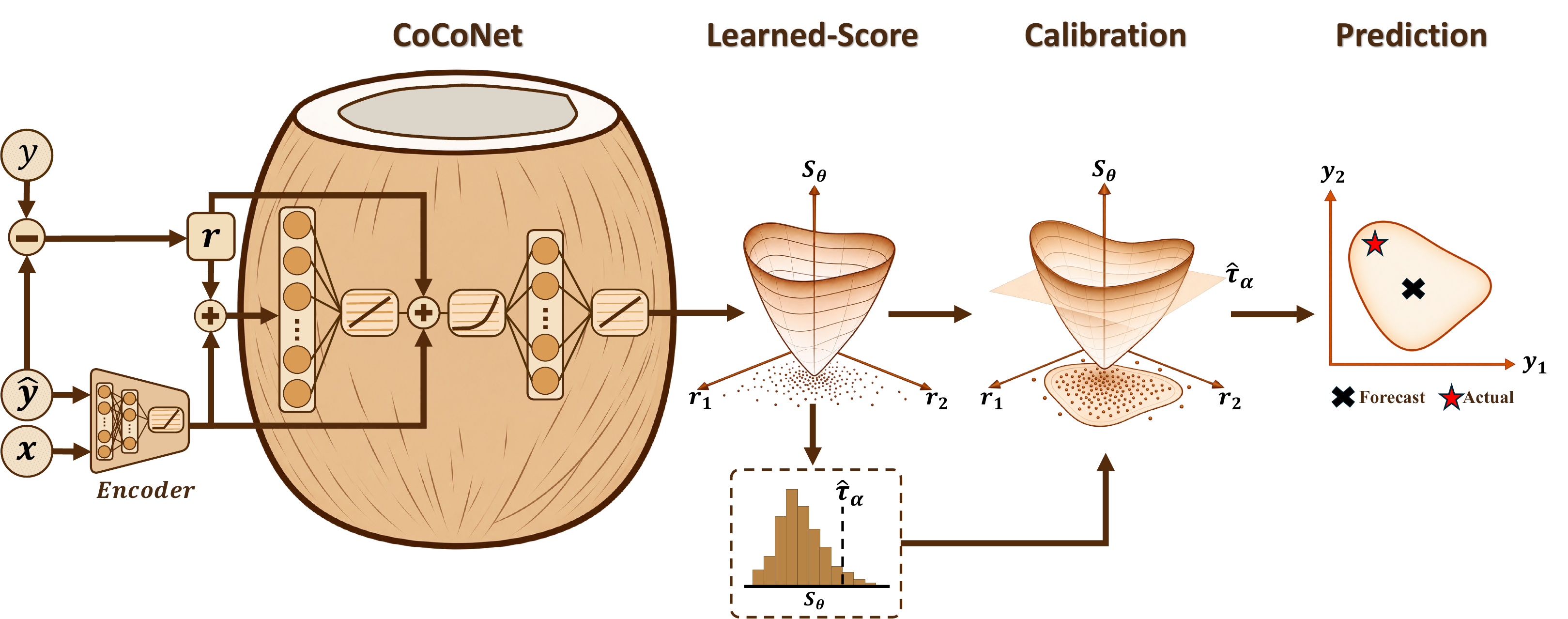}
\caption{\textbf{Overview of \method{}.}
Given covariates $\mathbf{x}$, point predictions $\widehat{\mathbf{y}}$, and actual responses $\mathbf{y}$, residuals $\mathbf{r}=\mathbf{y}-\widehat{\mathbf{y}}$ are constructed while $h(\cdot)$ encodes $\mathbf{x}$ and $\widehat{\mathbf{y}}$ into an input-dependent representation. The residuals and encoded representation are then passed through the scoring network, where they determine a conformity score $S_\theta$ that is convex in the residual space. A held-out calibration set determines the conformal threshold $\widehat{\tau}_\alpha$. The corresponding $\widehat{\tau}_\alpha$-sublevel set defines a convex region in residual space, which is translated by the point prediction $\widehat{\mathbf{y}}$ to obtain the final prediction region in the response space with finite-sample marginal coverage under exchangeability.}
\label{fig:figure1}
\end{figure}

This paper introduces \method{}, the \textbf{Co}nvex \textbf{Co}nformal \textbf{Net}work, which learns input-adaptive convex conformity scores directly from data to construct efficient prediction regions given outputs of a base predictive model. By enforcing convexity of the learned score with respect to the residual, its sublevel sets induce convex prediction regions while allowing their geometry to adapt to the joint uncertainty structure. The score is learned on a dataset disjoint from the conformal calibration set and is subsequently calibrated using split CP, preserving finite-sample marginal coverage under exchangeability. Figure~\ref{fig:figure1} illustrates the complete pipeline, from feature encoding and conformity-score learning through calibration to construction of the final prediction region. The main contributions of this paper are:
\begin{itemize}[leftmargin=*]

\item \textbf{Learning input-adaptive convex prediction regions:} This paper formulates the construction of convex prediction regions as a score-learning problem. It introduces a class of \textit{admissible} conformity scores $\mathcal{S}_{\mathrm{adm}}$ whose sublevel sets are convex and contain the point prediction. It also establishes that any compact convex region containing the point prediction in its interior can be represented as a sublevel set of an admissible score. \method{} then learns an input-dependent score from held-out data using an efficiency-driven objective. This enables the geometry to be learned from data without imposing restrictions beyond convexity.

\item \textbf{Flexible convex score parameterizations:} Two variants of \method{} are developed: \method{}$_{\mathrm{Spline}}$, which learns piecewise-linear convex scores, and \method{}$_{\mathrm{ICNN}}$, which parameterizes the score using an Input Convex Neural Network (ICNN)~\cite{amos2017input}.  \method{}$_{\mathrm{Spline}}$ induces piecewise linear prediction regions including hyperrectangles (using $\ell_{\infty}$ norm aggregation across the output coordinates), or polytopes (using $\ell_1$-norm aggregation). \method{}$_{\mathrm{ICNN}}$ adapts to general convex prediction regions. These variants provide different levels of flexibility for learning convex prediction-region geometries directly from data.

\item \textbf{Extensive empirical evaluation:} \method{} is evaluated against existing CP baselines on synthetic instances and 30 real-world multivariate regression tasks with output dimensions ranging from $2$ to $16$, covering diverse dependency structures and prediction-region geometries.
\end{itemize}

Comparing with the other baselines, \method{}$_{\mathrm{ICNN}}$ achieves the best overall efficiency, ranking first on 22 of the 30 real-world multi-dimensional regression tasks and an average rank of 1.45 (out of 8 considered CP methods) while maintaining valid coverage. \method{}$_{\mathrm{Spline}\text{-}\ell_1}$ ranks as the second best approach with the average rank of 2.75 (out of 8 considered CP methods). The codes are available on \hyperlink{https://github.com/AI4OPT/CoCoNet}{GitHub} \footnote{\url{https://github.com/AI4OPT/CoCoNet}}.

\section{Problem Setup}
\label{sec:problem-setup}

Let $\mathbf{X}\in\mathcal{X}\subseteq\mathbb{R}^{d}$ denote the input vector
and $\mathbf{Y}\in\mathcal{Y}\subseteq\mathbb{R}^{p}$ a $p$-dimensional
continuous response, with $(\mathbf{X},\mathbf{Y})$ drawn from an unknown joint
distribution $P_{\mathbf{X}\mathbf{Y}}$. A pretrained point predictor $F:\mathcal{X}\rightarrow\mathcal{Y}$
produces $\widehat{\mathbf{y}}=F(\mathbf{x})$, with residual $\mathbf{r}=\mathbf{y}-F(\mathbf{x}) \in \mathbb{R}^p$.
 Given the fixed predictor $F$, the goal is to construct a prediction region
$\mathcal{C}(\mathbf{x})\subseteq\mathcal{Y}$ from its outputs such that:
\begin{align}
\mathbb{P}\!\left(
\mathbf{Y}\in\mathcal{C}(\mathbf{X})
\right)\geq 1-\alpha,
\end{align}
while keeping its expected size $\mathbb{E}[|\mathcal{C}(\mathbf{X})|]$ as small as possible.

\paragraph{Split Conformal Prediction.}
Split CP constructs prediction regions with finite-sample
marginal coverage under exchangeability~\citep{vovk2005algorithmic,
papadopoulos2002inductive,lei2018distribution}. To determine which candidate
responses should be included, it uses a conformity score
$S:\mathcal{X}\times\mathbb{R}^{p}\rightarrow\mathbb{R}$ that measures the
compatibility of a candidate output with the prediction, with smaller values
indicating greater conformity. A simple example is
$S(\mathbf{x},\mathbf{r})=\|\mathbf{r}\|_{1}$. For a threshold $\tau$, the
score induces the prediction region
\begin{align}
\mathcal{C}(\mathbf{x};\tau)
=
\left\{
\mathbf{y}\in\mathcal{Y}:
S\!\left(\mathbf{x},\mathbf{y}-F(\mathbf{x})\right)\leq\tau
\right\}.
\label{eq:cp-region}
\end{align}

Given an independent calibration set
$\mathcal{D}_{\mathrm{cal}}
=\{(\mathbf{x}_i,\mathbf{y}_i)\}_{i=1}^{n_{\mathrm{cal}}}$, define
\begin{align}
s_i
&=
S\!\left(\mathbf{x}_i,\mathbf{y}_i-F(\mathbf{x}_i)\right),
\qquad
\widehat{\tau}_{\alpha}
=
s_{(k)},
\qquad
k=\left\lceil(n_{\mathrm{cal}}+1)(1-\alpha)\right\rceil,
\label{eq:cp-threshold}
\end{align}
where $s_{(1)}\leq\cdots\leq s_{(n_{\mathrm{cal}})}$ are the ordered
calibration scores. The calibrated conformal region is therefore
$\mathcal{C}(\mathbf{x};\widehat{\tau}_{\alpha})$. 

\begin{proposition}[Marginal coverage]
\label{prop:marginal-coverage} Let $F$ be a fixed predictor and $S$ a fixed conformity score, both independent of the calibration set $\mathcal{D}_{\mathrm{cal}}$. Assume that the calibration examples and the test example are exchangeable, and the scores are almost surely distinct. Then split CP constructs the prediction region $\mathcal{C}(\mathbf{X};\widehat{\tau}_{\alpha})$ such that:
\begin{align}
1-\alpha
\leq
\mathbb{P}\!\left(
\mathbf{Y}\in\mathcal{C}(\mathbf{X};\widehat{\tau}_{\alpha})
\right)
\leq
1-\alpha+\frac{1}{n_{\mathrm{cal}}+1}.
\label{eq:marginal-coverage-bound}
\end{align}
\end{proposition}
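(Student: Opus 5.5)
The plan is the standard rank argument for split conformal prediction. Write $n=n_{\mathrm{cal}}$ and let $(\mathbf{X}_{n+1},\mathbf{Y}_{n+1})$ denote the test pair. Define its score $s_{n+1}=S(\mathbf{X}_{n+1},\mathbf{Y}_{n+1}-F(\mathbf{X}_{n+1}))$.

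\textbf{Step 1: reduce coverage to a rank event.} By the definition of the region in \eqref{eq:cp-region}, we have $\mathbf{Y}_{n+1}\in\mathcal{C}(\mathbf{X}_{n+1};\widehat{\tau}_{\alpha})$ if and only if $s_{n+1}\leq s_{(k)}$, with $k=\lceil (n+1)(1-\alpha)\rceil$ as in \eqref{eq:cp-threshold}. I will assume $k\leq n$, i.e. $\alpha\geq 1/(n+1)$. Otherwise the convention $\widehat{\tau}_\alpha=+\infty$ makes the lower bound trivial, and the upper bound exceeds one.

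Let $R$ be the rank of $s_{n+1}$ among $s_1,\dots,s_{n+1}$. Since scores are a.s. distinct, $s_{n+1}\le s_{(k)}$ holds exactly when fewer than $k$ calibration scores lie below $s_{n+1}$. That is, the event is $\{R\leq k\}$ almost surely.

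\textbf{Step 2: the rank is uniform.} Because $F$ and $S$ are fixed and independent of $\mathcal{D}_{\mathrm{cal}}$, the map $(\mathbf{x},\mathbf{y})\mapsto S(\mathbf{x},\mathbf{y}-F(\mathbf{x}))$ is a fixed measurable function. Applied coordinatewise, it sends the exchangeable sequence $(\mathbf{X}_i,\mathbf{Y}_i)_{i=1}^{n+1}$ to an exchangeable sequence $s_1,\dots,s_{n+1}$. Together with a.s. distinctness, exchangeability gives $\mathbb{P}(R=j)=1/(n+1)$ for each $j$. Hence $\mathbb{P}(R\le k)=k/(n+1)$.

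\textbf{Step 3: bound $k/(n+1)$.} Using $(n+1)(1-\alpha)\le k<(n+1)(1-\alpha)+1$, we obtain
\[
1-\alpha\le \frac{k}{n+1}< 1-\alpha+\frac{1}{n+1},
\]
which yields \eqref{eq:marginal-coverage-bound}.

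The only delicate point is Step 2, namely justifying that the rank is exactly uniform. Exchangeability gives $\mathbb{P}(R=j)$ equal across $j$ by symmetry under permutations, and distinctness ensures the events $\{R=j\}$ partition the sample space. Without distinctness only the lower bound would survive, which is why the hypothesis is needed for the upper bound. I would also take care that independence of $S$ from the calibration data is exactly what preserves exchangeability of the scores.
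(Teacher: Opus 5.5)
Your proof is correct and follows the same rank argument as the paper: because the score map is fixed, exchangeability passes to the scores, the test score's rank is uniform, and coverage is exactly the event $\{R\le k\}$. You also supply details the paper leaves to its citation of Lei et al., namely the exact identity $\mathbb{P}(R\le k)=k/(n_{\mathrm{cal}}+1)$, which gives both bounds at once, and the edge case $k>n_{\mathrm{cal}}$.
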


\paragraph{Learning Conformity Scores.}
While conformal calibration guarantees distribution-free marginal coverage for any valid conformity score, the efficiency and geometric adaptivity of the calibrated prediction region depend fundamentally on the design of the score function. Instead of relying on a fixed score function $S$, the objective of this paper is to learn an input-dependent score function $S_{\bm{\theta}}: \mathcal{X} \times \mathbb{R}^p \to \mathbb{R}_+$ (with $\bm \theta$ representing the learnable parameters) which induces a prediction region $C_{\bm \theta}(\mathbf{X}; \tau)$ such that:
\begin{align}
(\bm{\theta}^\star,\tau^\star)
\in
\underset{\bm{\theta}\in\bm{\Theta},\;\tau\geq0}
{\operatorname{arg\,min}}
&\quad
\mathbb{E}
\left[
|\mathcal{C}_{\bm{\theta}}(\mathbf{X};\tau)|
\right]
\nonumber\\
\text{s.t.}
&\quad
\mathbb{P}\!\left(
\mathbf{Y}\in\mathcal{C}_{\bm{\theta}}(\mathbf{X};\tau)
\right)
\geq 1-\alpha,
\nonumber\\
&\quad
\mathcal{C}_{\bm{\theta}}(\mathbf{x};\tau)
\text{ is convex},
\qquad
\forall\,\mathbf{x}\in\mathcal{X},
\nonumber\\
&\quad
F(\mathbf{x})\in
\mathcal{C}_{\bm{\theta}}(\mathbf{x};\tau),
\qquad
\forall\,\mathbf{x}\in\mathcal{X}.
\label{eq:population-objective}
\end{align}
This formulation captures the desired objective of learning efficient prediction regions while enforcing valid coverage, convexity, and inclusion of the point prediction of a fixed predictor $F$. However, it cannot be optimized directly because the underlying data distribution is unknown. \method{} addresses this problem by enforcing the structural requirements through the score parameterization and learning its parameters $\bm \theta$ from data using an empirical surrogate objective. Once the score is learned, the final coverage guarantee is obtained through split CP calibration.

\section{Methodology}
\label{sec:methodology}
\subsection{Preliminaries}
To address the optimization problem in \eqref{eq:population-objective}, \method{} learns a conformity score function whose sublevel sets define efficient prediction regions. Rather than directly optimizing the geometry of these regions, the proposed framework imposes structural conditions on the score function to guarantee the desired geometric properties.

\paragraph{Admissible Score Functions.} This paper defines a class of functions that induce prediction regions that satisfy the structural constraints in \eqref{eq:population-objective}:
\begin{align}
\mathcal{S}_{\mathrm{adm}}
=
\Bigl\{
S_{\bm \theta}:\mathcal{X}\times\mathbb{R}^{p}\rightarrow[0,\infty)
:
S_{\bm \theta}(\mathbf{x},\cdot)\text{ is convex},
\;
S_{\bm \theta}(\mathbf{x},\mathbf{0})=0,
\;
\forall\,\mathbf{x}\in\mathcal{X}
\Bigr\}.
\label{eq:admissible-class}
\end{align}

The admissible class $\mathcal{S}_{\mathrm{adm}}$ imposes only structural conditions on the conformity score: convexity, nonnegativity, and $S_\theta(x,0)=0$. These conditions guarantee convex prediction regions containing $F(\mathbf{x})$ while leaving the boundary geometry unrestricted within the admissible convex class.

\begin{proposition}[Admissible scores induce convex prediction regions]
\label{prop:admissible-convexity}
Let $S_{\theta}\in\mathcal{S}_{\mathrm{adm}}$. Then, for any
$\mathbf{x}\in\mathcal{X}$ and $\tau\geq0$, the prediction region
\begin{align}
\mathcal{C}_{\theta}(\mathbf{x};S;\tau)
=
\left\{
\mathbf{y}\in\mathcal{Y}:
S_{\theta}\!\left(
\mathbf{x},
\mathbf{y}-F(\mathbf{x})
\right)
\leq\tau
\right\}
\end{align}
is convex and contains the point prediction $F(\mathbf{x})$. See Appendix \ref{app:set-convexity} for proofs.
\end{proposition}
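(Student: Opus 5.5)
The plan is to prove the two claims of Proposition~\ref{prop:admissible-convexity} separately for a fixed $\mathbf{x}\in\mathcal{X}$ and $\tau\geq 0$: first that $\mathcal{C}_{\theta}(\mathbf{x};S;\tau)$ contains $F(\mathbf{x})$, then that it is convex. Both follow directly from the defining properties of $\mathcal{S}_{\mathrm{adm}}$ in \eqref{eq:admissible-class}. Write $g(\mathbf{r}) = S_\theta(\mathbf{x},\mathbf{r})$; by admissibility, $g$ is convex on $\mathbb{R}^p$, nonnegative, and satisfies $g(\mathbf{0})=0$.

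\textbf{Containment of the point prediction.} Take $\mathbf{y}=F(\mathbf{x})$. Its residual is $\mathbf{0}$, so $g(\mathbf{0})=0\leq\tau$ because $\tau\geq0$. Hence $F(\mathbf{x})$ lies in the region. This step needs $F(\mathbf{x})\in\mathcal{Y}$; since $F$ maps into $\mathcal{Y}$ by the problem setup, this holds.

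\textbf{Convexity.} The region is the intersection of $\mathcal{Y}$ with the translate $F(\mathbf{x})+L_\tau$, where $L_\tau=\{\mathbf{r}:g(\mathbf{r})\leq\tau\}$. Sublevel sets of convex functions are convex: if $g(\mathbf{r}_1),g(\mathbf{r}_2)\leq\tau$ and $\lambda\in[0,1]$, then
\begin{align}
g(\lambda\mathbf{r}_1+(1-\lambda)\mathbf{r}_2)\leq\lambda g(\mathbf{r}_1)+(1-\lambda)g(\mathbf{r}_2)\leq\tau .
\end{align}
A translation is affine, so it preserves convexity; therefore $F(\mathbf{x})+L_\tau$ is convex. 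Concretely, for $\mathbf{y}_1,\mathbf{y}_2$ in the region, $\lambda\mathbf{y}_1+(1-\lambda)\mathbf{y}_2-F(\mathbf{x})=\lambda\mathbf{r}_1+(1-\lambda)\mathbf{r}_2$, and the inequality above applies.

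\textbf{Main obstacle.} The only real subtlety is the intersection with $\mathcal{Y}$. The intersection of two convex sets is convex, so the argument goes through provided $\mathcal{Y}$ is convex, for instance $\mathcal{Y}=\mathbb{R}^p$. I will state this assumption explicitly, or else interpret the region as a subset of $\mathbb{R}^p$. With that settled, the proof is a two-line verification.
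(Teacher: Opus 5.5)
Your proposal is correct and matches the paper's proof: convexity follows by applying the convexity of $S_\theta(\mathbf{x},\cdot)$ to $\lambda\mathbf{r}_1+(1-\lambda)\mathbf{r}_2$, and containment of $F(\mathbf{x})$ follows from $S_\theta(\mathbf{x},\mathbf{0})=0\leq\tau$. Your remark that $\mathcal{Y}$ must also be convex (or that the region should be taken in $\mathbb{R}^p$) is a valid refinement; the paper's proof passes over it without comment.
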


Proposition~\ref{prop:admissible-convexity} converts the geometric constraints of the region optimization problem into structural conditions on the score. Proposition~\ref{prop:convex-score-representation_main} establishes the converse: within the unrestricted admissible score class, any compact, full-dimensional convex region containing the point prediction in its interior can be represented as a score sublevel set. Proofs are provided in Appendix~\ref{app:set-convexity}.
\begin{proposition}[Convex-region representation]
\label{prop:convex-score-representation_main}
Consider a given input $\mathbf{x}\in\mathcal{X}$ and let
$\mathcal{C}(\mathbf{x})\subset\mathbb{R}^{p}$ be a compact,
full-dimensional convex region satisfying $F(\mathbf{x}) \in \operatorname{int}\mathcal{C}(\mathbf{x})$.

There exists a nonnegative convex score
$S_{\mathcal{C}}(\mathbf{x},\cdot)$ satisfying
$S_{\mathcal{C}}(\mathbf{x},\mathbf{0})=0$ such that
\begin{align}
\mathcal{C}(\mathbf{x})
=
\left\{
\mathbf{y}\in\mathbb{R}^{p}:
S_{\mathcal{C}}\!\left(
\mathbf{x},
\mathbf{y}-F(\mathbf{x})
\right)
\leq1
\right\}.
\end{align}
In particular, such a score is given by the Minkowski gauge of the translated
region $\mathcal{C}(\mathbf{x})-F(\mathbf{x})$.
\end{proposition}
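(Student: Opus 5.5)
The plan is to construct the score explicitly as the Minkowski gauge of the centered set and then verify each required property directly. Fix $\mathbf{x}$ and write $K=\mathcal{C}(\mathbf{x})-F(\mathbf{x})$. Translation preserves convexity and compactness, and it maps interior points to interior points. Hence $K$ is a compact convex set with $\mathbf{0}\in\operatorname{int}K$. Define
\begin{align}
S_{\mathcal{C}}(\mathbf{x},\mathbf{r})=\gamma_K(\mathbf{r})=\inf\{t>0:\ \mathbf{r}\in tK\}.
\end{align}
The steps below are (i) $\gamma_K$ is finite, nonnegative, and vanishes at $\mathbf{0}$; (ii) $\gamma_K$ is convex; (iii) $\{\gamma_K\le 1\}=K$. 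Translating back by $F(\mathbf{x})$ then gives the stated identity for $\mathcal{C}(\mathbf{x})$.

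For (i), since $\mathbf{0}\in\operatorname{int}K$ there is $\rho>0$ with $B(\mathbf{0},\rho)\subseteq K$. Then $\mathbf{r}\in(\|\mathbf{r}\|/\rho)K$ for every $\mathbf{r}\neq\mathbf{0}$, so $\gamma_K(\mathbf{r})\le\|\mathbf{r}\|/\rho<\infty$. Nonnegativity holds by definition. Because $\mathbf{0}\in tK$ for every $t>0$, we get $\gamma_K(\mathbf{0})=0$.

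For (ii), I will show positive homogeneity and subadditivity, which together give convexity. Homogeneity, $\gamma_K(\lambda\mathbf{r})=\lambda\gamma_K(\mathbf{r})$ for $\lambda>0$, follows directly from $\lambda\mathbf{r}\in\lambda tK\iff\mathbf{r}\in tK$. For subadditivity, take $s>\gamma_K(\mathbf{r})$ and $t>\gamma_K(\mathbf{q})$. Convexity of $K$ together with $\mathbf{0}\in K$ makes the sets $tK$ nested increasing in $t$, so $\mathbf{r}\in sK$ and $\mathbf{q}\in tK$. Convexity of $K$ then gives
\begin{align}
\mathbf{r}+\mathbf{q}=(s+t)\Bigl(\tfrac{s}{s+t}\tfrac{\mathbf{r}}{s}+\tfrac{t}{s+t}\tfrac{\mathbf{q}}{t}\Bigr)\in(s+t)K.
\end{align}
Letting $s$ and $t$ decrease to the infima yields $\gamma_K(\mathbf{r}+\mathbf{q})\le\gamma_K(\mathbf{r})+\gamma_K(\mathbf{q})$. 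Combining this with homogeneity gives $\gamma_K(\lambda\mathbf{r}+(1-\lambda)\mathbf{q})\le\lambda\gamma_K(\mathbf{r})+(1-\lambda)\gamma_K(\mathbf{q})$.

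Step (iii) is the main point where compactness is used. The inclusion $K\subseteq\{\gamma_K\le1\}$ is immediate, since $\mathbf{r}\in1\cdot K$. For the converse, suppose $\gamma_K(\mathbf{r})\le1$. By the nesting in (ii), $\mathbf{r}\in tK$ for every $t>1$, so $\mathbf{r}/t\in K$. Letting $t\downarrow1$ and using that $K$ is closed (it is compact) gives $\mathbf{r}\in K$. Without closedness, boundary points could be lost, so this is exactly where the hypothesis matters. Boundedness is not needed for the equality itself; it only guarantees $\gamma_K(\mathbf{r})>0$ for $\mathbf{r}\neq\mathbf{0}$. The conclusion is $\mathcal{C}(\mathbf{x})=\{\mathbf{y}:\gamma_K(\mathbf{y}-F(\mathbf{x}))\le1\}$, which is the stated representation.
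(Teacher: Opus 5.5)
Your proof is correct and takes essentially the same route as the paper: you use the Minkowski gauge of $K=\mathcal{C}(\mathbf{x})-F(\mathbf{x})$, get finiteness from a ball contained in $K$, derive convexity from positive homogeneity plus subadditivity, and obtain the unit-sublevel identity from closedness of $K$. Your single limit $t\downarrow 1$ in step (iii) handles the two cases $\gamma_K(\mathbf{r})<1$ and $\gamma_K(\mathbf{r})=1$ at once, which the paper treats separately.
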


Together, Propositions~\ref{prop:admissible-convexity} and~\ref{prop:convex-score-representation_main} provide the geometric foundation of \method{}: convexity is enforced through the score, while the particular convex geometry can be learned from data.

\subsection{\method{}}

Figure~\ref{fig:figure1} provides an overview of \method{} for learning convex conformity scores that induce efficient prediction regions. Given a pretrained predictor $F$, an encoder $\mathbf{h_{\bm \theta}(.)}$ maps the input $\mathbf{x}$ and its point prediction $\hat{\mathbf{y}}=F(\mathbf{x})$ to a contextual representation, which is then combined with the residual $\mathbf{r}=\mathbf{y}-F(\mathbf{x})$ by a structured scoring network to produce $S_{\bm{\theta}}(\mathbf{x},\mathbf{r})$. The scoring network is designed to satisfy the \textit{admissibility} conditions \eqref{eq:admissible-class} required for convex prediction regions, while allowing their geometry to be learned from data. This architecture supports different convex score families, including the piecewise-linear Spline and the more flexible ICNN parameterizations introduced below. The parameters are learned on a disjoint score-learning set $\mathcal{D}_{\mathrm{score}}$ using an efficiency-oriented surrogate objective. Once learned, the network is frozen, and split CP is applied on $\mathcal{D}_{\mathrm{cal}}$ to compute the threshold $\widehat{\tau}_{\alpha}$, which is then used to construct valid prediction regions for new inputs. The following details the loss function for training and the architecture of \method{}.

\paragraph{Score Learning.} To optimize prediction-region efficiency, \method{} learns the score parameters on the held-out dataset $\mathcal{D}_{\mathrm{score}}$ using an empirical surrogate of the optimization in \eqref{eq:population-objective}. The objective balances three goals: reducing prediction-region size, encouraging the desired coverage rate, and penalizing observations that fall outside the training sublevel region:
\begin{theorembox}{\textbf{\method{} Score-Learning Objective}}
\vspace{-0.4cm}
\begin{align}
\mathcal{L}(\bm \theta;\tau_{\mathrm{tr}})
&=
\mathcal{L}_{\mathrm{size}}(\bm \theta;\tau_{\mathrm{tr}})
+
\lambda_{\mathrm{cov}}
\mathcal{L}_{\mathrm{cov}}(\bm \theta;\tau_{\mathrm{tr}})
+\lambda_{\mathrm{miss}}
\mathcal{L}_{\mathrm{miss}}(\bm \theta;\tau_{\mathrm{tr}}).
\label{eq:score-objective}
\end{align}
\end{theorembox}
The parameters of \method{} are learned using mini-batch optimization, where $\mathcal{B}=\{(\mathbf{x}_i,\mathbf{r}_i)\}_{i=1}^{|\mathcal{B}|}, \mathcal{B}\subseteq\mathcal{D}_{\mathrm{score}}$.

The primary term directly promotes efficient prediction regions by minimizing their dimension-normalized average volume:
\begin{align}
\mathcal{L}_{\mathrm{size}}(\theta;\tau_{\mathrm{tr}})
&=
\left[
\frac{1}{|\mathcal{B}|}
\sum_{i=1}^{|\mathcal{B}|}
\left|
\mathcal{C}_{\theta}
(\mathbf{x}_i;\tau_{\mathrm{tr}})
\right|
\right]^{1/p}.
\label{eq:size-loss}
\end{align}

The $p$-th root places the $p$-dimensional volume on a linear scale. Importantly, $\mathcal{L}_{\mathrm{size}}$ acts directly on the regions induced by the learned score rather than fitting a predefined geometric template.

The remaining terms serve as soft constraints to maintain target coverage and prevent region collapse during training. $\mathcal{L}_{\mathrm{cov}}$ penalizes deviations from the target inclusion rate $1-\alpha$:
\begin{align}
\mathcal{L}_{\mathrm{cov}}(\bm{\theta};\tau_{\mathrm{tr}})
&=
\left[
\widehat{\operatorname{Cov}}_{\bm{\theta}}
-
(1-\alpha)
\right]^2,
\quad\text{where}\quad
\widehat{\operatorname{Cov}}_{\bm{\theta}}
=
\frac{1}{|\mathcal{B}|}
\sum_{i=1}^{|\mathcal{B}|}
\sigma\left(
\frac{\tau_{\mathrm{tr}}-S_{\bm \theta}(\mathbf{x}_i)}{T}
\right),
\label{eq:coverage-loss}
\end{align}
where $\sigma(\cdot)$ is the standard sigmoid function and $T>0$ is a hyperparameter controlling smoothness.

Additionally, $\mathcal{L}_{\mathrm{miss}}$ penalizes individual miscovered samples proportionally to their threshold violation magnitude:
\begin{align}
\mathcal{L}_{\mathrm{miss}}(\bm \theta;\tau_{\mathrm{tr}})
&=
\frac{1}{|\mathcal{B}|}
\sum_{i=1}^{|\mathcal{B}|}
\operatorname{ReLU}\left(
\frac{S_{\bm \theta}(\mathbf{x}_i)}{\tau_{\mathrm{tr}}}-1
\right).
\label{eq:miss-loss}
\end{align}
Thus, $\mathcal{L}_{\mathrm{cov}}$ guides aggregate training coverage, while $\mathcal{L}_{\mathrm{miss}}$ provides direct instance-level gradient signals. For simplicity of the training and optimization stability, the reference threshold is fixed to $\tau_{\mathrm{tr}}=1$. Once training completes, the scoring network parameters are frozen, and split CP is applied on an independent calibration set $\mathcal{D}_{\mathrm{cal}}$ to derive the calibration threshold $\widehat{\tau}_{\alpha}$. This separation of score learning and calibration provides the finite-sample coverage guarantee under exchangeability.

The \method{} framework is agnostic to the specific architectural implementation of the conformity score, requiring only that $S_{\bm \theta}(\mathbf{x}, .) \in \mathcal{S}_{\mathrm{adm}}$. Any learnable score family satisfying these structural admissibility conditions can seamlessly serve as the scoring module. The following instantiate two complementary score architectures: 1) a structured Spline parameterization, which yields input-adaptive convex sets with piecewise-linear boundaries, and 2) an ICNN-based approach, which provides high-capacity geometric flexibility.

\paragraph{Feature Encoder.} \method{} first encodes the input features and the predicted values using an encoder. This provides a contextual representation to the score, enabling \method{}'s score to adapt flexibly across the covariate space. In this work, the encoder is implemented as an MLP:
\begin{align}
\mathbf{h}_{\bm \theta}(\mathbf{x})
=
\operatorname{MLP}\!\left(
\left[\mathbf{x},F(\mathbf{x})\right]
\right).
\label{eq:encoder}
\end{align}

Candidate residuals $\mathbf{r}$ bypass this encoder and enter the scoring head directly (Figure~\ref{fig:figure1}). This allows the score function to adapts its shape, orientation, and scale via the contextual vector $\mathbf{h}_{\bm \theta}(\mathbf{x})$ while maintaining strict convexity with respect to $\mathbf{r}$. Subscript $\bm \theta$ is omitted for simplicity in what follows.

\paragraph{\method{}$_\mathrm{Spline}$.}

This instantiation parameterizes a piecewise-linear conformity score. Depending on the choice of a norm-based aggregation operator, \method{}$_\mathrm{Spline}$ induces different classes of prediction regions including rotated hyperrectangulars or polytopes. Figure \ref{fig:spline-rotation} shows an overview of \method{}$_{\mathrm{Spline}}$ from rotation of the coordinates, to learning the splines across the coordinates, and aggregation via $\ell_1$ and $\ell_\infty$.

Given context $\mathbf{h}(\mathbf{x})$ and $\mathbf{r} = \mathbf{y} - F(\mathbf{x})$, the residual vector is first mapped into a learned rotated coordinate space. The rotation is learned jointly with the conformity score to align the residual coordinates with input-dependent directions of variation, allowing the resulting prediction region to adapt its orientation to the local uncertainty structure:
\begin{align}
\mathbf{z} = \mathbf{Q}^{\top} \mathbf{r} \in \mathbb{R}^p,
\quad\text{where}\quad
\mathbf{Q} = \exp\!\left(\mathbf{K}(\mathbf{h}(\mathbf{x}))\right).
\label{eq:spline-rotation}
\end{align}

Here, $\mathbf{K}(\mathbf{h}(\mathbf{x})) \in \mathbb{R}^{p \times p}$ is a skew-symmetric matrix ($\mathbf{K}^\top = -\mathbf{K}$) generated via a linear projection of $\mathbf{h}(\mathbf{x})$. Because the matrix exponential of a skew-symmetric matrix is strictly orthogonal, $\mathbf{Q}$ defines a smooth, input-dependent rotation of the residual space.

\begin{figure}
    \centering
    \includegraphics[width=1\linewidth]{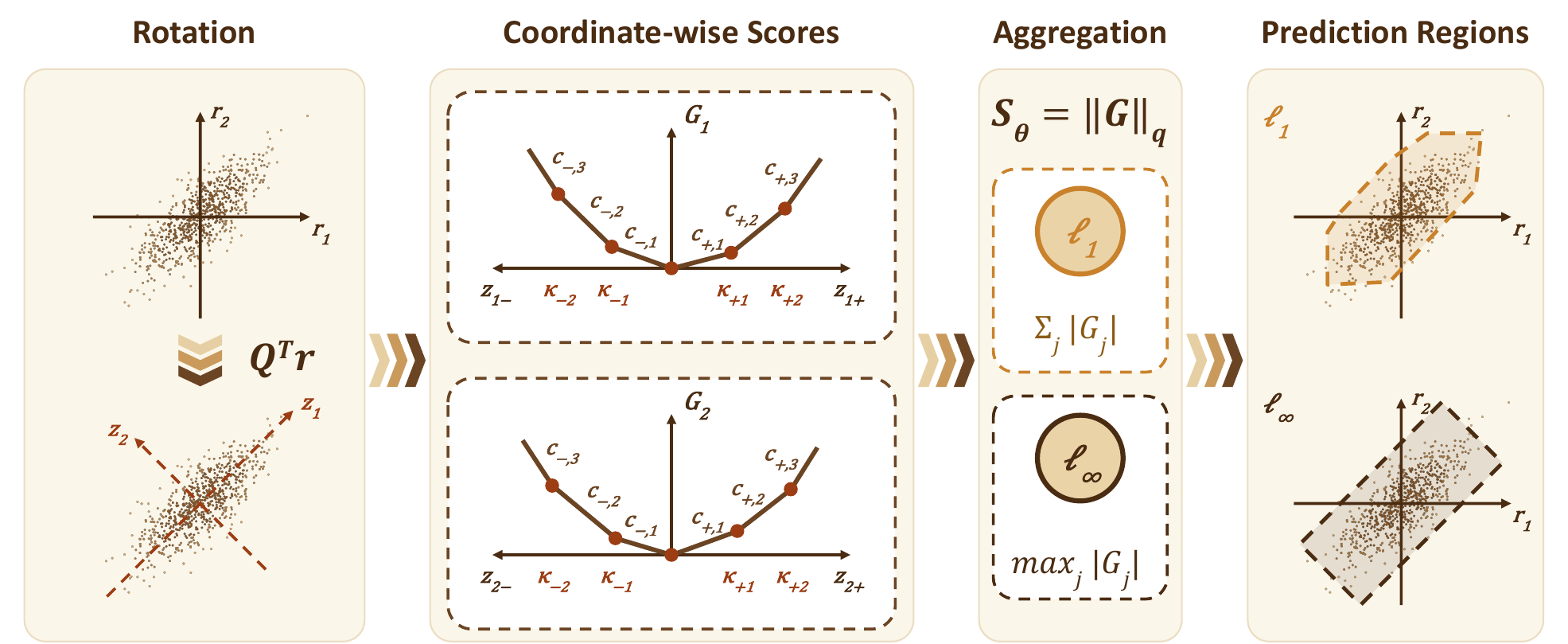}
    \caption{
    \textbf{Illustration of the learned rotation, coordinate-wise scores, and induced geometry.}
    The input-dependent transformation $\mathbf{z}=\mathbf{Q}^{\top}\mathbf{r}$
    rotates the residual coordinates to align them with learned directions of
    variation (left). Each rotated coordinate is scored by convex piecewise-linear
    branches $G_{+}$ and $G_{-}$ for positive and negative deviations, with knots
    $\kappa_{\pm,k}$ (center). The coordinate-wise scores are then aggregated with
    the $\ell_1$ or $\ell_\infty$ norm: the $\ell_\infty$ aggregation induces a
    rotated hyperrectangle and the $\ell_1$ aggregation an asymmetric polytope in
    the original residual space (right).
    }
    \label{fig:spline-rotation}
\end{figure}

For each rotated coordinate $j \in \{1, \dots , p\}$, independent piecewise-linear branches $G_{+}$ and $G_{-}$ model positive and negative deviations. Denote $K$ as the number of spline segments. Each branch anchors an initial knot at $\kappa_{\pm,0}=0$, with subsequent knots $\kappa_{\pm,k}$ placed at empirical quantiles corresponding to levels $\{1/K, \ldots, 1\}$ of the coordinatewise residual magnitudes evaluated on $\mathcal{D}_{\mathrm{score}}$. Let $a_{\pm,k}(\mathbf{h}(\mathbf{x}))$ denote the unconstrained neural network's output for the coordinate, $\pm$ branches, and segment $k\in \{1,\ldots,K\}$.

To guarantee convexity, each unconstrained network output is first transformed
into a positive slope increment:
\begin{align}
    \tilde{c}_{\pm,k} =  \operatorname{softplus}\!\left(
a_{\pm,k}\!\left(\mathbf{h}(\mathbf{x})\right)
\right)
\end{align}
The slope of segment $k$ is then constructed by cumulatively summing these
positive increments:
\begin{align}
c_{\pm,k}\!\left(\mathbf{h}(\mathbf{x})\right)
=
\varepsilon_{\mathrm{slope}}
+
\sum_{k=1}^{K}
\tilde{c}_{\pm,k}
,
\qquad k=1,\ldots,K,
\label{eq:spline-slopes}
\end{align}

Here, $\varepsilon_{\mathrm{slope}}>0$ ensures that all segment slopes remain strictly positive, preventing flat segments. Since each $\tilde{c}_{\pm,k}>0$, the cumulative construction guarantees $c_{\pm,k+1}\geq c_{\pm,k}$, so the segment slopes are nondecreasing, which ensures convexity of the resulting piecewise-linear spline branch. The piecewise-constant function $c_{\pm}(t;\mathbf{h}(\mathbf{x}))= c_{\pm,k}(\mathbf{h}(\mathbf{x}))$ for $t\in[\kappa_{\pm,k-1},\kappa_{\pm,k})$ therefore represents these learned segment slopes, the positive increments $\tilde{c}_{\pm,k}$ are first accumulated into nondecreasing slopes $c_{\pm,k}$, which are then integrated in~\eqref{eq:spline-branch} to construct the convex spline branch $G_{\pm}$.

\begin{align}
G_{\pm}(s;\mathbf{h}(\mathbf{x}))
=
\int_{0}^{s}
c_{\pm}(t;\mathbf{h}(\mathbf{x}))\,dt,
\qquad s\geq 0.
\label{eq:spline-branch}
\end{align}

By construction, $G_{\pm}(0) = 0$, and the strictly increasing sequence $0 < c_{\pm,1} \le \cdots \le c_{\pm,K}$ guarantees that each branch is convex and strictly increasing. Learning positive and negative branches independently allows $\method{}_{\mathrm{Spline}}$ to capture asymmetric shapes. The same construction is applied independently to each rotated coordinate $j\in \mathcal{J}$. The conformity score is defined as a norm-based aggregation across the coordinates:

\begin{theorembox}{\textbf{Spline Conformity Score}}
\vspace{-0.4cm}
\begin{align}
S_{\bm{\theta}}^{\mathrm{Spline}}(\mathbf{x},\mathbf{r})
&=
\left\|
\left(
G_{j, +}\!\left(
[z_j]_+; \mathbf{h}(\mathbf{x})
\right)
+
G_{j, -}\!\left(
[-z_j]_+; \mathbf{h}(\mathbf{x})
\right)
\right)_{j=1}^{p}
\right\|_{{q}},
\label{eq:spline-score}
\end{align}
\end{theorembox}

where $[a]_+=\max(a,0)$. This paper considers two aggregation
norms, $q\in\{1,\infty\}$, which induce different convex prediction-region geometries. For $\ell_\infty$, the overall score is determined by the
largest coordinatewise spline value, yielding asymmetric axis-aligned
hyperrectangles in the rotated $\mathbf{z}$-space and input-adaptive
rotated hyperrectangles in the original residual space.
For $\ell_1$, the coordinatewise spline scores are summed, inducing
more flexible asymmetric convex polytopes with piecewise-linear
boundaries. Since the coordinatewise spline functions are convex
and nonnegative, both aggregation norms preserve convexity and
satisfy $S_{\bm{\theta}}^{\mathrm{Spline}}\in\mathcal{S}_{\mathrm{adm}}$.

\paragraph{\method{}$_\mathrm{ICNN}$.}
To produce more complex and general convex prediction regions, $\method{}_{\mathrm{ICNN}}$ parameterizes the score using an Input Convex Neural Network~\citep{amos2017input}.

Let $\boldsymbol{\eta}_{\ell} \in \mathbb{R}^{H}$ denote the hidden state at layer $\ell$. The network architecture is defined by:
\begin{align}
\boldsymbol{\eta}_{1}
&=
\phi\!\left(
\mathbf{A}_{1}(\mathbf{h}(\mathbf{x}))\mathbf{r}
+ \mathbf{V}_{1}\mathbf{h}(\mathbf{x}) + \mathbf{b}_{1}
\right),
\label{eq:icnn-first-layer} \\
\boldsymbol{\eta}_{\ell}
&=
\phi\!\left(
\mathbf{W}_{\ell}^{+}\boldsymbol{\eta}_{\ell-1}
+ \mathbf{A}_{\ell}(\mathbf{h}(\mathbf{x}))\mathbf{r}
+ \mathbf{V}_{\ell}\mathbf{h}(\mathbf{x}) + \mathbf{b}_{\ell}
\right),
\qquad \ell = 2,\ldots,L,
\label{eq:icnn-layers}
\end{align}
where $\mathbf{A}_{\ell}(\mathbf{h}(\mathbf{x}))\mathbf{r}$ term provides a learnable input-dependent linear transformation of the residual, $\mathbf{V}_{\ell}$ and $\mathbf{b}_{\ell}$ are learnable parameters, and $\phi(.)$ is a convex, non-decreasing activation function (softplus in this paper). Non-negativity constraints on the layer-to-layer weights ($\mathbf{W}_{\ell}^{+} \ge \mathbf{0}$) ensure that the scalar output:
\begin{align}
G_{\bm{\theta}}(\mathbf{x},\mathbf{r}) = (\mathbf{w}^{+})^{\top}\boldsymbol{\eta}_{L} \quad (\text{with } \mathbf{w}^{+} \ge \mathbf{0}),
\label{eq:icnn-body}
\end{align}
remains convex with respect to $\mathbf{r}$. To satisfy the admissibility conditions and non-negativity without violating convexity, the final score is constructed as:
\begin{theorembox}{\textbf{ICNN Conformity Score}}
\vspace{-0.4cm}
\begin{align}
S_{\bm{\theta}}^{\mathrm{ICNN}}(\mathbf{x},\mathbf{r})
&=
G_{\bm{\theta}}(\mathbf{x},\mathbf{r})
- G_{\bm{\theta}}(\mathbf{x},\mathbf{0})
- \nabla_{\mathbf{r}} G_{\bm{\theta}}(\mathbf{x},\mathbf{0})^{\top}\mathbf{r}
+ \varepsilon_{\mathrm{ICNN}} \|\mathbf{r}\|_{2}^{2}.
\label{eq:icnn-score}
\end{align}
\end{theorembox}
The first three terms represent the Bregman divergence $D_{G}(\mathbf{r}, \mathbf{0})$ induced by $G_{\bm{\theta}}(\mathbf{x},\cdot)$, guaranteeing non-negativity and $S_{\bm{\theta}}^{\mathrm{ICNN}}(\mathbf{x},\mathbf{0}) = 0$ by strict convexity. The small quadratic term ($\varepsilon_{\mathrm{ICNN}} > 0$) enforces strong convexity, preventing unbounded or flat sublevel sets along unconstrained directions and ensuring $S_{\bm{\theta}}^{\mathrm{ICNN}} \in \mathcal{S}_{\mathrm{adm}}$. This formulation enables $\method{}_{\mathrm{ICNN}}$ to learn highly flexible, arbitrary convex set geometries directly from multivariate uncertainty structures.

\section{Experimental Setup}
\label{sec:experiment}

\paragraph{Datasets and Experimental Protocol.}
\method{} is evaluated on synthetic datasets with different residual geometries, and on 30 real-world multivariate regression tasks spanning diverse application domains, with response dimensions ranging from $p = 2$ to $p = 16$. All experiments are repeated across five random seeds, and results are reported as mean $\pm$ standard deviation. For each seed, the dataset is partitioned into four disjoint sets: $\mathcal{D}_{\mathrm{pred}}$ (25\%), $\mathcal{D}_{\mathrm{score}}$ (30\%), $\mathcal{D}_{\mathrm{cal}}$ (25\%), and $\mathcal{D}_{\mathrm{test}}$ (20\%). A common MLP base predictor is trained on $\mathcal{D}_{\mathrm{pred}}$ then frozen, so all CP methods use the same predictions and residuals. $\mathcal{D}_{\mathrm{score}}$ is used for score learning and hyperparameter tuning, $\mathcal{D}_{\mathrm{cal}}$ for split conformal threshold calibration, and $\mathcal{D}_{\mathrm{test}}$ for held-out evaluation. Dataset specifications and implementation details appear in Appendix~\ref{app:experiments}.

\paragraph{Baselines.} \method{} is benchmarked against five representative multi-dimensional CP approaches: (1) \textbf{Absolute-Residual CP}~\citep{lei2018distribution}, which uses coordinatewise max-residual quantiles to form axis-aligned hyperrectangles; (2)~\textbf{Empirical Copula CP}~\citep{messoudi2021copula}, which models dependence among coordinatewise residuals through an empirical copula; (3)~\textbf{Global Ellipsoid}~\citep{johnstone2021conformal}, constructing static ellipsoidal regions based on a global covariance Mahalanobis score; (4)~\textbf{Local Ellipsoid}~\citep{messoudi2022ellipsoidal}, which uses input-dependent covariance estimates to produce localized ellipsoidal regions; and (5)~\textbf{Minimum Volume Conformal Sets (MVCS)}~\citep{braun2026minimum}, which learns input-adaptive norm-based regions. The single-norm variant centered at the base prediction with norm exponent $ q\ge 1$ to guarantee convexity is used.

All methods target $90\%$ nominal marginal coverage ($1-\alpha = 0.90$) and are evaluated on empirical coverage rate and dimension-normalized volume $|\mathcal{C}(\mathbf{x})|^{1/p}$. Identical base predictors and data splits are used across all methods. For the synthetic experiments, the evaluation also includes Intersection over Union (IoU) and the volume ratio relative to the population-optimal region ($V/V^\star$). Formal definitions of all evaluation metrics are provided in Appendix~\ref{app:evaluation_metrics}.

\section{Results}
\label{sec:results}

\begin{figure}[!t]
    \centering
    \includegraphics[width=0.8\linewidth]{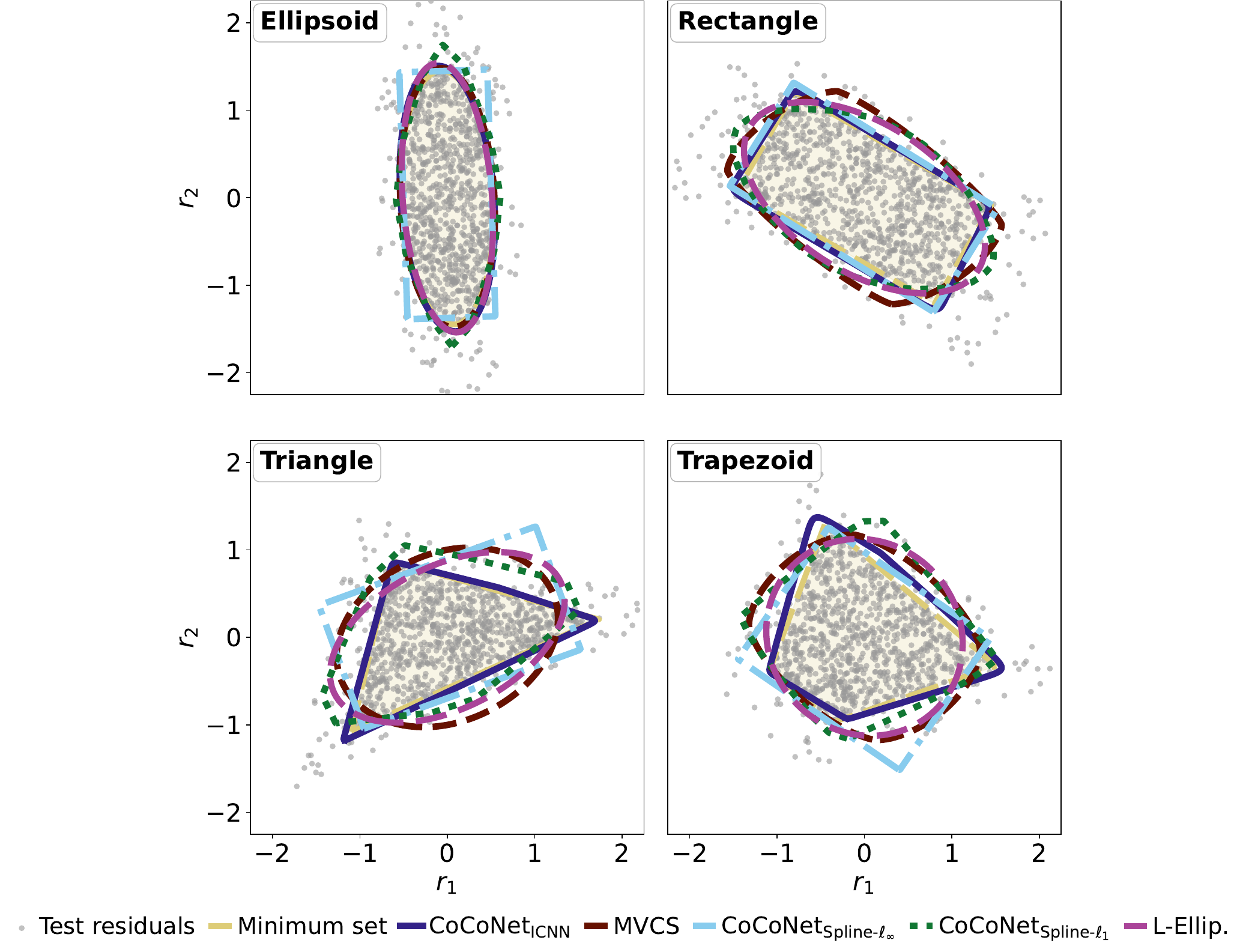}
    \caption{
    \textbf{Synthetic uncertainty geometries.}
    Calibrated $90\%$ prediction regions are shown for four two-dimensional residual
    distributions with known population-optimal regions: an ellipsoid, rectangle,
    triangle, and trapezoid. The comparison illustrates the
    progression from geometries well matched by structured families to more challenging
    asymmetric polygonal shapes, where the greater geometric flexibility of
    $\method{}_{\mathrm{ICNN}}$ becomes increasingly important.
    }
    \label{fig:geo-sets}
\end{figure}

\begin{table}[!b]
\centering
\begin{minipage}{\textwidth}
\caption{%
\textbf{Recovery of population-optimal $90\%$ convex regions.} IoU and $V/V^\star$ compare the
learned region with the population minimum-area region. Subscripts are standard deviations over five
seeds; best in \textcolor{bestblue}{\textbf{blue}}, second-best \underline{underlined}. Gray rows report the
empirical test coverage in percent.%
}
\label{tab:synthetic-main}
\centering
\renewcommand{\arraystretch}{1.5}
\setlength{\tabcolsep}{6pt}
\resizebox{\linewidth}{!}{%
\begin{tabular}{l l c c c c c c c c}
\toprule
& & \multicolumn{5}{c}{\large Baselines}
& \multicolumn{3}{c}{\large \method{}} \\
\cmidrule[0.08em](lr){3-7}
\cmidrule[0.08em](lr){8-10}
{\large Shape}
& {\large Metric}
& {\large Abs.\ Res.}
& {\large Emp.\ Cop.}
& {\large G-Ellip.}
& {\large L-Ellip.}
& {\large MVCS}
& {\large Spline-$\ell_\infty$}
& {\large Spline-$\ell_1$}
& {\large ICNN} \\
\midrule
\multirow{3}{*}{\large Ellipsoid}
& {\large IoU\,$\uparrow$}
& {\Large $0.35_{\pm 0.00}$}
& {\Large $0.79_{\pm 0.01}$}
& {\Large $\color{bestblue}\mathbf{0.97_{\pm 0.01}}$}
& {\Large $0.88_{\pm 0.02}$}
& {\Large $\color{bestblue}\mathbf{0.97_{\pm 0.02}}$}
& {\Large $0.79_{\pm 0.01}$}
& {\Large $0.83_{\pm 0.03}$}
& {\Large $0.94_{\pm 0.01}$} \\
& {\large $V/V^\star$\,$\downarrow$}
& {\Large $2.67_{\pm 0.03}$}
& {\Large $1.23_{\pm 0.01}$}
& {\Large $\color{bestblue}\mathbf{1.03_{\pm 0.01}}$}
& {\Large $1.12_{\pm 0.02}$}
& {\Large $\color{bestblue}\mathbf{1.03_{\pm 0.02}}$}
& {\Large $1.22_{\pm 0.01}$}
& {\Large $1.18_{\pm 0.04}$}
& {\Large $1.06_{\pm 0.02}$} \\
& {\color{covgray}\large Cov.\,\%}
& {\color{covgray}\Large $89.5_{\pm 0.9}$}
& {\color{covgray}\Large $90.6_{\pm 1.1}$}
& {\color{covgray}\Large $90.7_{\pm 1.6}$}
& {\color{covgray}\Large $90.6_{\pm 1.5}$}
& {\color{covgray}\Large $90.6_{\pm 1.7}$}
& {\color{covgray}\Large $90.6_{\pm 1.4}$}
& {\color{covgray}\Large $90.7_{\pm 1.7}$}
& {\color{covgray}\Large $90.5_{\pm 1.5}$} \\
\midrule
\multirow{3}{*}{\large Rectangle}
& {\large IoU\,$\uparrow$}
& {\Large $0.53_{\pm 0.01}$}
& {\Large $0.57_{\pm 0.01}$}
& {\Large $0.75_{\pm 0.01}$}
& {\Large $0.73_{\pm 0.01}$}
& {\Large $0.76_{\pm 0.01}$}
& {\Large $\color{bestblue}\mathbf{0.90_{\pm 0.02}}$}
& {\Large $0.70_{\pm 0.01}$}
& {\Large $\underline{0.86_{\pm 0.01}}$} \\
& {\large $V/V^\star$\,$\downarrow$}
& {\Large $1.80_{\pm 0.02}$}
& {\Large $1.68_{\pm 0.03}$}
& {\Large $1.29_{\pm 0.02}$}
& {\Large $1.33_{\pm 0.02}$}
& {\Large $1.29_{\pm 0.02}$}
& {\Large $\color{bestblue}\mathbf{1.10_{\pm 0.02}}$}
& {\Large $1.37_{\pm 0.02}$}
& {\Large $\underline{1.15_{\pm 0.01}}$} \\
& {\color{covgray}\large Cov.\,\%}
& {\color{covgray}\Large $90.7_{\pm 1.3}$}
& {\color{covgray}\Large $90.7_{\pm 1.4}$}
& {\color{covgray}\Large $90.5_{\pm 1.5}$}
& {\color{covgray}\Large $90.5_{\pm 1.0}$}
& {\color{covgray}\Large $90.6_{\pm 1.4}$}
& {\color{covgray}\Large $90.6_{\pm 1.2}$}
& {\color{covgray}\Large $90.5_{\pm 1.5}$}
& {\color{covgray}\Large $90.7_{\pm 1.0}$} \\
\midrule
\multirow{3}{*}{\large Triangle}
& {\large IoU\,$\uparrow$}
& {\Large $0.45_{\pm 0.01}$}
& {\Large $0.51_{\pm 0.01}$}
& {\Large $0.57_{\pm 0.01}$}
& {\Large $0.56_{\pm 0.01}$}
& {\Large $0.57_{\pm 0.00}$}
& {\Large $0.58_{\pm 0.01}$}
& {\Large $\underline{0.64_{\pm 0.03}}$}
& {\Large $\color{bestblue}\mathbf{0.85_{\pm 0.02}}$} \\
& {\large $V/V^\star$\,$\downarrow$}
& {\Large $2.05_{\pm 0.05}$}
& {\Large $1.82_{\pm 0.03}$}
& {\Large $1.62_{\pm 0.02}$}
& {\Large $1.65_{\pm 0.03}$}
& {\Large $1.64_{\pm 0.01}$}
& {\Large $1.61_{\pm 0.03}$}
& {\Large $\underline{1.47_{\pm 0.06}}$}
& {\Large $\color{bestblue}\mathbf{1.15_{\pm 0.02}}$} \\
& {\color{covgray}\large Cov.\,\%}
& {\color{covgray}\Large $90.5_{\pm 1.1}$}
& {\color{covgray}\Large $90.1_{\pm 1.3}$}
& {\color{covgray}\Large $90.4_{\pm 1.7}$}
& {\color{covgray}\Large $90.1_{\pm 1.8}$}
& {\color{covgray}\Large $90.3_{\pm 1.7}$}
& {\color{covgray}\Large $90.3_{\pm 1.6}$}
& {\color{covgray}\Large $90.5_{\pm 1.8}$}
& {\color{covgray}\Large $90.7_{\pm 1.7}$} \\
\midrule
\multirow{3}{*}{\large Trapezoid}
& {\large IoU\,$\uparrow$}
& {\Large $0.56_{\pm 0.01}$}
& {\Large $0.56_{\pm 0.01}$}
& {\Large $0.67_{\pm 0.01}$}
& {\Large $0.65_{\pm 0.01}$}
& {\Large $0.68_{\pm 0.01}$}
& {\Large $0.71_{\pm 0.01}$}
& {\Large $\underline{0.73_{\pm 0.03}}$}
& {\Large $\color{bestblue}\mathbf{0.83_{\pm 0.02}}$} \\
& {\large $V/V^\star$\,$\downarrow$}
& {\Large $1.66_{\pm 0.02}$}
& {\Large $1.66_{\pm 0.02}$}
& {\Large $1.42_{\pm 0.02}$}
& {\Large $1.45_{\pm 0.01}$}
& {\Large $1.40_{\pm 0.01}$}
& {\Large $1.35_{\pm 0.02}$}
& {\Large $\underline{1.32_{\pm 0.06}}$}
& {\Large $\color{bestblue}\mathbf{1.19_{\pm 0.02}}$} \\
& {\color{covgray}\large Cov.\,\%}
& {\color{covgray}\Large $90.6_{\pm 1.6}$}
& {\color{covgray}\Large $90.3_{\pm 1.6}$}
& {\color{covgray}\Large $90.5_{\pm 1.7}$}
& {\color{covgray}\Large $90.5_{\pm 1.7}$}
& {\color{covgray}\Large $90.3_{\pm 1.9}$}
& {\color{covgray}\Large $90.1_{\pm 1.6}$}
& {\color{covgray}\Large $90.6_{\pm 1.6}$}
& {\color{covgray}\Large $89.9_{\pm 1.5}$} \\
\bottomrule
\end{tabular}%
}
\end{minipage}
\end{table}
\paragraph{Synthetic Geometry Recovery.}
To evaluate geometric adaptivity across diverse structures, four two-dimensional synthetic residual distributions with known population-optimal $90\%$ minimum-area regions are considered: an ellipsoid, a rectangle, a triangle, and a trapezoid. These target geometries span standard quadric and polyhedral families alongside asymmetric, non-standard polygonal shapes demanding unconstrained boundary flexibility. Table~\ref{tab:synthetic-main} reports empirical coverage, IoU, and relative volume, while Figure~\ref{fig:geo-sets} visualizes the recovered prediction regions. Synthetic data construction and implementation details are provided in Appendix~\ref{app:experiments}.

The empirical findings demonstrate that baseline approaches perform effectively when the ground-truth uncertainty matches their predefined geometric templates but their efficiency degrades sharply when target geometries depart from those assumptions. On the ellipsoidal target, MVCS and Ellipsoidal CP achieve high IoUs of $0.97$ and $0.96$, respectively; however, their IoU performance drops by $0.3\text{--}0.4$ on the triangle and trapezoid targets. On the rectangular target,$\method{}_{\mathrm{Spline}\text{-}\ell_\infty}$ achieves the highest IoU of $0.90$, matching its rotated hyperrectangular parameterization, but loses efficiency on non-rectangular geometries. Replacing its $\ell_\infty$ aggregation with the $\ell_1$ norm yields asymmetric polygonal regions: $\method{}_{\mathrm{Spline}\text{-}\ell_1}$ ranks second on the triangle and trapezoid targets, with IoUs of $0.64$ and $0.73$, but it falls behind MVCS and Ellipsoidal CP on the ellipsoidal and rectangular targets. In contrast, $\method{}_{\mathrm{ICNN}}$ dynamically adjusts to different underlying geometries, maintaining consistently high IoU scores across all evaluated scenarios.

Overall, the three \method{} variants achieve first- or second-best performance across three of the four target geometries while remaining highly competitive on the ellipsoidal setting. This highlights \method{}'s adaptivity in learning flexible conformity scores that consistently induce efficient prediction regions across diverse geometries.

\begin{table}[!t]
\centering
\makebox[\textwidth][c]{\begin{minipage}{1.12\textwidth}   %
\caption{%
\textbf{Real-world multivariate regression results at $90\%$ target coverage.} Each method reports,
in gray, the empirical test coverage in percent, followed by the dimension-normalized
prediction-region size (lower is better); subscripts are standard deviations over five seeds. Where
a factor ($\times 10^{k}$) follows the dataset name, the sizes in that row are in units of $10^{k}$.
Best and second-best sizes are shown in \textcolor{bestblue}{\textbf{blue}} and
\underline{underline}
}
\label{tab:multivariate-results}
\renewcommand{\arraystretch}{1.5}
\setlength{\tabcolsep}{2pt}
\resizebox{\linewidth}{!}{%
\begin{tabular}{>{\large}l c r@{\hspace{2pt}}l@{\hspace{4pt}}r@{\hspace{2pt}}l@{\hspace{4pt}}r@{\hspace{2pt}}l@{\hspace{4pt}}r@{\hspace{2pt}}l@{\hspace{4pt}}r@{\hspace{2pt}}l@{\hspace{4pt}}r@{\hspace{2pt}}l@{\hspace{4pt}}r@{\hspace{2pt}}l@{\hspace{4pt}}r@{\hspace{2pt}}l}
\toprule
& & \multicolumn{10}{c}{\large Baselines}
& \multicolumn{6}{c}{\large \method{}} \\
\cmidrule[0.08em](lr){3-12}
\cmidrule[0.08em](lr){13-18}
Dataset
& $p$
& \multicolumn{2}{c}{\large Abs.\ Res.}
& \multicolumn{2}{c}{\large Emp.\ Cop.}
& \multicolumn{2}{c}{\large G-Ellip.}
& \multicolumn{2}{c}{\large L-Ellip.}
& \multicolumn{2}{c}{\large MVCS}
& \multicolumn{2}{c}{\large Spline-$\ell_\infty$}
& \multicolumn{2}{c}{\large Spline-$\ell_1$}
& \multicolumn{2}{c}{\large ICNN} \\
& & \multicolumn{1}{c}{\footnotesize\color{covgray} Cov.\,\%}
& \multicolumn{1}{c}{\footnotesize Size}
& \multicolumn{1}{c}{\footnotesize\color{covgray} Cov.\,\%}
& \multicolumn{1}{c}{\footnotesize Size}
& \multicolumn{1}{c}{\footnotesize\color{covgray} Cov.\,\%}
& \multicolumn{1}{c}{\footnotesize Size}
& \multicolumn{1}{c}{\footnotesize\color{covgray} Cov.\,\%}
& \multicolumn{1}{c}{\footnotesize Size}
& \multicolumn{1}{c}{\footnotesize\color{covgray} Cov.\,\%}
& \multicolumn{1}{c}{\footnotesize Size}
& \multicolumn{1}{c}{\footnotesize\color{covgray} Cov.\,\%}
& \multicolumn{1}{c}{\footnotesize Size}
& \multicolumn{1}{c}{\footnotesize\color{covgray} Cov.\,\%}
& \multicolumn{1}{c}{\footnotesize Size}
& \multicolumn{1}{c}{\footnotesize\color{covgray} Cov.\,\%}
& \multicolumn{1}{c}{\footnotesize Size} \\
\midrule

Argoverse2-2D {\footnotesize($\times 10^{1}$)} & 2
    & {\color{covgray}\normalsize $89.9_{\pm 0.8}$} & {\Large $3.98_{\pm 0.05}$}  
    & {\color{covgray}\normalsize $89.8_{\pm 0.6}$} & {\Large $3.73_{\pm 0.05}$}  
    & {\color{covgray}\normalsize $89.9_{\pm 0.5}$} & {\Large $3.86_{\pm 0.03}$}  
    & {\color{covgray}\normalsize $90.1_{\pm 0.5}$} & {\Large $3.67_{\pm 0.05}$}  
    & {\color{covgray}\normalsize $90.2_{\pm 0.7}$} & {\Large $3.68_{\pm 0.08}$}  
    & {\color{covgray}\normalsize $90.1_{\pm 0.6}$} & {\Large $3.58_{\pm 0.03}$}  
    & {\color{covgray}\normalsize $90.2_{\pm 0.7}$} & {\Large $\color{bestblue}\mathbf{3.50_{\pm 0.10}}$}  
    & {\color{covgray}\normalsize $89.8_{\pm 0.6}$} & {\Large $\underline{3.51_{\pm 0.06}}$} \\  

Argoverse2 Motion {\footnotesize($\times 10^{1}$)} & 2
    & {\color{covgray}\normalsize $90.1_{\pm 0.8}$} & {\Large $1.23_{\pm 0.03}$}  
    & {\color{covgray}\normalsize $90.0_{\pm 0.8}$} & {\Large $1.16_{\pm 0.03}$}  
    & {\color{covgray}\normalsize $90.3_{\pm 0.6}$} & {\Large $1.20_{\pm 0.02}$}  
    & {\color{covgray}\normalsize $89.9_{\pm 0.8}$} & {\Large $1.14_{\pm 0.02}$}  
    & {\color{covgray}\normalsize $90.0_{\pm 0.7}$} & {\Large $1.14_{\pm 0.03}$}  
    & {\color{covgray}\normalsize $90.1_{\pm 0.6}$} & {\Large $\underline{1.11_{\pm 0.02}}$}  
    & {\color{covgray}\normalsize $90.3_{\pm 0.9}$} & {\Large $1.11_{\pm 0.03}$}  
    & {\color{covgray}\normalsize $89.9_{\pm 0.6}$} & {\Large $\color{bestblue}\mathbf{1.10_{\pm 0.01}}$} \\  

Bias-Corr.\ Temp. & 2
    & {\color{covgray}\normalsize $90.0_{\pm 0.9}$} & {\Large $4.25_{\pm 0.08}$}  
    & {\color{covgray}\normalsize $89.8_{\pm 0.9}$} & {\Large $4.16_{\pm 0.05}$}  
    & {\color{covgray}\normalsize $89.8_{\pm 0.9}$} & {\Large $4.08_{\pm 0.06}$}  
    & {\color{covgray}\normalsize $90.1_{\pm 0.7}$} & {\Large $3.98_{\pm 0.06}$}  
    & {\color{covgray}\normalsize $89.8_{\pm 0.8}$} & {\Large $4.08_{\pm 0.08}$}  
    & {\color{covgray}\normalsize $89.7_{\pm 0.8}$} & {\Large $\underline{3.86_{\pm 0.10}}$}  
    & {\color{covgray}\normalsize $89.9_{\pm 0.9}$} & {\Large $\color{bestblue}\mathbf{3.82_{\pm 0.13}}$}  
    & {\color{covgray}\normalsize $90.4_{\pm 0.5}$} & {\Large $3.88_{\pm 0.13}$} \\  

ENB & 2
    & {\color{covgray}\normalsize $88.8_{\pm 2.8}$} & {\Large $12.76_{\pm 1.32}$}  
    & {\color{covgray}\normalsize $88.7_{\pm 4.4}$} & {\Large $10.24_{\pm 0.95}$}  
    & {\color{covgray}\normalsize $89.7_{\pm 4.9}$} & {\Large $10.94_{\pm 1.35}$}  
    & {\color{covgray}\normalsize $90.9_{\pm 2.7}$} & {\Large $9.19_{\pm 0.58}$}  
    & {\color{covgray}\normalsize $90.0_{\pm 2.5}$} & {\Large $9.56_{\pm 2.32}$}  
    & {\color{covgray}\normalsize $90.3_{\pm 3.0}$} & {\Large $\underline{8.14_{\pm 1.24}}$}  
    & {\color{covgray}\normalsize $90.5_{\pm 1.8}$} & {\Large $8.19_{\pm 0.39}$}  
    & {\color{covgray}\normalsize $89.4_{\pm 3.5}$} & {\Large $\color{bestblue}\mathbf{7.55_{\pm 0.13}}$} \\  

Geographical Music {\footnotesize($\times 10^{2}$)} & 2
    & {\color{covgray}\normalsize $90.6_{\pm 1.8}$} & {\Large $1.14_{\pm 0.03}$}  
    & {\color{covgray}\normalsize $89.2_{\pm 0.8}$} & {\Large $1.31_{\pm 0.20}$}  
    & {\color{covgray}\normalsize $90.1_{\pm 1.9}$} & {\Large $1.11_{\pm 0.03}$}  
    & {\color{covgray}\normalsize $89.7_{\pm 1.3}$} & {\Large $1.16_{\pm 0.05}$}  
    & {\color{covgray}\normalsize $89.1_{\pm 2.6}$} & {\Large $1.26_{\pm 0.15}$}  
    & {\color{covgray}\normalsize $90.7_{\pm 1.6}$} & {\Large $\color{bestblue}\mathbf{1.06_{\pm 0.02}}$}  
    & {\color{covgray}\normalsize $91.5_{\pm 2.1}$} & {\Large $\underline{1.08_{\pm 0.04}}$}  
    & {\color{covgray}\normalsize $91.2_{\pm 2.0}$} & {\Large $1.14_{\pm 0.13}$} \\  

CASP {\footnotesize($\times 10^{1}$)} & 2
    & {\color{covgray}\normalsize $90.5_{\pm 0.3}$} & {\Large $3.87_{\pm 0.08}$}  
    & {\color{covgray}\normalsize $90.2_{\pm 0.5}$} & {\Large $2.80_{\pm 0.15}$}  
    & {\color{covgray}\normalsize $90.3_{\pm 0.5}$} & {\Large $3.01_{\pm 0.02}$}  
    & {\color{covgray}\normalsize $90.3_{\pm 0.6}$} & {\Large $2.87_{\pm 0.05}$}  
    & {\color{covgray}\normalsize $89.8_{\pm 0.6}$} & {\Large $2.48_{\pm 0.08}$}  
    & {\color{covgray}\normalsize $89.7_{\pm 0.9}$} & {\Large $2.42_{\pm 0.06}$}  
    & {\color{covgray}\normalsize $89.7_{\pm 0.4}$} & {\Large $\underline{2.39_{\pm 0.08}}$}  
    & {\color{covgray}\normalsize $89.8_{\pm 0.8}$} & {\Large $\color{bestblue}\mathbf{2.34_{\pm 0.09}}$} \\  

Online Retail II {\footnotesize($\times 10^{1}$)} & 2
    & {\color{covgray}\normalsize $90.1_{\pm 0.3}$} & {\Large $3.26_{\pm 0.46}$}  
    & {\color{covgray}\normalsize $90.0_{\pm 0.6}$} & {\Large $2.60_{\pm 0.13}$}  
    & {\color{covgray}\normalsize $89.8_{\pm 0.6}$} & {\Large $4.62_{\pm 2.17}$}  
    & {\color{covgray}\normalsize $89.8_{\pm 0.3}$} & {\Large $5.94_{\pm 2.10}$}  
    & {\color{covgray}\normalsize $89.9_{\pm 0.4}$} & {\Large $2.25_{\pm 0.12}$}  
    & {\color{covgray}\normalsize $89.7_{\pm 0.4}$} & {\Large $2.05_{\pm 0.06}$}  
    & {\color{covgray}\normalsize $89.8_{\pm 0.6}$} & {\Large $\underline{1.92_{\pm 0.11}}$}  
    & {\color{covgray}\normalsize $90.0_{\pm 0.5}$} & {\Large $\color{bestblue}\mathbf{1.85_{\pm 0.12}}$} \\  

QM9-2D & 2
    & {\color{covgray}\normalsize $89.9_{\pm 0.7}$} & {\Large $8.79_{\pm 0.13}$}  
    & {\color{covgray}\normalsize $89.2_{\pm 0.8}$} & {\Large $5.40_{\pm 0.10}$}  
    & {\color{covgray}\normalsize $89.5_{\pm 0.5}$} & {\Large $5.73_{\pm 0.06}$}  
    & {\color{covgray}\normalsize $89.6_{\pm 0.5}$} & {\Large $5.36_{\pm 0.11}$}  
    & {\color{covgray}\normalsize $89.8_{\pm 0.5}$} & {\Large $5.30_{\pm 0.06}$}  
    & {\color{covgray}\normalsize $89.6_{\pm 0.4}$} & {\Large $\color{bestblue}\mathbf{5.06_{\pm 0.07}}$}  
    & {\color{covgray}\normalsize $89.4_{\pm 0.4}$} & {\Large $5.10_{\pm 0.10}$}  
    & {\color{covgray}\normalsize $89.5_{\pm 0.8}$} & {\Large $\underline{5.08_{\pm 0.11}}$} \\  

Household Power & 3
    & {\color{covgray}\normalsize $90.3_{\pm 0.3}$} & {\Large $22.44_{\pm 0.62}$}  
    & {\color{covgray}\normalsize $90.3_{\pm 0.4}$} & {\Large $16.09_{\pm 1.42}$}  
    & {\color{covgray}\normalsize $90.2_{\pm 0.3}$} & {\Large $23.31_{\pm 1.09}$}  
    & {\color{covgray}\normalsize $90.1_{\pm 0.3}$} & {\Large $15.73_{\pm 0.40}$}  
    & {\color{covgray}\normalsize $89.9_{\pm 0.5}$} & {\Large $\underline{5.46_{\pm 0.37}}$}  
    & {\color{covgray}\normalsize $90.1_{\pm 0.4}$} & {\Large $7.15_{\pm 0.71}$}  
    & {\color{covgray}\normalsize $90.2_{\pm 0.4}$} & {\Large $6.96_{\pm 0.86}$}  
    & {\color{covgray}\normalsize $89.9_{\pm 0.4}$} & {\Large $\color{bestblue}\mathbf{5.10_{\pm 0.66}}$} \\  

SCPF & 3
    & {\color{covgray}\normalsize $89.9_{\pm 1.4}$} & {\Large $7.32_{\pm 0.32}$}  
    & {\color{covgray}\normalsize $89.6_{\pm 1.6}$} & {\Large $6.29_{\pm 0.73}$}  
    & {\color{covgray}\normalsize $89.4_{\pm 2.6}$} & {\Large $7.63_{\pm 1.16}$}  
    & {\color{covgray}\normalsize $89.9_{\pm 2.2}$} & {\Large $7.27_{\pm 0.92}$}  
    & {\color{covgray}\normalsize $90.9_{\pm 1.1}$} & {\Large $5.82_{\pm 0.32}$}  
    & {\color{covgray}\normalsize $89.6_{\pm 1.5}$} & {\Large $6.07_{\pm 0.99}$}  
    & {\color{covgray}\normalsize $88.2_{\pm 3.1}$} & {\Large $\underline{5.19_{\pm 0.80}}$}  
    & {\color{covgray}\normalsize $89.0_{\pm 3.2}$} & {\Large $\color{bestblue}\mathbf{5.16_{\pm 0.68}}$} \\  

Argoverse2-4D {\footnotesize($\times 10^{1}$)} & 4
    & {\color{covgray}\normalsize $90.1_{\pm 0.8}$} & {\Large $2.86_{\pm 0.06}$}  
    & {\color{covgray}\normalsize $90.0_{\pm 0.6}$} & {\Large $2.52_{\pm 0.04}$}  
    & {\color{covgray}\normalsize $90.1_{\pm 0.4}$} & {\Large $1.64_{\pm 0.02}$}  
    & {\color{covgray}\normalsize $90.0_{\pm 0.4}$} & {\Large $\underline{1.59_{\pm 0.02}}$}  
    & {\color{covgray}\normalsize $90.1_{\pm 0.5}$} & {\Large $1.61_{\pm 0.02}$}  
    & {\color{covgray}\normalsize $89.9_{\pm 0.3}$} & {\Large $1.64_{\pm 0.02}$}  
    & {\color{covgray}\normalsize $90.2_{\pm 0.4}$} & {\Large $1.60_{\pm 0.04}$}  
    & {\color{covgray}\normalsize $89.9_{\pm 0.6}$} & {\Large $\color{bestblue}\mathbf{1.57_{\pm 0.02}}$} \\  

QM8 {\footnotesize($\times 10^{-2}$)} & 4
    & {\color{covgray}\normalsize $89.8_{\pm 0.8}$} & {\Large $19.13_{\pm 0.53}$}  
    & {\color{covgray}\normalsize $89.9_{\pm 0.8}$} & {\Large $16.91_{\pm 0.48}$}  
    & {\color{covgray}\normalsize $89.7_{\pm 0.8}$} & {\Large $15.32_{\pm 0.27}$}  
    & {\color{covgray}\normalsize $90.0_{\pm 1.0}$} & {\Large $13.76_{\pm 0.17}$}  
    & {\color{covgray}\normalsize $89.4_{\pm 0.6}$} & {\Large $11.89_{\pm 0.09}$}  
    & {\color{covgray}\normalsize $89.7_{\pm 0.8}$} & {\Large $11.37_{\pm 0.23}$}  
    & {\color{covgray}\normalsize $89.6_{\pm 0.3}$} & {\Large $\underline{11.07_{\pm 0.09}}$}  
    & {\color{covgray}\normalsize $89.3_{\pm 0.8}$} & {\Large $\color{bestblue}\mathbf{9.66_{\pm 0.21}}$} \\  

QM9-4D & 4
    & {\color{covgray}\normalsize $89.5_{\pm 0.6}$} & {\Large $5.06_{\pm 0.07}$}  
    & {\color{covgray}\normalsize $89.3_{\pm 0.7}$} & {\Large $3.94_{\pm 0.03}$}  
    & {\color{covgray}\normalsize $89.5_{\pm 0.8}$} & {\Large $3.71_{\pm 0.03}$}  
    & {\color{covgray}\normalsize $89.5_{\pm 0.8}$} & {\Large $3.55_{\pm 0.06}$}  
    & {\color{covgray}\normalsize $89.6_{\pm 0.5}$} & {\Large $3.48_{\pm 0.04}$}  
    & {\color{covgray}\normalsize $89.8_{\pm 0.2}$} & {\Large $3.55_{\pm 0.04}$}  
    & {\color{covgray}\normalsize $89.6_{\pm 0.8}$} & {\Large $\underline{3.47_{\pm 0.03}}$}  
    & {\color{covgray}\normalsize $89.9_{\pm 0.6}$} & {\Large $\color{bestblue}\mathbf{3.34_{\pm 0.02}}$} \\  

SGEMM {\footnotesize($\times 10^{1}$)} & 4
    & {\color{covgray}\normalsize $90.0_{\pm 0.9}$} & {\Large $12.50_{\pm 0.43}$}  
    & {\color{covgray}\normalsize $89.9_{\pm 1.0}$} & {\Large $13.05_{\pm 0.75}$}  
    & {\color{covgray}\normalsize $90.1_{\pm 0.4}$} & {\Large $3.58_{\pm 0.12}$}  
    & {\color{covgray}\normalsize $90.1_{\pm 0.4}$} & {\Large $3.51_{\pm 0.06}$}  
    & {\color{covgray}\normalsize $89.9_{\pm 0.5}$} & {\Large $\underline{3.03_{\pm 0.17}}$}  
    & {\color{covgray}\normalsize $90.1_{\pm 0.4}$} & {\Large $3.22_{\pm 0.20}$}  
    & {\color{covgray}\normalsize $90.1_{\pm 0.6}$} & {\Large $3.08_{\pm 0.16}$}  
    & {\color{covgray}\normalsize $90.1_{\pm 0.4}$} & {\Large $\color{bestblue}\mathbf{3.02_{\pm 0.18}}$} \\  

Beijing AQ-6D {\footnotesize($\times 10^{2}$)} & 6
    & {\color{covgray}\normalsize $89.7_{\pm 0.6}$} & {\Large $3.04_{\pm 0.03}$}  
    & {\color{covgray}\normalsize $89.8_{\pm 0.4}$} & {\Large $3.08_{\pm 0.07}$}  
    & {\color{covgray}\normalsize $89.7_{\pm 0.5}$} & {\Large $2.14_{\pm 0.02}$}  
    & {\color{covgray}\normalsize $89.7_{\pm 0.7}$} & {\Large $2.03_{\pm 0.02}$}  
    & {\color{covgray}\normalsize $89.7_{\pm 0.5}$} & {\Large $1.90_{\pm 0.03}$}  
    & {\color{covgray}\normalsize $89.9_{\pm 0.5}$} & {\Large $1.97_{\pm 0.02}$}  
    & {\color{covgray}\normalsize $89.9_{\pm 0.3}$} & {\Large $\underline{1.82_{\pm 0.01}}$}  
    & {\color{covgray}\normalsize $89.9_{\pm 0.6}$} & {\Large $\color{bestblue}\mathbf{1.68_{\pm 0.02}}$} \\  

Communities Crime {\footnotesize($\times 10^{2}$)} & 6
    & {\color{covgray}\normalsize $90.6_{\pm 3.6}$} & {\Large $7.41_{\pm 0.42}$}  
    & {\color{covgray}\normalsize $89.7_{\pm 1.7}$} & {\Large $8.43_{\pm 0.57}$}  
    & {\color{covgray}\normalsize $91.3_{\pm 1.5}$} & {\Large $6.23_{\pm 0.20}$}  
    & {\color{covgray}\normalsize $90.8_{\pm 2.6}$} & {\Large $6.81_{\pm 0.29}$}  
    & {\color{covgray}\normalsize $91.7_{\pm 1.1}$} & {\Large $6.30_{\pm 0.35}$}  
    & {\color{covgray}\normalsize $91.6_{\pm 1.7}$} & {\Large $6.94_{\pm 0.54}$}  
    & {\color{covgray}\normalsize $90.6_{\pm 2.2}$} & {\Large $\underline{6.07_{\pm 0.10}}$}  
    & {\color{covgray}\normalsize $90.6_{\pm 1.7}$} & {\Large $\color{bestblue}\mathbf{5.94_{\pm 0.17}}$} \\  

ATP1D {\footnotesize($\times 10^{2}$)} & 6
    & {\color{covgray}\normalsize $90.0_{\pm 3.9}$} & {\Large $5.19_{\pm 1.08}$}  
    & {\color{covgray}\normalsize $82.6_{\pm 5.2}$} & {\Large $6.12_{\pm 1.05}$}  
    & {\color{covgray}\normalsize $91.5_{\pm 3.0}$} & {\Large $\underline{3.73_{\pm 0.51}}$}  
    & {\color{covgray}\normalsize $89.7_{\pm 3.8}$} & {\Large $4.44_{\pm 0.51}$}  
    & {\color{covgray}\normalsize $89.4_{\pm 3.1}$} & {\Large $4.61_{\pm 1.07}$}  
    & {\color{covgray}\normalsize $89.4_{\pm 5.5}$} & {\Large $4.20_{\pm 0.47}$}  
    & {\color{covgray}\normalsize $88.5_{\pm 4.9}$} & {\Large $4.50_{\pm 0.91}$}  
    & {\color{covgray}\normalsize $88.8_{\pm 3.8}$} & {\Large $\color{bestblue}\mathbf{3.60_{\pm 0.43}}$} \\  

QM9-6D & 6
    & {\color{covgray}\normalsize $89.7_{\pm 0.8}$} & {\Large $10.57_{\pm 0.08}$}  
    & {\color{covgray}\normalsize $89.7_{\pm 0.6}$} & {\Large $6.17_{\pm 0.10}$}  
    & {\color{covgray}\normalsize $89.5_{\pm 0.8}$} & {\Large $5.46_{\pm 0.06}$}  
    & {\color{covgray}\normalsize $89.7_{\pm 1.0}$} & {\Large $5.21_{\pm 0.08}$}  
    & {\color{covgray}\normalsize $90.0_{\pm 0.7}$} & {\Large $5.07_{\pm 0.06}$}  
    & {\color{covgray}\normalsize $90.0_{\pm 0.5}$} & {\Large $5.35_{\pm 0.07}$}  
    & {\color{covgray}\normalsize $89.8_{\pm 1.0}$} & {\Large $\underline{4.90_{\pm 0.08}}$}  
    & {\color{covgray}\normalsize $90.1_{\pm 0.6}$} & {\Large $\color{bestblue}\mathbf{4.85_{\pm 0.11}}$} \\  

SARCOS & 7
    & {\color{covgray}\normalsize $90.2_{\pm 0.7}$} & {\Large $5.56_{\pm 0.09}$}  
    & {\color{covgray}\normalsize $89.8_{\pm 0.6}$} & {\Large $5.19_{\pm 0.21}$}  
    & {\color{covgray}\normalsize $89.9_{\pm 0.8}$} & {\Large $4.16_{\pm 0.07}$}  
    & {\color{covgray}\normalsize $90.5_{\pm 0.8}$} & {\Large $3.90_{\pm 0.03}$}  
    & {\color{covgray}\normalsize $89.7_{\pm 0.5}$} & {\Large $3.61_{\pm 0.05}$}  
    & {\color{covgray}\normalsize $89.9_{\pm 0.8}$} & {\Large $3.87_{\pm 0.07}$}  
    & {\color{covgray}\normalsize $89.9_{\pm 0.4}$} & {\Large $\underline{3.48_{\pm 0.09}}$}  
    & {\color{covgray}\normalsize $89.8_{\pm 0.4}$} & {\Large $\color{bestblue}\mathbf{3.34_{\pm 0.01}}$} \\  

Argoverse2-8D & 8
    & {\color{covgray}\normalsize $90.1_{\pm 0.6}$} & {\Large $22.45_{\pm 0.22}$}  
    & {\color{covgray}\normalsize $89.7_{\pm 0.6}$} & {\Large $26.27_{\pm 4.20}$}  
    & {\color{covgray}\normalsize $89.9_{\pm 0.4}$} & {\Large $6.90_{\pm 0.15}$}  
    & {\color{covgray}\normalsize $89.8_{\pm 0.4}$} & {\Large $\underline{6.81_{\pm 0.09}}$}  
    & {\color{covgray}\normalsize $90.1_{\pm 0.4}$} & {\Large $6.84_{\pm 0.11}$}  
    & {\color{covgray}\normalsize $90.0_{\pm 0.8}$} & {\Large $7.83_{\pm 0.24}$}  
    & {\color{covgray}\normalsize $90.0_{\pm 0.7}$} & {\Large $6.96_{\pm 0.12}$}  
    & {\color{covgray}\normalsize $90.1_{\pm 0.6}$} & {\Large $\color{bestblue}\mathbf{6.76_{\pm 0.10}}$} \\  

QM9-8D & 8
    & {\color{covgray}\normalsize $89.7_{\pm 0.7}$} & {\Large $12.17_{\pm 0.14}$}  
    & {\color{covgray}\normalsize $89.7_{\pm 0.5}$} & {\Large $6.03_{\pm 0.10}$}  
    & {\color{covgray}\normalsize $89.8_{\pm 1.2}$} & {\Large $4.98_{\pm 0.11}$}  
    & {\color{covgray}\normalsize $89.7_{\pm 0.8}$} & {\Large $4.77_{\pm 0.08}$}  
    & {\color{covgray}\normalsize $90.2_{\pm 0.8}$} & {\Large $4.62_{\pm 0.09}$}  
    & {\color{covgray}\normalsize $89.9_{\pm 0.5}$} & {\Large $5.06_{\pm 0.09}$}  
    & {\color{covgray}\normalsize $89.9_{\pm 0.7}$} & {\Large $\color{bestblue}\mathbf{4.38_{\pm 0.09}}$}  
    & {\color{covgray}\normalsize $90.1_{\pm 0.9}$} & {\Large $\underline{4.41_{\pm 0.09}}$} \\  

Beijing AQ-12D {\footnotesize($\times 10^{2}$)} & 12
    & {\color{covgray}\normalsize $89.7_{\pm 0.7}$} & {\Large $2.87_{\pm 0.04}$}  
    & {\color{covgray}\normalsize $89.6_{\pm 0.3}$} & {\Large $4.32_{\pm 0.21}$}  
    & {\color{covgray}\normalsize $89.8_{\pm 0.4}$} & {\Large $1.34_{\pm 0.03}$}  
    & {\color{covgray}\normalsize $89.4_{\pm 0.5}$} & {\Large $1.52_{\pm 0.03}$}  
    & {\color{covgray}\normalsize $89.7_{\pm 0.3}$} & {\Large $1.34_{\pm 0.03}$}  
    & {\color{covgray}\normalsize $89.5_{\pm 0.2}$} & {\Large $1.55_{\pm 0.02}$}  
    & {\color{covgray}\normalsize $89.7_{\pm 0.6}$} & {\Large $\color{bestblue}\mathbf{1.31_{\pm 0.01}}$}  
    & {\color{covgray}\normalsize $89.4_{\pm 0.3}$} & {\Large $\underline{1.33_{\pm 0.05}}$} \\  

Knee Contact Force {\footnotesize($\times 10^{1}$)} & 12
    & {\color{covgray}\normalsize $89.8_{\pm 2.3}$} & {\Large $19.13_{\pm 0.93}$}  
    & {\color{covgray}\normalsize $90.1_{\pm 1.4}$} & {\Large $26.66_{\pm 3.47}$}  
    & {\color{covgray}\normalsize $89.6_{\pm 1.9}$} & {\Large $7.68_{\pm 0.26}$}  
    & {\color{covgray}\normalsize $89.2_{\pm 0.9}$} & {\Large $8.43_{\pm 0.20}$}  
    & {\color{covgray}\normalsize $90.1_{\pm 1.8}$} & {\Large $\underline{6.81_{\pm 0.06}}$}  
    & {\color{covgray}\normalsize $90.4_{\pm 1.2}$} & {\Large $8.27_{\pm 0.22}$}  
    & {\color{covgray}\normalsize $89.8_{\pm 1.6}$} & {\Large $7.16_{\pm 0.11}$}  
    & {\color{covgray}\normalsize $90.3_{\pm 2.3}$} & {\Large $\color{bestblue}\mathbf{6.65_{\pm 0.33}}$} \\  

WQ & 14
    & {\color{covgray}\normalsize $89.6_{\pm 2.9}$} & {\Large $8.60_{\pm 0.45}$}  
    & {\color{covgray}\normalsize $85.3_{\pm 2.5}$} & {\Large $10.32_{\pm 0.73}$}  
    & {\color{covgray}\normalsize $90.0_{\pm 1.7}$} & {\Large $\underline{5.99_{\pm 0.14}}$}  
    & {\color{covgray}\normalsize $88.8_{\pm 3.5}$} & {\Large $6.76_{\pm 0.42}$}  
    & {\color{covgray}\normalsize $89.4_{\pm 2.5}$} & {\Large $\color{bestblue}\mathbf{5.95_{\pm 0.27}}$}  
    & {\color{covgray}\normalsize $89.0_{\pm 1.8}$} & {\Large $6.93_{\pm 0.19}$}  
    & {\color{covgray}\normalsize $89.3_{\pm 1.4}$} & {\Large $6.28_{\pm 0.53}$}  
    & {\color{covgray}\normalsize $87.9_{\pm 2.1}$} & {\Large $6.11_{\pm 0.26}$} \\  

QM7b & 14
    & {\color{covgray}\normalsize $90.0_{\pm 1.3}$} & {\Large $5.30_{\pm 0.13}$}  
    & {\color{covgray}\normalsize $90.0_{\pm 1.2}$} & {\Large $5.98_{\pm 0.62}$}  
    & {\color{covgray}\normalsize $90.4_{\pm 0.8}$} & {\Large $2.14_{\pm 0.08}$}  
    & {\color{covgray}\normalsize $90.4_{\pm 0.3}$} & {\Large $2.18_{\pm 0.09}$}  
    & {\color{covgray}\normalsize $90.4_{\pm 0.7}$} & {\Large $\underline{1.88_{\pm 0.03}}$}  
    & {\color{covgray}\normalsize $90.5_{\pm 0.7}$} & {\Large $2.32_{\pm 0.02}$}  
    & {\color{covgray}\normalsize $89.9_{\pm 1.2}$} & {\Large $1.97_{\pm 0.03}$}  
    & {\color{covgray}\normalsize $90.1_{\pm 0.6}$} & {\Large $\color{bestblue}\mathbf{1.78_{\pm 0.03}}$} \\  

OES10 {\footnotesize($\times 10^{3}$)} & 16
    & {\color{covgray}\normalsize $91.2_{\pm 4.7}$} & {\Large $3.09_{\pm 0.69}$}  
    & {\color{covgray}\normalsize $85.5_{\pm 5.8}$} & {\Large $6.17_{\pm 1.17}$}  
    & {\color{covgray}\normalsize $89.8_{\pm 5.1}$} & {\Large $\underline{2.09_{\pm 0.31}}$}  
    & {\color{covgray}\normalsize $90.0_{\pm 2.9}$} & {\Large $\color{bestblue}\mathbf{2.01_{\pm 0.15}}$}  
    & {\color{covgray}\normalsize $90.2_{\pm 5.4}$} & {\Large $2.77_{\pm 0.68}$}  
    & {\color{covgray}\normalsize $90.8_{\pm 4.7}$} & {\Large $3.86_{\pm 1.13}$}  
    & {\color{covgray}\normalsize $90.5_{\pm 4.4}$} & {\Large $2.95_{\pm 0.64}$}  
    & {\color{covgray}\normalsize $91.0_{\pm 4.8}$} & {\Large $2.39_{\pm 0.51}$} \\  

OES97 {\footnotesize($\times 10^{3}$)} & 16
    & {\color{covgray}\normalsize $90.0_{\pm 3.9}$} & {\Large $4.24_{\pm 0.62}$}  
    & {\color{covgray}\normalsize $83.9_{\pm 3.4}$} & {\Large $15.32_{\pm 4.57}$}  
    & {\color{covgray}\normalsize $90.6_{\pm 6.4}$} & {\Large $\underline{2.97_{\pm 0.27}}$}  
    & {\color{covgray}\normalsize $90.6_{\pm 1.8}$} & {\Large $3.60_{\pm 0.44}$}  
    & {\color{covgray}\normalsize $92.4_{\pm 4.7}$} & {\Large $4.85_{\pm 1.16}$}  
    & {\color{covgray}\normalsize $90.0_{\pm 3.9}$} & {\Large $4.63_{\pm 0.93}$}  
    & {\color{covgray}\normalsize $91.5_{\pm 3.9}$} & {\Large $3.16_{\pm 0.42}$}  
    & {\color{covgray}\normalsize $92.7_{\pm 2.2}$} & {\Large $\color{bestblue}\mathbf{2.87_{\pm 0.29}}$} \\  

BDG2-Fox {\footnotesize($\times 10^{1}$)} & 16
    & {\color{covgray}\normalsize $89.7_{\pm 0.6}$} & {\Large $14.13_{\pm 0.35}$}  
    & {\color{covgray}\normalsize $89.6_{\pm 0.4}$} & {\Large $18.73_{\pm 2.86}$}  
    & {\color{covgray}\normalsize $89.7_{\pm 0.4}$} & {\Large $9.09_{\pm 0.10}$}  
    & {\color{covgray}\normalsize $89.7_{\pm 0.8}$} & {\Large $8.33_{\pm 0.09}$}  
    & {\color{covgray}\normalsize $89.5_{\pm 0.9}$} & {\Large $\underline{7.72_{\pm 0.11}}$}  
    & {\color{covgray}\normalsize $89.9_{\pm 0.7}$} & {\Large $9.73_{\pm 0.31}$}  
    & {\color{covgray}\normalsize $89.7_{\pm 0.8}$} & {\Large $10.97_{\pm 1.65}$}  
    & {\color{covgray}\normalsize $89.7_{\pm 1.2}$} & {\Large $\color{bestblue}\mathbf{7.54_{\pm 0.16}}$} \\  

BDG2-Eagle {\footnotesize($\times 10^{1}$)} & 16
    & {\color{covgray}\normalsize $90.1_{\pm 0.9}$} & {\Large $11.46_{\pm 0.17}$}  
    & {\color{covgray}\normalsize $89.8_{\pm 0.6}$} & {\Large $12.29_{\pm 0.56}$}  
    & {\color{covgray}\normalsize $90.3_{\pm 0.6}$} & {\Large $7.77_{\pm 0.07}$}  
    & {\color{covgray}\normalsize $89.6_{\pm 0.7}$} & {\Large $7.29_{\pm 0.09}$}  
    & {\color{covgray}\normalsize $89.8_{\pm 0.5}$} & {\Large $\underline{7.15_{\pm 0.06}}$}  
    & {\color{covgray}\normalsize $89.4_{\pm 0.8}$} & {\Large $8.99_{\pm 0.16}$}  
    & {\color{covgray}\normalsize $89.7_{\pm 0.4}$} & {\Large $7.35_{\pm 0.08}$}  
    & {\color{covgray}\normalsize $89.8_{\pm 1.1}$} & {\Large $\color{bestblue}\mathbf{7.13_{\pm 0.12}}$} \\  

BDG2-Lamb {\footnotesize($\times 10^{1}$)} & 16
    & {\color{covgray}\normalsize $89.6_{\pm 0.6}$} & {\Large $17.64_{\pm 0.16}$}  
    & {\color{covgray}\normalsize $89.5_{\pm 0.6}$} & {\Large $23.77_{\pm 2.51}$}  
    & {\color{covgray}\normalsize $89.4_{\pm 0.2}$} & {\Large $10.15_{\pm 0.06}$}  
    & {\color{covgray}\normalsize $89.4_{\pm 0.7}$} & {\Large $8.80_{\pm 0.10}$}  
    & {\color{covgray}\normalsize $89.6_{\pm 0.7}$} & {\Large $\underline{7.31_{\pm 0.12}}$}  
    & {\color{covgray}\normalsize $90.2_{\pm 0.6}$} & {\Large $8.95_{\pm 0.20}$}  
    & {\color{covgray}\normalsize $89.6_{\pm 0.4}$} & {\Large $7.51_{\pm 0.15}$}  
    & {\color{covgray}\normalsize $89.6_{\pm 0.7}$} & {\Large $\color{bestblue}\mathbf{7.19_{\pm 0.25}}$} \\  
\midrule
\multicolumn{2}{l}{Mean rank $\downarrow$}
& \multicolumn{2}{c}{\Large $7.18 \pm 0.88$}
& \multicolumn{2}{c}{\Large $7.13 \pm 1.02$}
& \multicolumn{2}{c}{\Large $5.15 \pm 1.82$}
& \multicolumn{2}{c}{\Large $4.47 \pm 1.35$}
& \multicolumn{2}{c}{\Large $3.58 \pm 1.48$}
& \multicolumn{2}{c}{\Large $4.28 \pm 1.62$}
& \multicolumn{2}{c}{\Large $\underline{2.75 \pm 1.26}$}
& \multicolumn{2}{c}{\Large $\color{bestblue}\mathbf{1.45 \pm 0.86}$} \\
\bottomrule
\end{tabular}%
}
\end{minipage}}
\end{table}

\paragraph{Real-World Benchmarks.}
Table~\ref{tab:multivariate-results} reports the dimension-normalized prediction-region size across 30 real-world multivariate regression tasks with response dimensions ranging from $p=2$ to $p=16$. To facilitate aggregate comparison across all methods and datasets, mean efficiency ranks are also reported, where lower values indicate superior prediction-region efficiency. Additional analysis and qualitative examples are provided in Appendix~\ref{app:multivariate-analysis}. 
\begin{figure}[!t]
    \centering
    \includegraphics[width=0.8\linewidth]{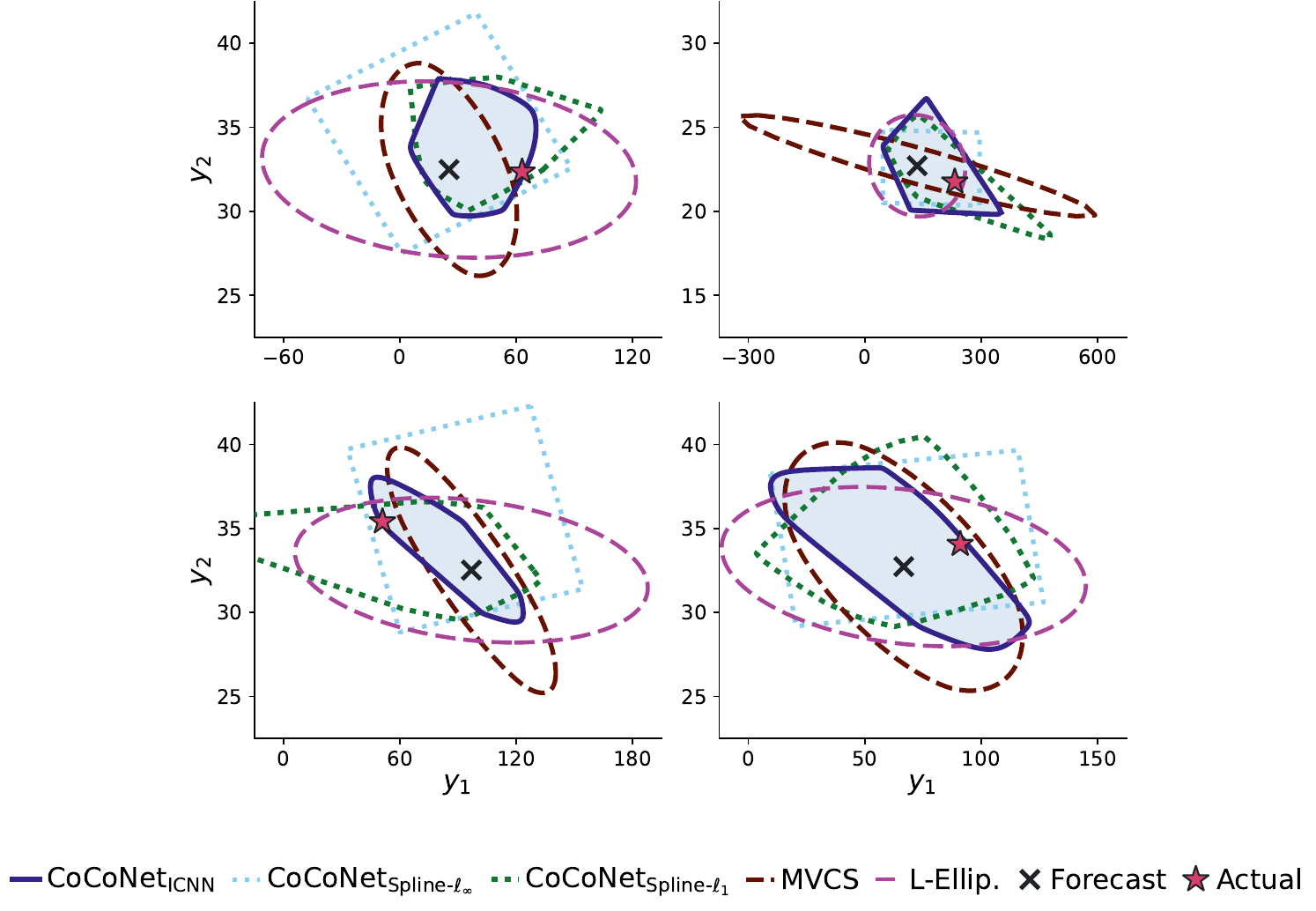}
    \caption{
\textbf{Input-adaptive prediction-region geometry on CASP.}
Four test instances compare $\method{}_{\mathrm{ICNN}}$ with Local Ellipsoid,
MVCS, and $\method{}_{\mathrm{Spline}\text{-}\ell_\infty}$ and $\method{}_{\mathrm{Spline}\text{-}\ell_1}$, using the same fixed point predictor.
The cross denotes the point prediction and the star denotes the observed response.
Local Ellipsoid adapts orientation and scale while retaining an elliptical
boundary; MVCS, $\method{}_{\mathrm{Spline}\text{-}\ell_\infty}$, and $\method{}_{\mathrm{Spline}\text{-}\ell_1}$ adapt within their respective
norm-based, rotated-hyperrectangular, and polytope families. In contrast,
$\method{}_{\mathrm{ICNN}}$ can also adapt the boundary geometry itself,
producing qualitatively different convex regions across inputs.
}
    \label{fig:casp-geometry}
\end{figure}
The results show a broad improvement as the score families gain flexibility to represent joint uncertainty. Absolute-Residual CP and Empirical Copula have highest mean ranks of $7.18$ and $7.13$, respectively. Global and Local Ellipsoid improve to $5.15$ and $4.47$ by accounting for cross-output covariance and orientation; the local variant also adapts this structure to the input which results in better overall performance. MVCS is the strongest baseline, with a mean rank of $3.58$, highlighting its flexibility through norm-based and input adaptive conformity score modeling. $\method{}_{\mathrm{Spline}\text{-}\ell_\infty}$ achieves a mean rank of $4.28$: its input-dependent rotation and asymmetric widths are effective for box-like uncertainty, although its regions remain rotated asymmetric hyperrectangles. While $\method{}_{\mathrm{Spline}\text{-}\ell_\infty}$ falls behind MVCS, it outperform the other baselines and produce efficient linear prediction regions. Replacing its $\ell_\infty$ aggregation with the $\ell_1$ norm, $\method{}_{\mathrm{Spline}\text{-}\ell_1}$ improves the mean rank to $2.75$, ahead of MVCS, and is the most efficient method on 4 tasks. Finally, $\method{}_{\mathrm{ICNN}}$ achieves the best mean rank of $1.45$. It is the most efficient method on 22 of 30 tasks, ranks among the top two on 26.

The gains of $\method{}_{\mathrm{ICNN}}$ span response dimensions and application domains. Relative to the next-most efficient method on each task, its dimension-normalized size is approximately $3.6\%$ smaller on Online Retail II, $7.7\%$ smaller on Beijing AQ-6D, $4.0\%$ smaller on SARCOS, and $5.3\%$ smaller on QM7b. Note that this metric is the $p$-th root of the prediction-region volume and these differences correspond to substantially larger reductions in the original $p$-dimensional volume. It translates to approximately $7\%$, $38\%$, $25\%$, and $53\%$ for Online Retail II, Beijing AQ-6D, SARCOS, and QM7b, respectively. For $p\geq14$, $\method{}_{\mathrm{ICNN}}$ is the most efficient method on QM7b and all three 16-dimensional Building Data Genome tasks. The more structured MVCS and Local Ellipsoid perform better on the smaller WQ and OES10 datasets, although $\method{}_{\mathrm{ICNN}}$ also leads on the small OES97 dataset. Figure~\ref{fig:multivariate-ranks} summarizes rank distributions across the 30 multivariate tasks. Further analysis in Appendix~\ref{app:multivariate-analysis} reports univariate performance ($p=1$; Table~\ref{tab:univariate-results}), and extends the comparison to potentially non-convex MVCS variants and Pullback Conformal Sets (PBS) \cite{kondratyev2025neural}.

Figure~\ref{fig:casp-geometry} shows examples of prediction regions across four representative instances of the CASP dataset, presenting the results of the five top-performing CP approaches for qualitative comparison. While Local Ellipsoid, MVCS, and the two spline variants adjust their prediction regions strictly within predefined geometric families, $\method{}_{\mathrm{ICNN}}$ dynamically adapts the boundary geometry itself across different inputs, producing regions with distinct orientations, asymmetries, and shapes tailored to local residual structures.

\section{Conclusion}
\label{sec:conclusion}
This paper introduced \method{}, a framework for learning convex conformity scores that induce flexible, data-adaptive prediction regions for multivariate regression settings. By defining an admissible class of conformity scores $\mathcal{S}_{\mathrm{adm}}$ that strictly guarantees regional convexity, \method{} parameterizes and optimizes score functions for maximal region efficiency using held-out data. Two distinct architectures were developed: a structured Spline variant combining input-dependent space rotations with asymmetric coordinatewise functions, and an ICNN-based variant capable of parameterizing general convex geometries. Experiments across synthetic tasks and 30 real-world multivariate benchmarks (with output dimension $p \in [2, 16]$) demonstrate that learning the conformity-score geometry substantially improves prediction-region efficiency; specifically, $\method{}_{\mathrm{ICNN}}$ achieves the top efficiency rank on 22 out of 30 tasks. By bridging expressive score optimization with strict regional convexity, \method{} provides a principled foundation for generating tight and tractable uncertainty sets for multi-dimensional regression settings. Future work should focus on learning efficient prediction regions while considering conditional and group-conditional coverage.


\section*{Code and Data}
\label{sec:reproducibility}
The code and datasets used in this paper are publicly available at \url{https://github.com/AI4OPT/CoCoNet}. All reported numerical results, tables, and figures can be reproduced using the provided code and data under the same experimental configuration and computational environment described in the paper.

\section*{Acknowledgment}
This work was funded by NSF award 2112533.

\bibliography{ref}
\bibliographystyle{apalike}

\clearpage
\appendix
\section{Related Work}
\label{app:related_work}

\paragraph{Multivariate Uncertainty Quantification.}
Uncertainty quantification for multivariate responses has been studied through geometric and directional quantile methods \citet{chakraborty2003multivariate,hallin2010multivariate,paindaveine2011directional}, vector quantile regression based on optimal transport \citet{carlier2016vector}, and distributional approaches that model the joint conditional distribution of the response \citet{cevid2022distributional}. These approaches characterize joint uncertainty through multivariate quantiles, depth regions, or conditional distributional representations. Their theoretical guarantees are generally tied to the corresponding statistical formulation and regularity or specification assumptions, such as consistency or asymptotic validity, rather than finite-sample distribution-free predictive coverage. Conformal prediction addresses a complementary objective by directly calibrating prediction regions to provide finite-sample marginal coverage under exchangeability.

\paragraph{Structured Multivariate Conformal Prediction.}
Multivariate conformal methods differ primarily in the geometric structure imposed on the joint prediction region and in how cross-output dependence is represented. Copula-based approaches combine target-wise conformity information while explicitly modeling dependence across outputs \citet{messoudi2021copula}. Coordinatewise conformalization naturally yields axis-aligned hyperrectangles, with recent methods further optimizing their widths or dimension-wise error allocations to improve efficiency \citet{sampson2024conformal,schlembach2025dynamic}. Covariance- and Mahalanobis-based conformity scores instead produce ellipsoidal regions that capture cross-output dependence and heterogeneous uncertainty \citet{johnstone2021conformal,messoudi2022ellipsoidal}. \citet{tumu2024multi} optimize parameterized convex shape templates and combine multiple convex components to construct tractable multimodal conformal prediction regions for downstream planning and control. Minimum-Volume Conformal Sets (MVCS) directly optimize prediction-set size over adaptive single and multi-norm families \citet{braun2026minimum}. \method{} learns input-adaptive convex conformity scores that enable more flexible prediction-region geometries beyond predefined shape families. While \method{}$_{\mathrm{Spline}}$ constructs rotated, asymmetric convex regions through learned piecewise-linear transformations, \method{}$_{\mathrm{ICNN}}$ learns general convex sublevel sets without prescribing a particular geometry. Both variants explicitly optimize prediction-region efficiency while maintaining finite-sample marginal coverage.

\paragraph{Flexible Multivariate Conformal Prediction.}
Recent multivariate conformal methods move beyond predefined geometric families by learning transformations or representations of the joint response distribution. \citet{feldman2023calibrated} use representation learning to construct calibrated multiple-output prediction regions, while optimal-transport approaches use multivariate ranks and vector quantiles to obtain geometry-adaptive regions \citet{thurin2025optimal,kondratyev2025neural}. Normalizing-flow methods provide another route to geometric flexibility: CONTRA and its residual-based extension learn transformations between the output or residual space and a simpler latent space \citet{fang2025contra}, while JAPAN constructs adaptive regions by conformalizing flow-based density scores \citet{english2026japan}. While these methods are not restricted to pre-defined geometries, they may construct highly sophisticated and non-convex prediction regions.  Such geometric flexibility, however, may result in overly complicated, non-convex, and and sharp regions less convenient for downstream decision making and optimization. \method{} addresses this trade-off by learning flexible, input-adaptive conformity scores that are convex in the residual space, allowing complex uncertainty geometries while ensuring that the resulting prediction regions remain convex without being restricted to predefined shape families.

\paragraph{Decision-Focused Uncertainty-Set Learning.}
Motivated by the role of uncertainty sets in downstream optimization, \citet{wang2023learning} learn contextual uncertainty sets by differentiating through robust optimization problems, while \citet{pmlr-v244-chenreddy24a} develop an end-to-end conditional robust optimization framework that jointly considers decision performance and uncertainty-set quality. More closely related to conformal uncertainty sets, \citet{yeh2025endtoend} use partially input-convex neural networks to integrate conformal calibration with robust optimization, and \citet{chen2026learning} optimize polyhedral conformal uncertainty sets directly for robust decision loss.
While these decision-focused approaches learn uncertainty sets tailored to specific downstream optimization problems, \method{} learns input-adaptive convex conformity scores independently of any downstream task. This task-agnostic approach explicitly targets the efficiency of prediction regions while preserving finite-sample marginal coverage through split conformal calibration. In particular, \method{} does not rely on decision-specific objectives, constraints, or differentiation through an optimization solver. Consequently, the learned uncertainty regions can be used across different downstream decision problems without task-specific retraining, although they are not explicitly optimized for any particular decision objective.

\paragraph{Conformity Score Learning and Prediction-Set Efficiency.}
Standard conformal regression commonly constructs conformity scores from absolute or normalized prediction residuals \citet{lei2018distribution}. CQR adapts conformal intervals to heteroscedastic and asymmetric uncertainty through learned conditional quantiles \citet{romano2019conformalized}, while normalized nonconformity measures have been parameterized and optimized explicitly for predictive efficiency \citet{bellotti2020constructing}. More general approaches optimize conformal predictors or prediction-set families directly, including conformal training \citet{stutz2022learning}, CP-Gen \citet{bai2022efficient}, and joint learning of predictive models and conformity-score parameters \citet{sharma2023pac}. Other methods learn adaptive score transformations or richer scoring representations to improve conditional behavior and efficiency \citet{colombo2024normalizing,plassier2025rectifying,dheur2025unified}. Recent work has also studied volume optimality relative to structured prediction-set classes \citet{gao2025volume}. These works demonstrate that conformity scores and prediction-set parameters can themselves be learned to improve adaptivity and efficiency. Building on this perspective, \method{} learns within an admissible class of convex conformity scores, enforcing convexity structurally while directly optimizing the size of the induced sublevel sets. This allows the prediction-region geometry to adapt to the input without restricting it to a predefined convex shape family.

\section{Theoretical Results and Proofs}
\label{app:theory}

\subsection{Marginal Coverage}
\label{app:marginal-coverage}

The conformity score is learned using $D_{\mathrm{score}}$ and is fixed before
the calibration set is used. Therefore, conditional on the learned
$S_{\bm{\theta}}$, standard split-conformal validity applies
\citep{lei2018distribution}.

Let $\widehat{\tau}_\alpha$ be the conformal threshold defined in
\eqref{eq:cp-threshold}, and let $\mathcal{C}_{\bm \theta}(\mathbf{x})$ be the corresponding
prediction set defined in \eqref{eq:cp-region}.

\begin{proposition}[Marginal coverage]
\label{prop:app-marginal-coverage}
Suppose that the calibration observations and a new test observation are
exchangeable, and that $S_{\bm{\theta}}$ is independent of $D_{\mathrm{cal}}$. Then
\[
\mathbb{P}
\left(
\mathbf{Y}\in\mathcal{C}_{\bm{\theta}}(\mathbf{X})
\right)
\geq 1-\alpha.
\]
If the calibration and test conformity scores are almost surely distinct, then
\[
1-\alpha
\leq
\mathbb{P}
\left(
\mathbf{Y}\in\mathcal{C}_{\bm{\theta}}(\mathbf{X})
\right)
\leq
1-\alpha+\frac{1}{n_{\mathrm{cal}}+1}.
\]
\end{proposition}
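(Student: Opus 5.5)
We argue conditionally on $\mathcal{D}_{\mathrm{score}}$ (and on the data used to train $F$), so that $S_{\bm\theta}$ and $F$ are fixed, deterministic functions. Because these objects are independent of $\mathcal{D}_{\mathrm{cal}}\cup\{(\mathbf{X},\mathbf{Y})\}$, exchangeability of the $n_{\mathrm{cal}}+1$ calibration and test pairs is preserved under this conditioning. Write $n=n_{\mathrm{cal}}$, let $s_i=S_{\bm\theta}(\mathbf{x}_i,\mathbf{y}_i-F(\mathbf{x}_i))$ for $i=1,\dots,n$, and let $s_{n+1}=S_{\bm\theta}(\mathbf{X},\mathbf{Y}-F(\mathbf{X}))$. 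Applying the same measurable map to each pair shows that $(s_1,\dots,s_{n+1})$ is an exchangeable vector. By \eqref{eq:cp-region}, the coverage event $\{\mathbf{Y}\in\mathcal{C}_{\bm\theta}(\mathbf{X})\}$ is exactly $\{s_{n+1}\le s_{(k)}\}$, where $k=\lceil (n+1)(1-\alpha)\rceil$ and $s_{(k)}$ is the $k$-th order statistic of the calibration scores alone. If $k>n$, set $\widehat\tau_\alpha=+\infty$, so coverage holds trivially.

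\textbf{Key step (rank argument).} For $k\le n$, $s_{n+1}\le s_{(k)}$ holds if and only if $s_{n+1}$ is among the $k$ smallest of the full collection $s_1,\dots,s_{n+1}$, with ties broken in favour of $s_{n+1}$. The lower bound therefore reduces to showing $\mathbb{P}(\mathrm{rank}(s_{n+1})\le k)\ge k/(n+1)$, where the rank is taken with ties counted favourably. By exchangeability, every index is equally likely to occupy each position. Let $N$ be the number of indices $j$ with $s_j\le s_{(k)}^{\mathrm{full}}$, the $k$-th order statistic of all $n+1$ scores. Then $N\ge k$ always, and by symmetry $\mathbb{P}(s_{n+1}\le s^{\mathrm{full}}_{(k)})=\mathbb{E}[N]/(n+1)\ge k/(n+1)$. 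I would also verify the equivalence $s_{n+1}\le s_{(k)}\iff s_{n+1}\le s^{\mathrm{full}}_{(k)}$, which follows by comparing the two order statistics. Since $k/(n+1)\ge 1-\alpha$, this gives the lower bound.

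\textbf{Upper bound.} Under almost-sure distinctness, $N=k$ almost surely and the rank of $s_{n+1}$ is uniform on $\{1,\dots,n+1\}$. The coverage probability is then exactly $k/(n+1)$, and $k<(n+1)(1-\alpha)+1$ yields the bound $1-\alpha+1/(n+1)$. Finally, we integrate over $\mathcal{D}_{\mathrm{score}}$ to pass from conditional to marginal coverage.

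The main obstacle is the tie handling in the equivalence between the calibration-only order statistic and the full-sample order statistic. All other steps follow the same argument as Proposition~\ref{prop:marginal-coverage}.
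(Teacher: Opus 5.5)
Your proof is correct and follows the same route as the paper: condition on the learned score so that it is fixed, use exchangeability of the $n_{\mathrm{cal}}+1$ conformity scores, and finish with the rank argument. It is more careful than the paper's sketch in two ways: your identity $\mathbb{P}(s_{n+1}\le s^{\mathrm{full}}_{(k)})=\mathbb{E}[N]/(n+1)$ handles ties in the lower bound, where the paper simply asserts that the rank is uniform (which requires distinct scores); and you make explicit that the score must be independent of the test point as well as of $\mathcal{D}_{\mathrm{cal}}$.
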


\begin{proof}
Let
\[
\widehat{\tau}_\alpha
=
s_{\left(
\left\lceil
(n_{\mathrm{cal}}+1)(1-\alpha)
\right\rceil
\right)}
\]
be the conformal quantile of the calibration scores
$s_i = S_{\bm{\theta}}(\mathbf{X}_i,\mathbf{Y}_i-F(\mathbf{X}_i))$.

Since $S_{\bm{\theta}}$ is fixed before calibration, exchangeability of the
calibration and test observations implies exchangeability of their conformity
scores. Therefore, the rank of the test score among the
$n_{\mathrm{cal}}+1$ calibration and test scores is uniform.

Consequently, the probability that the test score does not exceed
$\widehat{\tau}_\alpha$ is at least $1-\alpha$. Since
\[
\mathbf{Y}\in\mathcal{C}_{\bm{\theta}}(\mathbf{X})
\quad\Longleftrightarrow\quad
S_{\bm{\theta}}(\mathbf{X},\mathbf{Y}-F(\mathbf{X}))
\leq
\widehat{\tau}_\alpha,
\]
it follows that
\[
\mathbb{P}
\left(
\mathbf{Y}\in\mathcal{C}_{\bm{\theta}}(\mathbf{X})
\right)
\geq
1-\alpha.
\]

When the scores are almost surely distinct, the standard split-conformal
rank argument further gives
\[
\mathbb{P}
\left(
\mathbf{Y}\in\mathcal{C}_{\bm{\theta}}(\mathbf{X})
\right)
\leq
1-\alpha+\frac{1}{n_{\mathrm{cal}}+1},
\]
which establishes the result
\citep{lei2018distribution}.
\end{proof}

\subsection{Convexity and Point-Prediction Inclusion}
\label{app:set-convexity}

Let $S\in\mathcal{S}_{\mathrm{adm}}$, where $\mathcal{S}_{\mathrm{adm}}$ is defined in
\eqref{eq:admissible-class}. For any $\mathbf{x}\in\mathcal{X}$ and $\tau\geq0$, define
\[
\mathcal{C}(\mathbf{x};S,\tau)
=
\left\{
\mathbf{y}\in\mathcal{Y}
:
S\bigl(\mathbf{x},\mathbf{y}-F(\mathbf{x})\bigr)\leq\tau
\right\}.
\]

\begin{proposition}
\label{prop:app-set-convexity}
For any $S\in\mathcal{S}_{adm}$, $\mathbf{x}\in\mathcal{X}$, and $\tau\geq0$,
the induced prediction set $\mathcal{C}(\mathbf{x};S,\tau)$ is convex.
Moreover,
\[
F(\mathbf{x})\in\mathcal{C}(\mathbf{x};S,\tau).
\]
\end{proposition}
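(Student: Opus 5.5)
The plan is to write the prediction set as an affine preimage of a sublevel set of a convex function and then use that convexity is preserved under translation. Fix $\mathbf{x}\in\mathcal{X}$ and $\tau\geq0$, and write $g(\mathbf{r})=S(\mathbf{x},\mathbf{r})$. By the definition of $\mathcal{S}_{\mathrm{adm}}$ in \eqref{eq:admissible-class}, $g$ is convex and nonnegative on $\mathbb{R}^p$ with $g(\mathbf{0})=0$. Then
\[
\mathcal{C}(\mathbf{x};S,\tau)=\mathcal{Y}\cap\bigl(F(\mathbf{x})+L_\tau\bigr),\qquad L_\tau=\{\mathbf{r}\in\mathbb{R}^p: g(\mathbf{r})\leq\tau\}.
\]

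\textbf{Convexity.} First I show that $L_\tau$ is convex. Take $\mathbf{r}_1,\mathbf{r}_2\in L_\tau$ and $\lambda\in[0,1]$. Convexity of $g$ gives
\[
g\bigl(\lambda\mathbf{r}_1+(1-\lambda)\mathbf{r}_2\bigr)\leq\lambda g(\mathbf{r}_1)+(1-\lambda)g(\mathbf{r}_2)\leq\tau,
\]
so the convex combination lies in $L_\tau$. A translate of a convex set is convex, so $F(\mathbf{x})+L_\tau$ is convex. The same conclusion follows directly: for $\mathbf{y}_1,\mathbf{y}_2$ in the set, $\lambda\mathbf{y}_1+(1-\lambda)\mathbf{y}_2-F(\mathbf{x})=\lambda\mathbf{r}_1+(1-\lambda)\mathbf{r}_2$. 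The intersection of two convex sets is convex, so this gives convexity of $\mathcal{C}(\mathbf{x};S,\tau)$ provided $\mathcal{Y}$ is convex.

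\textbf{Point-prediction inclusion.} Take $\mathbf{y}=F(\mathbf{x})$. Its residual is $\mathbf{0}$, and $S(\mathbf{x},\mathbf{0})=0\leq\tau$ because $\tau\geq0$. Hence $F(\mathbf{x})$ satisfies the score constraint. The remaining requirement is $F(\mathbf{x})\in\mathcal{Y}$, which holds because the problem setup takes $F:\mathcal{X}\to\mathcal{Y}$.

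\textbf{The one subtle step.} The only delicate point is the role of $\mathcal{Y}$. If $\mathcal{Y}$ is not convex, the intersection need not be convex. I would handle this by stating the standing convention that either $\mathcal{Y}=\mathbb{R}^p$ (as in Proposition~\ref{prop:convex-score-representation_main}) or $\mathcal{Y}$ is convex. The substantive claim is the convexity of the residual-space sublevel set and its translate, and that part needs no assumption on $\mathcal{Y}$.
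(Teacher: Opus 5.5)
Your proof is correct and takes essentially the same route as the paper: you write $\mathbf{y}_\lambda - F(\mathbf{x})$ as the convex combination of the two residuals, apply convexity of $S(\mathbf{x},\cdot)$, and use $S(\mathbf{x},\mathbf{0})=0\le\tau$ to get $F(\mathbf{x})$ in the set. Your caveat about $\mathcal{Y}$ is well placed: the paper's argument concludes $\mathbf{y}_\lambda\in\mathcal{C}(\mathbf{x};S,\tau)$ without checking $\mathbf{y}_\lambda\in\mathcal{Y}$, so it tacitly needs $\mathcal{Y}$ to be convex (e.g.\ $\mathcal{Y}=\mathbb{R}^p$), which is exactly the convention you state explicitly.
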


\begin{proof}
Fix $\mathbf{x}\in\mathcal{X}$ and let
$\mathbf{y}_1,\mathbf{y}_2\in\mathcal{C}(\mathbf{x};S,\tau)$. By definition,
\[
S\bigl(\mathbf{x},\mathbf{y}_1-F(\mathbf{x})\bigr)\leq\tau,
\qquad
S\bigl(\mathbf{x},\mathbf{y}_2-F(\mathbf{x})\bigr)\leq\tau.
\]

For any $\lambda\in[0,1]$, define
\[
\mathbf{y}_\lambda
=
\lambda \mathbf{y}_1+(1-\lambda)\mathbf{y}_2.
\]
Then
\begin{align}
\mathbf{y}_\lambda-F(\mathbf{x})
&=
\lambda \mathbf{y}_1+(1-\lambda)\mathbf{y}_2-F(\mathbf{x})
\\
&=
\lambda\bigl(\mathbf{y}_1-F(\mathbf{x})\bigr)
+
(1-\lambda)\bigl(\mathbf{y}_2-F(\mathbf{x})\bigr).
\end{align}

Since $S(\mathbf{x},\cdot)$ is convex,
\begin{align}
S\bigl(\mathbf{x},\mathbf{y}_\lambda-F(\mathbf{x})\bigr)
&\leq
\lambda
S\bigl(\mathbf{x},\mathbf{y}_1-F(\mathbf{x})\bigr)
+
(1-\lambda)
S\bigl(\mathbf{x},\mathbf{y}_2-F(\mathbf{x})\bigr)
\\
&\leq
\lambda\tau+(1-\lambda)\tau
\\
&=
\tau.
\end{align}

Therefore,
\[
\mathbf{y}_\lambda\in\mathcal{C}(\mathbf{x};S,\tau),
\]
which proves that $\mathcal{C}(\mathbf{x};S,\tau)$ is convex.

Moreover, since $S(\mathbf{x},0)=0$ and $\tau\geq0$,
\[
S\bigl(\mathbf{x},F(\mathbf{x})-F(\mathbf{x})\bigr)
=
S(\mathbf{x},0)
=
0
\leq
\tau.
\]
Hence,
\[
F(\mathbf{x})\in\mathcal{C}(\mathbf{x};S,\tau).
\]

Therefore, the induced prediction set is nonempty and contains the point
prediction $F(\mathbf{x})$.
\end{proof}

\subsection{Admissibility of the \method{} Score Parameterizations}
\label{app:score_admissibility}

We now verify that the two \method{} score parameterizations satisfy the
admissibility conditions in~\eqref{eq:admissible-class}. Recall that
$S_{\bm{\theta}} \in \mathcal{S}_{\mathrm{adm}}$ if, for every
$\mathbf{x} \in \mathcal{X}$, the function $S_{\bm{\theta}}(\mathbf{x},\cdot)$ is convex,
nonnegative, and satisfies $S_{\bm{\theta}}(\mathbf{x},0)=0$.

\paragraph{\method{}$_\mathrm{Spline}$.}
Fix $\mathbf{x} \in \mathcal{X}$. For each coordinate $j$ and branch
$\pm$, the spline slopes $c_{j,\pm,k}$ are positive and
nondecreasing by construction. Hence, each one-sided spline function
$G_{j,\pm}:\mathbb{R}_+ \rightarrow \mathbb{R}_+$ is convex,
nondecreasing, nonnegative, and satisfies $G_{j,\pm}(0)=0$.

Define
\[
g_j(t)
=
G_{j,+}([t]_+)
+
G_{j,-}([-t]_+),
\]
where $[t]_+ = \max\{t,0\}$. Since $[t]_+$ and $[-t]_+$ are
convex, and $G_{j,+}$ and $G_{j,-}$ are convex and nondecreasing,
$g_j$ is convex and nonnegative, with $g_j(0)=0$.

For fixed $\mathbf{x}$, the rotation matrix $Q=Q(\mathbf{x})$ is independent of $\mathbf{r}$.
Writing $\mathbf{q}_j$ for its $j$th column,
\[
z_j = \mathbf{q}_j^\top \mathbf{r}
\]
is linear in $\mathbf{r}$. Therefore, $g_j(\mathbf{q}_j^\top \mathbf{r})$ is convex in $\mathbf{r}$.
The spline score can be written as
\[
S_{\bm{\theta}}^{\mathrm{Spline}}(\mathbf{x},\mathbf{r})
=
\max_{1\leq j\leq p} g_j(\mathbf{q}_j^\top \mathbf{r}).
\]
Since a pointwise maximum of convex functions is convex,
$S_{\bm{\theta}}^{\mathrm{Spline}}(\mathbf{x},\cdot)$ is convex. Moreover,
each $g_j$ is nonnegative and vanishes at zero, so
\[
S_{\bm{\theta}}^{\mathrm{Spline}}(\mathbf{x},\mathbf{r})\geq 0,
\qquad
S_{\bm{\theta}}^{\mathrm{Spline}}(\mathbf{x},0)=0.
\]
Thus,
\[
S_{\bm{\theta}}^{\mathrm{Spline}}
\in \mathcal{S}_{\mathrm{adm}}.
\]
The proof follows a similar procedure for the case with $\ell_1$ aggregation. The score in this case is the sum of the same functions $g_j(\mathbf{q}_j^\top \mathbf{r})$; a sum of convex, nonnegative functions that vanish at zero is again convex, nonnegative, and vanishes at zero, so $\method{}_{\mathrm{Spline}\text{-}\ell_1}$ is admissible as well.

\paragraph{\method{}$_\mathrm{ICNN}$.}
Fix $\mathbf{x} \in \mathcal{X}$. By the ICNN construction, the
nonnegativity constraints on the hidden-to-hidden weights together with
the use of convex, nondecreasing activation functions ensure that
\[
\mathbf{r} \mapsto G_{\bm{\theta}}(\mathbf{x},\mathbf{r})
\]
is convex \citep{amos2017input}.

By the first-order inequality for differentiable convex functions,
\[
G_{\bm{\theta}}(\mathbf{x},\mathbf{r})
\geq
G_{\bm{\theta}}(\mathbf{x},0)
+
\nabla_{\mathbf{r}} G_{\bm{\theta}}(\mathbf{x},0)^\top \mathbf{r}.
\]
Hence, the corresponding Bregman divergence
\[
D_G(\mathbf{r},0)
=
G_{\bm{\theta}}(\mathbf{x},\mathbf{r})
-
G_{\bm{\theta}}(\mathbf{x},0)
-
\nabla_{\mathbf{r}} G_{\bm{\theta}}(\mathbf{x},0)^\top \mathbf{r}
\]
is nonnegative and satisfies $D_G(0,0)=0$.

Moreover, $D_G(\cdot,0)$ is convex because it is obtained from the
convex function $G_{\bm{\theta}}(\mathbf{x},\cdot)$ by subtracting an affine function.
Therefore,
\[
S_{\bm{\theta}}^{\mathrm{ICNN}}(\mathbf{x},\mathbf{r})
=
D_G(\mathbf{r},0)
+
\epsilon_{\mathrm{ICNN}}\|\mathbf{r}\|_2^2
\]
is convex and nonnegative for $\epsilon_{\mathrm{ICNN}}>0$. Evaluating
at $\mathbf{r}=\mathbf{0}$ gives
\[
S_{\bm{\theta}}^{\mathrm{ICNN}}(\mathbf{x},0)=0.
\]
Thus,
\[
S_{\bm{\theta}}^{\mathrm{ICNN}}
\in \mathcal{S}_{\mathrm{adm}}.
\]

\subsection{Convex-Score Representation}
\label{app:convex-score-representation}

The following result establishes the converse relationship between convex
prediction regions and the admissible score class.

\begin{proposition}[Convex-score representation]
\label{prop:convex-score-representation}
Let $\mathcal{S}_{\mathrm{adm}}$ denote the admissible score class defined in
\eqref{eq:admissible-class}.

For any $S\in\mathcal{S}_{\mathrm{adm}}$, $\mathbf{x}\in\mathcal{X}$, and $\tau\geq0$, the induced
prediction set
\[
\mathcal{C}(\mathbf{x};S,\tau)
=
\left\{
\mathbf{y}\in\mathbb{R}^p
:
S\bigl(\mathbf{x},\mathbf{y}-F(\mathbf{x})\bigr)\leq\tau
\right\}
\]
is convex and contains $F(\mathbf{x})$.

Conversely, fix $\mathbf{x}\in\mathcal{X}$ and let
$C(\mathbf{x})\subset\mathbb{R}^p$ be a compact convex set satisfying
\[
F(\mathbf{x})\in\operatorname{int}C(\mathbf{x}).
\]
Then there exists a nonnegative convex score
\[
S_C(\mathbf{x},\cdot):\mathbb{R}^p\rightarrow[0,\infty)
\]
satisfying
\[
S_C(\mathbf{x},0)=0
\]
such that
\[
C(\mathbf{x})
=
\left\{
\mathbf{y}\in\mathbb{R}^p
:
S_C\bigl(\mathbf{x},\mathbf{y}-F(\mathbf{x})\bigr)\leq1
\right\}.
\]

In particular, such a score is given by the Minkowski gauge of the translated
set
\[
K(\mathbf{x}):=C(\mathbf{x})-F(\mathbf{x}),
\]
namely
\[
S_C(\mathbf{x},\mathbf{r})
=
\gamma_{K(\mathbf{x})}(\mathbf{r})
:=
\inf
\left\{
t>0:\mathbf{r}\in tK(\mathbf{x})
\right\}.
\]
\end{proposition}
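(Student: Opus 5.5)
The forward direction is exactly Proposition~\ref{prop:app-set-convexity}, so I would cite it and spend the effort on the converse. Fix $\mathbf{x}$ and set $K:=K(\mathbf{x})=C(\mathbf{x})-F(\mathbf{x})$. Since translation preserves compactness, convexity and interiors, $K$ is a compact convex set with $\mathbf{0}\in\operatorname{int}K$. Hence there are radii $0<\rho\le R<\infty$ with
\[
B(\mathbf{0},\rho)\subseteq K\subseteq B(\mathbf{0},R).
\]
I would work with the gauge $\gamma_K(\mathbf{r})=\inf\{t>0:\mathbf{r}\in tK\}$ and verify four things in turn: finiteness and nonnegativity, $\gamma_K(\mathbf{0})=0$, convexity, and the sublevel identity $\{\gamma_K\le1\}=K$.

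\textbf{Finiteness, nonnegativity and the origin.} For $\mathbf{r}\neq\mathbf{0}$ we have $\mathbf{r}\in tK$ whenever $t\ge\|\mathbf{r}\|_2/\rho$, so $\gamma_K(\mathbf{r})\le\|\mathbf{r}\|_2/\rho<\infty$. Nonnegativity is immediate because the infimum runs over $t>0$. Since $\mathbf{0}\in tK$ for every $t>0$, we get $\gamma_K(\mathbf{0})=0$.

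\textbf{Convexity.} I would prove that $\gamma_K$ is positively homogeneous and subadditive, since together these give convexity.
\begin{itemize}
\item Homogeneity, $\gamma_K(\lambda\mathbf{r})=\lambda\gamma_K(\mathbf{r})$ for $\lambda>0$, follows from $\lambda\mathbf{r}\in\lambda tK$ if and only if $\mathbf{r}\in tK$.
\item For subadditivity, take $\mathbf{r}_1\in t_1K$ and $\mathbf{r}_2\in t_2K$ with $t_i>\gamma_K(\mathbf{r}_i)$. Convexity of $K$ gives
\[
\mathbf{r}_1+\mathbf{r}_2=(t_1+t_2)\Bigl(\tfrac{t_1}{t_1+t_2}\tfrac{\mathbf{r}_1}{t_1}+\tfrac{t_2}{t_1+t_2}\tfrac{\mathbf{r}_2}{t_2}\Bigr)\in(t_1+t_2)K.
\]
Letting $t_i\downarrow\gamma_K(\mathbf{r}_i)$ yields $\gamma_K(\mathbf{r}_1+\mathbf{r}_2)\le\gamma_K(\mathbf{r}_1)+\gamma_K(\mathbf{r}_2)$.
\end{itemize}
Two facts are used implicitly here and should be stated: $K$ convex with $\mathbf{0}\in K$ implies $sK\subseteq tK$ for $0<s\le t$, and the infimum is attained in the limit. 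With these, $\gamma_K$ is convex, and $S_C(\mathbf{x},\cdot):=\gamma_K(\cdot)$ lies in $\mathcal{S}_{\mathrm{adm}}$ pointwise in $\mathbf{x}$.

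\textbf{Sublevel identity.} This is the main obstacle, specifically the inclusion $\{\gamma_K\le1\}\subseteq K$, because it needs closedness of $K$.
\begin{itemize}
\item If $\mathbf{r}\in K$, then $t=1$ is admissible, so $\gamma_K(\mathbf{r})\le1$.
\item Conversely, suppose $\gamma_K(\mathbf{r})\le1$. By the monotonicity $sK\subseteq tK$, we have $\mathbf{r}\in tK$ for every $t>1$, so $\mathbf{r}/t\in K$. Letting $t\downarrow1$ gives $\mathbf{r}/t\to\mathbf{r}$, and since $K$ is closed, $\mathbf{r}\in K$.
\end{itemize}
Compactness is what supplies closedness here. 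Its boundedness part is not needed for this identity; it only guarantees that $\gamma_K(\mathbf{r})\ge\|\mathbf{r}\|_2/R>0$ for $\mathbf{r}\neq\mathbf{0}$, i.e.\ that the gauge is positive definite.

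Translating back, $\mathbf{y}\in C(\mathbf{x})$ if and only if $\mathbf{y}-F(\mathbf{x})\in K$, which holds if and only if $S_C(\mathbf{x},\mathbf{y}-F(\mathbf{x}))\le1$. This is the claimed representation.
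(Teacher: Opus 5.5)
Your proposal is correct and takes essentially the same route as the paper: you cite Proposition~\ref{prop:app-set-convexity} for the forward direction, then verify finiteness via an inscribed ball, $\gamma_K(\mathbf{0})=0$, convexity from positive homogeneity plus subadditivity, and the unit-sublevel identity using closedness of $K$. The only differences are minor: you merge the paper's two cases ($\gamma_K(\mathbf{r})<1$ and $\gamma_K(\mathbf{r})=1$) into one limit argument via the monotonicity $sK\subseteq tK$, and you add the correct (though not required) remark that boundedness of $K$ makes the gauge positive definite.
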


\begin{proof}
The first direction follows from
Proposition~\ref{prop:app-set-convexity}. It therefore remains to establish
the converse.

Fix $\mathbf{x}\in\mathcal{X}$ and introduce the corresponding prediction region in
residual coordinates,
\[
K
:=
C(\mathbf{x})-F(\mathbf{x})
=
\left\{
\mathbf{y}-F(\mathbf{x}):\mathbf{y}\in C(\mathbf{x})
\right\}.
\]

Translation preserves compactness and convexity. Moreover,
\[
F(\mathbf{x})\in\operatorname{int}C(\mathbf{x})
\]
implies
\[
0\in\operatorname{int}K.
\]

Define the Minkowski gauge of $K$ by
\[
\gamma_K(\mathbf{r})
=
\inf
\left\{
t>0:\mathbf{r}\in tK
\right\}.
\]

It remains to verify that $\gamma_K$ is finite and nonnegative,
satisfies $\gamma_K(0)=0$, is convex, and has $K$ as its unit
sublevel set.

Since $0\in\operatorname{int}K$, there exists $\rho>0$ such that the closed disk centered at then origin with radius $\rho$ verifies
\[
B_2(0,\rho)\subseteq K.
\]

For an arbitrary $\mathbf{r}\in\mathbb{R}^p$, any
\[
t\geq\frac{\|\mathbf{r}\|_2}{\rho}
\]
satisfies
\[
\frac{\mathbf{r}}{t}\in B_2(0,\rho)\subseteq K.
\]
Consequently,
\[
\mathbf{r}\in tK.
\]

Thus, the set over which the infimum defining $\gamma_K(\mathbf{r})$ is taken
is nonempty for every $\mathbf{r}\in\mathbb{R}^p$, and
\[
0\leq\gamma_K(\mathbf{r})<\infty.
\]

Since $0\in tK$ for every $t>0$,
\[
\gamma_K(0)=0.
\]

Positive homogeneity follows directly from the definition. For any $c>0$,
\begin{align*}
\gamma_K(c\mathbf{r})
&=
\inf\{t>0:c\mathbf{r}\in tK\}
\\
&=
\inf\{cs:s>0,\ \mathbf{r}\in sK\}
\\
&=
c\,\gamma_K(\mathbf{r}).
\end{align*}
For $c=0$, the same identity follows from $\gamma_K(0)=0$. Hence,
\[
\gamma_K(c\mathbf{r})
=
c\,\gamma_K(\mathbf{r}),
\qquad
c\geq0.
\]

Subadditivity follows from convexity of $K$. Let
$\mathbf{r}_1,\mathbf{r}_2\in\mathbb{R}^p$ and let $\varepsilon>0$. By the definition of
the infimum, there exist feasible $a,b>0$ satisfying
\[
\mathbf{r}_1\in aK,
\qquad
a<\gamma_K(\mathbf{r}_1)+\varepsilon,
\]
and
\[
\mathbf{r}_2\in bK,
\qquad
b<\gamma_K(\mathbf{r}_2)+\varepsilon.
\]

Hence, for some $\mathbf{k}_1,\mathbf{k}_2\in K$,
\[
\mathbf{r}_1=a\mathbf{k}_1,
\qquad
\mathbf{r}_2=b\mathbf{k}_2.
\]
It follows that
\[
\mathbf{r}_1+\mathbf{r}_2
=
(a+b)
\left(
\frac{a}{a+b}\mathbf{k}_1
+
\frac{b}{a+b}\mathbf{k}_2
\right).
\]

The coefficients in parentheses are nonnegative and sum to one.
Convexity of $K$ therefore implies
\[
\frac{a}{a+b}\mathbf{k}_1
+
\frac{b}{a+b}\mathbf{k}_2
\in K.
\]
Consequently,
\[
\mathbf{r}_1+\mathbf{r}_2\in(a+b)K,
\]
and hence
\[
\gamma_K(\mathbf{r}_1+\mathbf{r}_2)
\leq
a+b
<
\gamma_K(\mathbf{r}_1)
+
\gamma_K(\mathbf{r}_2)
+
2\varepsilon.
\]

Since $\varepsilon>0$ is arbitrary,
\[
\gamma_K(\mathbf{r}_1+\mathbf{r}_2)
\leq
\gamma_K(\mathbf{r}_1)
+
\gamma_K(\mathbf{r}_2).
\]

Combining subadditivity with positive homogeneity yields convexity.
For any $\lambda\in[0,1]$,
\begin{align*}
\gamma_K\bigl(
\lambda \mathbf{r}_1+(1-\lambda)\mathbf{r}_2
\bigr)
&\leq
\gamma_K(\lambda \mathbf{r}_1)
+
\gamma_K((1-\lambda)\mathbf{r}_2)
\\
&=
\lambda\gamma_K(\mathbf{r}_1)
+
(1-\lambda)\gamma_K(\mathbf{r}_2).
\end{align*}
Thus $\gamma_K$ is convex.

It remains to establish that $K$ is exactly the unit sublevel set of
$\gamma_K$.

If $\mathbf{r}\in K$, then $\mathbf{r}\in1K$, so
\[
\gamma_K(\mathbf{r})\leq1.
\]
Therefore,
\[
K
\subseteq
\left\{
\mathbf{r}:\gamma_K(\mathbf{r})\leq1
\right\}.
\]

For the reverse inclusion, suppose that
\[
\gamma_K(\mathbf{r})\leq1.
\]
By the definition of the infimum, there exists a sequence
$\{t_n\}_{n\geq1}$ such that
\[
\mathbf{r}\in t_nK,
\qquad
t_n\longrightarrow\gamma_K(\mathbf{r}).
\]

If $\gamma_K(\mathbf{r})<1$, then $t_n\leq1$ for all sufficiently large $n$.
Since $0\in K$ and $K$ is convex,
\[
tK\subseteq K,
\qquad
t\in[0,1].
\]
Hence,
\[
\mathbf{r}\in t_nK\subseteq K
\]
for all sufficiently large $n$, and therefore $\mathbf{r}\in K$.

If $\gamma_K(\mathbf{r})=1$, write
\[
\mathbf{r}=t_n\mathbf{k}_n,
\qquad
\mathbf{k}_n\in K.
\]
Then
\[
\mathbf{k}_n=\frac{\mathbf{r}}{t_n}\longrightarrow \mathbf{r}.
\]
Since $K$ is compact and therefore closed,
\[
\mathbf{r}\in K.
\]

Thus,
\[
\left\{
\mathbf{r}:\gamma_K(\mathbf{r})\leq1
\right\}
\subseteq K.
\]
Combining the two inclusions gives
\[
K
=
\left\{
\mathbf{r}:\gamma_K(\mathbf{r})\leq1
\right\}.
\]

Finally, define
\[
S_C(\mathbf{x},\mathbf{r})
:=
\gamma_K(\mathbf{r}).
\]
The preceding arguments establish that $S_C(\mathbf{x},\cdot)$ is finite,
nonnegative, and convex, and that
\[
S_C(\mathbf{x},0)=0.
\]

Moreover,
\begin{align*}
\left\{
\mathbf{y}:
S_C\bigl(\mathbf{x},\mathbf{y}-F(\mathbf{x})\bigr)\leq1
\right\}
&=
F(\mathbf{x})
+
\left\{
\mathbf{r}:\gamma_K(\mathbf{r})\leq1
\right\}
\\
&=
F(\mathbf{x})+K
\\
&=
C(\mathbf{x}).
\end{align*}

This establishes the converse and completes the proof.
\end{proof}

\subsection{Boundedness and well-posedness of the prediction regions.}\label{Boundedness and well-posedness of the prediction regions}
In addition to convexity and inclusion of the point prediction, the proposed
score families yield bounded finite sublevel sets. This ensures that the
prediction-region volumes and boundary distances used below are well defined.

For \method{}$_{\mathrm{Spline}\text{-}\ell_\infty}$, recall that every one-sided spline branch is
zero-anchored and has slope bounded below by
$\varepsilon_{\mathrm{slope}}>0$. Hence, for every $s\geq0$,
\[
G_{j,\pm}(s)
\geq
\varepsilon_{\mathrm{slope}} s.
\]
Let
\[
\mathbf{z}
=
Q_{\bm{\theta}}\bigl(\mathbf{h}_{\bm{\theta}}(\mathbf{x})\bigr)^\top \mathbf{r}.
\]
Using the definition of the spline score and
$[z_j]_+ + [-z_j]_+ = |z_j|$, we obtain
\begin{align}
S_{\bm{\theta}}^{\mathrm{Spline}}(\mathbf{x},\mathbf{r})
&=
\max_{1\leq j\leq p}
\left\{
G_{j,+}([z_j]_+)
+
G_{j,-}([-z_j]_+)
\right\}
\\
&\geq
\varepsilon_{\mathrm{slope}}
\max_{1\leq j\leq p}|z_j|
\\
&=
\varepsilon_{\mathrm{slope}}\|\mathbf{z}\|_\infty.
\end{align}
Since $Q_{\bm{\theta}}(h_{\bm{\theta}}(\mathbf{x}))$ is orthogonal,
$\|\mathbf{z}\|_2=\|\mathbf{r}\|_2$, and therefore
\[
\|\mathbf{z}\|_\infty
\geq
\frac{\|\bm{z}\|_2}{\sqrt p}
=
\frac{\|\mathbf{r}\|_2}{\sqrt p}.
\]
Consequently,
\[
S_{\bm{\theta}}^{\mathrm{Spline}}(\mathbf{x},\mathbf{r})
\geq
\frac{\varepsilon_{\mathrm{slope}}}{\sqrt p}
\|\mathbf{r}\|_2.
\]
Thus, for every finite threshold $\tau$,
\[
S_{\bm{\theta}}^{\mathrm{Spline}}(\mathbf{x},\mathbf{r})\leq\tau
\quad\Longrightarrow\quad
\|\mathbf{r}\|_2
\leq
\frac{\sqrt p\,\tau}
{\varepsilon_{\mathrm{slope}}},
\]
so every finite spline sublevel set is bounded. Since the spline score is
continuous, these sublevel sets are also closed and hence compact. Moreover,
the strictly positive branch slopes imply that each positive and negative
coordinate boundary is uniquely defined. The proof for \method{}$_{\mathrm{Spline}\text{-}\ell_1}$ follows a similar procedure.

For $\method_{\mathrm{ICNN}}$, convexity of $G_{\bm{\theta}}(\mathbf{x},\cdot)$ gives the
first-order inequality
\[
G_{\bm{\theta}}(\mathbf{x},\mathbf{r})
\geq
G_{\bm{\theta}}(\mathbf{x},0)
+
\nabla_{\mathbf{r}} G_{\bm{\theta}}(\mathbf{x},0)^\top \mathbf{r}.
\]
Hence, from the definition of the ICNN score,
\begin{align}
S_{\bm{\theta}}^{\mathrm{ICNN}}(\mathbf{x},\mathbf{r})
&=
G_{\bm{\theta}}(\mathbf{x},\mathbf{r})
-
G_{\bm{\theta}}(\mathbf{x},0)
-
\nabla_{\mathbf{r}} G_{\bm{\theta}}(\mathbf{x},0)^\top \mathbf{r}
+
\varepsilon_{\mathrm{ICNN}}\|\mathbf{r}\|_2^2
\\
&\geq
\varepsilon_{\mathrm{ICNN}}\|\mathbf{r}\|_2^2.
\end{align}
Therefore,
\[
S_{\bm{\theta}}^{\mathrm{ICNN}}(\mathbf{x},\mathbf{r})\leq\tau
\quad\Longrightarrow\quad
\|\mathbf{r}\|_2
\leq
\sqrt{\frac{\tau}{\varepsilon_{\mathrm{ICNN}}}},
\]
so every finite ICNN sublevel set is bounded and, by continuity, compact.
The quadratic term also makes
$S_{\bm{\theta}}^{\mathrm{ICNN}}(\mathbf{x},\cdot)$ strongly convex, with its unique minimum
at $\mathbf{r}=\bm{0}$. Fix a unit direction $\mathbf{u}\in\mathbb S^{p-1}$ and define
\[
\psi_{\mathbf{u}}(\rho)
=
S_{\bm{\theta}}^{\mathrm{ICNN}}(\mathbf{x},\rho \mathbf{u}),
\qquad
\rho\geq0.
\]
Then $\psi_{\mathbf{u}}(0)=0$, $\psi_{\mathbf{u}}$ is strictly increasing for $\rho>0$, and
\[
\psi_{\mathbf{u}}(\rho)
\geq
\varepsilon_{\mathrm{ICNN}}\rho^2
\longrightarrow
\infty
\qquad
\text{as }\rho\rightarrow\infty.
\]
Consequently, for every $\tau>0$ and every
$\mathbf{u}\in\mathbb S^{p-1}$, there exists a unique finite radius
$\rho_{\bm{\theta}}(\mathbf{x},\mathbf{u};\tau)>0$ satisfying
\[
S_{\bm{\theta}}^{\mathrm{ICNN}}
\bigl(
\mathbf{x},\rho_{\bm{\theta}}(\mathbf{x},\mathbf{u};\tau)\mathbf{u}
\bigr)
=
\tau.
\]
Thus, the coordinatewise spline boundaries and the radial ICNN boundaries used
in the volume computations below are well defined.

\section{Optimization and Differentiation Details}
\label{app:optimization-details}

\subsection{Differentiability of the Score-Learning Objective}
\label{app:loss-differentiability}

The score parameters are optimized using the objective in~\eqref{eq:score-objective},
\begin{align}
\mathcal L(\bm{\theta})
=
\mathcal L_{\mathrm{size}}(\bm{\theta};\tau_{\mathrm{tr}})
+
\lambda_{\mathrm{miss}}
\mathcal L_{\mathrm{miss}}(\bm{\theta};\tau_{\mathrm{tr}})
+
\lambda_{\mathrm{cov}}
\mathcal L_{\mathrm{cov}}(\bm{\theta};\tau_{\mathrm{tr}}).
\label{eq:appendix-full-loss}
\end{align}

This section describes how gradients of all three terms are obtained. Both score architectures are differentiable almost everywhere with respect to their parameters. The spline score, ReLU feature encoder, and ReLU miss penalty introduce piecewise-differentiable operations. Hence,  standard subgradients are used at the remaining kink points.

For a mini-batch
$\mathcal B=\{(\mathbf{x}_i,\mathbf{r}_i)\}_{i=1}^{|\mathcal B|}$, let
\begin{align}
s_i(\bm{\theta})
=
S_{\bm{\theta}}(\mathbf{x}_i,\mathbf{r}_i).
\end{align}

\paragraph{Miss penalty.}

The miss term is
\begin{align}
\mathcal L_{\mathrm{miss}}
&=
\frac{1}{|\mathcal B|}
\sum_{i=1}^{|\mathcal B|}
\operatorname{ReLU}
\left(
\frac{s_i(\bm{\theta})}{\tau_{\mathrm{tr}}}-1
\right).
\label{eq:appendix-miss-loss}
\end{align}

Away from $s_i(\bm{\theta})=\tau_{\mathrm{tr}}$, its gradient is
\begin{align}
\nabla_{\bm{\theta}}\mathcal L_{\mathrm{miss}}
&=
\frac{1}{|\mathcal B|\,\tau_{\mathrm{tr}}}
\sum_{i=1}^{|\mathcal B|}
\mathbb I
\left\{
s_i(\bm{\theta})>\tau_{\mathrm{tr}}
\right\}
\nabla_{\bm{\theta}} s_i(\bm{\theta}).
\label{eq:miss-loss-gradient}
\end{align}

At $s_i(\bm{\theta})=\tau_{\mathrm{tr}}$, ReLU is nondifferentiable and a
subgradient is used.

\paragraph{Coverage penalty.}

Define the smoothed inclusion indicator
\begin{align}
c_i(\bm{\theta})
&=
\sigma
\left(
\frac{\tau_{\mathrm{tr}}-s_i(\bm{\theta})}{T}
\right),
\qquad
\widehat{\operatorname{Cov}}_{\bm{\theta}}
=
\frac{1}{|\mathcal B|}
\sum_{i=1}^{|\mathcal B|}
c_i(\bm{\theta}).
\end{align}

Since the sigmoid is smooth,
\begin{align}
\nabla_{\bm{\theta}}\widehat{\operatorname{Cov}}_{\bm{\theta}}
&=
-
\frac{1}{|\mathcal B|T}
\sum_{i=1}^{|\mathcal B|}
c_i(\bm{\theta})
\bigl(1-c_i(\bm{\theta})\bigr)
\nabla_{\bm{\theta}} s_i(\bm{\theta}).
\label{eq:smooth-coverage-gradient}
\end{align}

The coverage loss
\begin{align}
\mathcal L_{\mathrm{cov}}
&=
\left[
\widehat{\operatorname{Cov}}_{\bm{\theta}}-(1-\alpha)
\right]^2
\end{align}
therefore has gradient
\begin{align}
\nabla_{\bm{\theta}}\mathcal L_{\mathrm{cov}}
&=
2
\left[
\widehat{\operatorname{Cov}}_{\bm{\theta}}-(1-\alpha)
\right]
\nabla_{\bm{\theta}}\widehat{\operatorname{Cov}}_{\bm{\theta}}.
\label{eq:coverage-loss-gradient}
\end{align}

\paragraph{Prediction-region size.}

Let
\begin{align}
V_{\bm{\theta}}(\mathbf{x};\tau)
=
\left|
\mathcal C_{\bm{\theta}}(\mathbf{x};\tau)
\right|
\end{align}
denote the $p$-dimensional volume of the prediction region.

Since translating the residual region by $F(\mathbf{x})$ does not change its volume,
$V_{\bm{\theta}}$ can be computed directly in residual space. The size term is
\begin{align}
\mathcal L_{\mathrm{size}}
&=
\left[
\frac{1}{|\mathcal B|}
\sum_{i=1}^{|\mathcal B|}
V_{\bm{\theta}}(\mathbf{x}_i;\tau_{\mathrm{tr}})
\right]^{1/p}.
\label{eq:appendix-size-loss}
\end{align}

Let
\begin{align}
\overline V_{\bm{\theta}}
=
\frac{1}{|\mathcal B|}
\sum_{i=1}^{|\mathcal B|}
V_{\bm{\theta}}(\mathbf{x}_i;\tau_{\mathrm{tr}}).
\end{align}

For $\overline V_{\bm{\theta}}>0$,
\begin{align}
\nabla_{\bm{\theta}}\mathcal L_{\mathrm{size}}
&=
\frac{1}{p}
\overline V_{\bm{\theta}}^{\,1/p-1}
\frac{1}{|\mathcal B|}
\sum_{i=1}^{|\mathcal B|}
\nabla_{\bm{\theta}}
V_{\bm{\theta}}(\mathbf{x}_i;\tau_{\mathrm{tr}}).
\label{eq:size-loss-gradient}
\end{align}

It therefore remains to show how $V_{\bm{\theta}}(\mathbf{x};\tau)$ and its gradient are
obtained for each proposed score family.

\subsection{Spline Prediction-Region Volume and Differentiation}
\label{app:spline-volume}

For $\method_{\mathrm{Spline}}$, recall that the residual is represented in the
rotated coordinates
\begin{align}
\mathbf{z}
=
Q_{\bm{\theta}}\bigl(h_{\bm{\theta}}(\mathbf{x})\bigr)^\top \mathbf{r} .
\end{align}
At a fixed threshold $\tau$, the corresponding sublevel set is an asymmetric
axis-aligned hyperrectangle in $\mathbf{z}$-space. The strictly positive spline slopes
ensure that each coordinate boundary is finite and uniquely defined.

Let $\rho_{j,+}(\mathbf{x};\tau)$ and $\rho_{j,-}(\mathbf{x};\tau)$ denote the positive and
negative boundary distances along coordinate $j$. They are the unique
solutions of
\begin{align}
G_{j,+}
\bigl(
\rho_{j,+}(\mathbf{x};\tau);
h_{\bm{\theta}}(\mathbf{x}),\bm{\theta}
\bigr)
&=
\tau,
\\
G_{j,-}
\bigl(
\rho_{j,-}(\mathbf{x};\tau);
h_{\bm{\theta}}(\mathbf{x}),\bm{\theta}
\bigr)
&=
\tau .
\label{eq:spline-boundary-definitions}
\end{align}
Therefore, the width of the region along rotated coordinate $j$ is
$\rho_{j,+}(\mathbf{x};\tau)+\rho_{j,-}(\mathbf{x};\tau)$. Since
$Q_{\bm{\theta}}(h_{\bm{\theta}}(\mathbf{x}))$ is orthogonal, the transformation from $\mathbf{r}$ to $\mathbf{z}$
preserves $p$-dimensional volume, and hence
\begin{align}
V_{\bm{\theta}}(\mathbf{x};\tau)
&=
\prod_{j=1}^{p}
\left(
\rho_{j,+}(\mathbf{x};\tau)
+
\rho_{j,-}(\mathbf{x};\tau)
\right).
\label{eq:spline-volume}
\end{align}
Thus, for the spline parameterization, the prediction-region volume is
available exactly from the coordinatewise boundary distances.

\paragraph{Differentiation through the spline boundaries.}
Within an active spline segment, each boundary radius is differentiable with
respect to the score parameters. Implicitly differentiating \eqref{eq:spline-boundary-definitions} gives
\begin{align}
\nabla_{\bm{\theta}}\rho_{j,\pm}(\mathbf{x};\tau)
&=
-
\frac{
\nabla_{\bm{\theta}}
G_{j,\pm}
\bigl(
\rho_{j,\pm}(\mathbf{x};\tau);
h_{\bm{\theta}}(\mathbf{x}),\bm{\theta}
\bigr)
}{
\partial_\rho
G_{j,\pm}
\bigl(
\rho_{j,\pm}(\mathbf{x};\tau);
h_{\bm{\theta}}(\mathbf{x}),\bm{\theta}
\bigr)
}.
\label{eq:spline-radius-gradient}
\end{align}
The numerator differentiates the spline branch with respect to $\bm{\theta}$
while holding the boundary radius fixed, and therefore includes the
dependence through the feature encoder. The denominator is the slope of the
active spline segment and is strictly positive by construction. At spline
knots, the score is piecewise differentiable and the corresponding
subgradient is used.

Differentiating \eqref{eq:spline-volume} yields
\begin{align}
\nabla_{\bm{\theta}} V_{\bm{\theta}}(\mathbf{x};\tau)
&=
V_{\bm{\theta}}(\mathbf{x};\tau)
\sum_{j=1}^{p}
\frac{
\nabla_{\bm{\theta}}\rho_{j,+}(\mathbf{x};\tau)
+
\nabla_{\bm{\theta}}\rho_{j,-}(\mathbf{x};\tau)
}{
\rho_{j,+}(\mathbf{x};\tau)
+
\rho_{j,-}(\mathbf{x};\tau)
}.
\label{eq:spline-volume-gradient}
\end{align}
Consequently, the exact spline-region volume is differentiable almost
everywhere with respect to the score parameters and can be differentiated
during optimization through its implicitly defined coordinate boundaries.

The $\ell_1$ variant $\method{}_{\mathrm{Spline}\text{-}\ell_1}$ uses the same rotation and one-sided spline branches but sums the coordinate scores instead of taking their maximum, so its prediction region is an asymmetric polytope rather than a hyperrectangle. Its volume is also available exactly: after the same change of coordinates, it reduces to an integral over a simplex of the inverse segment slopes, which evaluates to a finite sum of terms depending only on the threshold, the knot positions, and the slopes. This expression depends smoothly on the segment slopes, so the $\ell_1$ volume is differentiable  with respect to the score parameters and can be differentiated directly during optimization.

\subsection{ICNN Prediction-Region Volume and Differentiation}
\label{app:icnn-volume}

Unlike $\method_{\mathrm{Spline}}$, the
$\method_{\mathrm{ICNN}}$ region is generally not coordinatewise separable.
Its volume is therefore computed from its radial boundary. Fix $\mathbf{x}$ and
$\tau>0$, and define the residual-space sublevel set
\begin{align}
K_{\bm{\theta}}(\mathbf{x};\tau)
=
\left\{
\mathbf{r} :
S_{\bm{\theta}}^{\mathrm{ICNN}}(\mathbf{x},\mathbf{r})\leq\tau
\right\}.
\end{align}
Translation by $F(\mathbf{x})$ preserves volume, so
$V_{\bm{\theta}}(\mathbf{x};\tau)=|K_{\bm{\theta}}(\mathbf{x};\tau)|$.

Strong convexity of $S_{\bm{\theta}}^{\mathrm{ICNN}}(\mathbf{x},\cdot)$, together with
$S_{\bm{\theta}}^{\mathrm{ICNN}}(\mathbf{x},0)=0$, implies that for every direction
$\mathbf{u}\in\mathbb S^{p-1}$ there is a unique positive radius
$\rho_{\bm{\theta}}(\mathbf{x},\mathbf{u};\tau)$ satisfying
\begin{align}
S_{\bm{\theta}}^{\mathrm{ICNN}}
\bigl(
\mathbf{x},\rho_{\bm{\theta}}(\mathbf{x},\mathbf{u};\tau)\mathbf{u}
\bigr)
=
\tau .
\label{eq:icnn-boundary-radius}
\end{align}
As proven formally in Appendix \ref{Boundedness and well-posedness of the prediction regions}. Using polar coordinates, the volume of the residual-space sublevel set can be written in terms of its radial boundary as described in \cite{schneider2014convex}
\begin{align}
V_{\bm{\theta}}(\mathbf{x};\tau)
=
\frac{1}{p}
\int_{\mathbb S^{p-1}}
\rho_{\bm{\theta}}(\mathbf{x},\mathbf{u};\tau)^p\,d\sigma(\mathbf{u}),
\label{eq:icnn-radial-volume-integral}
\end{align}
where $d\sigma(\mathbf{u})$ denotes surface measure on the unit sphere
$\mathbb S^{p-1}$. If
$\mathbf{U}\sim\operatorname{Unif}(\mathbb S^{p-1})$, then
\begin{align}
\mathbb E[f(\mathbf{U})]
=
\frac{1}{|\mathbb S^{p-1}|}
\int_{\mathbb S^{p-1}}
f(\mathbf{u})\,d\sigma(\mathbf{u}).
\end{align}
Using $|\mathbb S^{p-1}|=p\,v_p$, where
\begin{align}
v_p
=
\frac{\pi^{p/2}}{\Gamma(p/2+1)}
\end{align}
is the volume of the $p$-dimensional Euclidean unit ball, the radial volume
identity can therefore be rewritten as
\begin{align}
V_{\bm{\theta}}(\mathbf{x};\tau)
&=
v_p\,
\mathbb E_{\mathbf{U}\sim\operatorname{Unif}(\mathbb S^{p-1})}
\left[
\rho_{\bm{\theta}}(\mathbf{x},\mathbf{U};\tau)^p
\right].
\label{eq:icnn-radial-volume}
\end{align}
Given directions $\mathbf{u}_1,\ldots,\mathbf{u}_M$ generated by randomized quasi-Monte Carlo with uniform spherical marginals, the corresponding finite-direction estimator is

\begin{align}
\widehat V_{\bm{\theta},M}(\mathbf{x};\tau)
=
\frac{v_p}{M}
\sum_{m=1}^{M}
\rho_{\bm{\theta}}(\mathbf{x},\mathbf{u}_m;\tau)^p .
\label{eq:icnn-volume-basic-estimator}
\end{align}

Current implementation generates the directions using scrambled Sobol points.

\paragraph{Differentiation through the radial boundary.}
For fixed $\tau$ and fixed direction $\mathbf{u}$,
\eqref{eq:icnn-boundary-radius} implicitly defines the boundary
radius. Implicit differentiation gives
\begin{align}
\nabla_{\bm{\theta}}\rho_{\bm{\theta}}(\mathbf{x},\mathbf{u};\tau)
=
-
\frac{
\nabla_{\bm{\theta}} S_{\bm{\theta}}^{\mathrm{ICNN}}
\bigl(
\mathbf{x},\rho_{\bm{\theta}}(\mathbf{x},\mathbf{u};\tau)\mathbf{u}
\bigr)
}{
\nabla_\mathbf{r} S_{\bm{\theta}}^{\mathrm{ICNN}}
\bigl(
\mathbf{x},\rho_{\bm{\theta}}(\mathbf{x},\mathbf{u};\tau)\mathbf{u}
\bigr)^\top \mathbf{u}
}.
\label{eq:icnn-radius-gradient}
\end{align}
The numerator differentiates the score with respect to $\bm{\theta}$ while
holding the boundary radius fixed and therefore includes the dependence
through the feature encoder. The denominator is the directional derivative
of the score along the ray. Strong convexity implies that this derivative is
strictly positive at every positive-radius boundary.

For fixed sampled directions, differentiating \eqref{eq:icnn-volume-basic-estimator} therefore gives
\begin{align}
\nabla_{\bm{\theta}}
\widehat V_{\bm{\theta},M}(\mathbf{x};\tau)
=
\frac{p v_p}{M}
\sum_{m=1}^{M}
\rho_{\bm{\theta}}(\mathbf{x},\mathbf{u}_m;\tau)^{p-1}
\nabla_{\bm{\theta}}\rho_{\bm{\theta}}(\mathbf{x},\mathbf{u}_m;\tau).
\label{eq:icnn-basic-volume-gradient}
\end{align}
Hence, although the ICNN boundary is obtained numerically, its volume
estimator can be differentiated directly through the implicitly defined
radial roots.

\section{Experimental Details and Configuration}
\label{app:experiments}

\paragraph{Benchmark Datasets and Task Construction.}
\label{app:datasets}

Table~\ref{tab:dataset_details} summarizes the datasets used in this paper, including 30 multi-dimensional datasets with response dimensions $p$ ranging from 2 to 16, as well as six univariate datasets. The datasets cover a broad range of application domains, including energy, autonomous driving, environmental modeling, computer systems, robotics, and retail. This diversity provides a basis for evaluating the efficiency of \method{} across different application areas.

\textbf{QM9.} Four nested variants of this dataset are used, with response dimensions $p \in \{2,4,6,8\}$. The number of observations and input features $d_x$ are kept fixed across all variants. The response dimension is varied by selecting the first $p$ molecular properties in a fixed ordering. All remaining molecular properties are excluded from the predictor set in every variant to avoid target leakage.

\textbf{Argoverse2.} This dataset is formulated as a multi-horizon trajectory-regression problem, with response dimensions $p\in\{2,4,8\}$. For each focal agent, five seconds of observed motion are used to predict future two-dimensional displacements relative to the agent's final observed position. The $p=2$ task predicts the displacement at one future horizon, while the $p=4$ and $p=8$ tasks jointly predict displacements at additional future horizons. Therefore, increasing $p$ changes the number of future positions predicted jointly, while the observations and input information remain unchanged. All predictors are constructed from the observed trajectory, and the targets are kept in the global city coordinate frame.

\textbf{Argoverse 2 Motion.}
This dataset is used as a separate two-dimensional task with $p=2$, where the response variables are \texttt{future\_dx} and \texttt{future\_dy}. For each scenario, one observation is constructed for the focal agent using the end of the five-second observed trajectory as the history anchor. No sliding-window augmentation is used. Although this task is derived from the same Argoverse 2 source data, it is different from Argoverse2-2D, which belongs to the nested multi-horizon variants described above.

\textbf{Beijing Air Quality.} Two multivariate tasks are constructed from this dataset, with $p=6$ and $p=12$. For the $p=6$ task, each observation represents one station at one hour, and the responses are the concentrations of PM2.5, PM10, SO2, NO2, CO, and O3. Meteorological measurements, station identity, and calendar variables are used as predictors, resulting in $d_x=37$ features after encoding. For the $p=12$ task, each observation represents one hour, and the responses are the PM2.5 measurements from the 12 monitoring stations. Meteorological measurements from all stations and calendar variables are used as predictors, resulting in $d_x=256$ features after encoding. Only hours with available PM2.5 measurements at all 12 stations are retained. Therefore, the $p=6$ task represents dependence among pollutants within a station, while the $p=12$ task represents spatial dependence across stations.

\begin{table}[!t]
\centering
\caption{
Datasets used in the reported experiments.
$n$ denotes the number of observations in the source dataset,
$d_x$ the number of input features, and $p$ the response dimension.
Multiple values of $p$ denote tasks constructed at different response
dimensions from the same source dataset. }
\label{tab:dataset_details}

\footnotesize
\setlength{\tabcolsep}{4pt}
\renewcommand{\arraystretch}{1.02}

\resizebox{\linewidth}{!}{%
\begin{tabular}{>{\raggedright\arraybackslash}p{6.0cm}lrrr}
\toprule
\textbf{Dataset} & \textbf{Domain} & $\boldsymbol{n}$ & $\boldsymbol{d_x}$ & $\boldsymbol{p}$ \\
\midrule

\multicolumn{5}{l}{\textit{Univariate benchmarks}} \\
\midrule

Beijing PM2.5~\citep{liang2015beijing}
    & Air quality & 41,757 & 14 & 1 \\

Bike Sharing~\citep{fanaee2013bike}
    & Transportation & 17,379 & 60 & 1 \\

CCPP~\citep{tufekci2014prediction}
    & Energy & 9,568 & 4 & 1 \\

Concrete~\citep{yeh1998concrete}
    & Civil engineering & 1,030 & 8 & 1 \\

Kin8nm~\citep{rasmussen1996delve}
    & Robotics & 8,192 & 8 & 1 \\

Protein~\citep{rana2013protein}
    & Structural biology & 45,730 & 9 & 1 \\

\midrule
\multicolumn{5}{l}{\textit{Multivariate benchmarks}} \\
\midrule

Argoverse 2~\citep{wilson2021argoverse2}
    & Autonomous driving & 24,988 & 28 & 2,4,8 \\

Argoverse 2 Motion~\citep{wilson2021argoverse2}
    & Autonomous driving & 24,988 & 28 & 2 \\

Bias-Correction Temperature~\citep{uci2020biascorrection}
    & Meteorology & 7,723 & 46 & 2 \\

ENB~\citep{tsanas2012energy}
    & Building energy & 768 & 8 & 2 \\

Geographical Music~\citep{zhou2014music}
    & Music geolocation & 1,059 & 68 & 2 \\

CASP~\citep{rana2013protein}
    & Structural biology & 45,730 & 8 & 2 \\

Online Retail II~\citep{chen2012onlineretail}
    & Retail & 1,067,371 & 42 & 2 \\

QM9~\citep{ramakrishnan2014quantum}
    & Quantum chemistry & 130,831 & 22 & 2,4,6,8 \\

Household Power~\citep{hebrail2006household}
    & Energy & 2,049,280 & 4 & 3 \\

SCPF~\citep{spyromitros2016multitarget}
    & Civic services & 1,137 & 23 & 3 \\

QM8~\citep{wu2018moleculenet}
    & Quantum chemistry & 21,786 & 14 & 4 \\

SGEMM~\citep{paredes2017sgemm}
    & Computer systems & 241,600 & 18 & 4 \\

Beijing AQ~\citep{zhang2017cautionary}
    & Air quality
    & 383,980 / 28,663
    & 37 / 256
    & 6 / 12 \\

Communities and Crime~\citep{redmond2009communities}
    & Socioeconomic / crime & 1,990 & 124 & 6 \\

ATP1D~\citep{spyromitros2016multitarget}
    & Airline ticket pricing & 337 & 370 & 6 \\

SARCOS~\citep{vijayakumar2000fast}
    & Robotics & 48,933 & 21 & 7 \\

Knee Contact Force~\citep{lavikainen2023prediction,lavikainen2022knee}
    & Biomechanics & 2,786 & 12 & 12 \\

QM7b~\citep{montavon2013machine,blum2009gdb13}
    & Quantum chemistry & 7,211 & 29 & 14 \\

WQ~\citep{spyromitros2016multitarget}
    & Water quality & 1,060 & 16 & 14 \\

OES10~\citep{spyromitros2016multitarget}
    & Labor statistics & 403 & 298 & 16 \\

OES97~\citep{spyromitros2016multitarget}
    & Labor statistics & 334 & 263 & 16 \\

Building Data Genome 2, sites Fox / Eagle / Lamb~\citep{miller2020building}
    & Building energy
    & 17,537 / 17,105 / 17,544
    & 12 & 16 \\

\bottomrule
\end{tabular}%
}

\vspace{2pt}
\begin{minipage}{0.98\linewidth}
\scriptsize
\textit{Large datasets are randomly subsampled to $20{,}000$ observations, with the same sampled data used across all compared methods; $d_x$ is the number of input features after categorical encoding of the data used in the experiments.}
\end{minipage}

\end{table}

The remaining datasets follow their standard benchmark formulations; their domains, sample sizes, input dimensions, response dimensions, and source references are summarized in Table~\ref{tab:dataset_details}.

\paragraph{Synthetic Benchmark Construction.}
\label{app:synthetic}

The synthetic benchmark is constructed so that the population-optimal convex
prediction region at a prescribed coverage level $1-\alpha$ is known.
This allows the geometry and size of the learned regions to be evaluated
against a known population reference. Four two-dimensional residual geometries
are considered: an ellipsoid, rectangle, triangle, and trapezoid. In each
setting, the response is generated according to:

\begin{align}
\mathbf{x} &\sim \mathrm{Unif}([-1,1]^2),\\
\mathbf{y} &= \mu(\mathbf{x})+\mathbf{r},
\end{align}

where $\mu(\mathbf{x})$ is a known nonlinear conditional mean function. The exact
conditional mean is used as the common point predictor for all methods so that
the comparison focuses on prediction-region geometry rather than
point-prediction error.

For each geometry, let $B\subset\mathbb{R}^2$ denote a convex body containing the origin in its interior. A latent residual $\mathbf{u}$ is generated by placing probability mass $1-\alpha$
uniformly inside $B$ and the remaining probability mass uniformly in a larger,
geometrically similar shell:
\begin{align}
\mathbf{u} &\sim
\begin{cases}
\mathrm{Unif}(B), & \text{with probability }1-\alpha,\\
\mathrm{Unif}(cB\setminus B), & \text{with probability }\alpha,
\end{cases}
\qquad c>1.
\end{align}
The residual is then transformed as
\begin{align}
\mathbf{r} &= \mathbf{A}\mathbf{u},\\
\mathbf{A} &= \mathbf{R}(\theta)\operatorname{diag}(a_1,a_2),
\end{align}
where $\mathbf{R}(\theta)$ represents a rotation and $a_1,a_2>0$ control scaling along
the two coordinate directions. Hence, the inner region of the residual
distribution is $\mathbf{A}B$, while the remaining probability mass is distributed over
the surrounding shell $\mathbf{A}(cB\setminus B)$. Different choices of $B$ and $\mathbf{A}$
produce residual distributions with different convex shapes, orientations, and
aspect ratios under the same data-generating mechanism.

The expansion factor $c$ is chosen so that the probability density inside the
inner body is strictly larger than that in the surrounding shell. In
particular, the two densities are
\begin{align}
f_{\mathrm{in}}
    &= \frac{1-\alpha}{|B|},\\
f_{\mathrm{out}}
    &= \frac{\alpha}{(c^2-1)|B|},
\end{align}
with $f_{\mathrm{in}}>f_{\mathrm{out}}$. Since $B$ contains exactly $1-\alpha$ probability mass and has strictly higher density than the surrounding shell, it is the population minimum-area region at the target coverage level. After the affine transformation, $\mathbf{A}B$ is the corresponding population minimum-area convex prediction region. Its area is available analytically as:

\begin{align}
V^\star &= |\det(\mathbf{A})|\,|B|.
\end{align}

This known oracle region provides the reference for evaluating the learned prediction regions using geometric overlap and relative volume.

\paragraph{Data Partitioning, Preprocessing, and Point Predictor.}
\label{app:data_protocol}

For each dataset and random seed, the observations are partitioned into four
disjoint subsets, $\mathcal{D}_{\mathrm{pred}}$, $\mathcal{D}_{\mathrm{score}}$,
$\mathcal{D}_{\mathrm{cal}}$, and $\mathcal{D}_{\mathrm{test}}$. The same
partition is used across all compared methods for a given seed.
$\mathcal{D}_{\mathrm{pred}}$ is used exclusively to fit the point predictor,
$\mathcal{D}_{\mathrm{score}}$ is used to train and select the
conformity-score model, $\mathcal{D}_{\mathrm{cal}}$ is reserved for
split-conformal calibration, and $\mathcal{D}_{\mathrm{test}}$ is used only
for final evaluation. Internal training and validation splits are used within
$\mathcal{D}_{\mathrm{pred}}$ and $\mathcal{D}_{\mathrm{score}}$ for model
fitting and selection, respectively.

Categorical features are one-hot encoded, while continuous features and
response coordinates are standardized using statistics estimated exclusively
from $\mathcal{D}_{\mathrm{pred}}$. The same encoding and standardization
parameters are then used for $\mathcal{D}_{\mathrm{score}}$,
$\mathcal{D}_{\mathrm{cal}}$, and $\mathcal{D}_{\mathrm{test}}$, so no
preprocessing statistics are estimated from the calibration or test sets.

\begin{table}[!t]
\centering
\caption{
Shared data-partitioning, preprocessing, and point-predictor configuration.
}
\label{tab:shared-hyperparams}

\small
\setlength{\tabcolsep}{5pt}
\renewcommand{\arraystretch}{1.02}

\begin{tabular}{ll}
\toprule
\textbf{Component} & \textbf{Configuration} \\
\midrule

Random seeds
    & \texttt{\{0,1,2,3,4\}} \\

Data partition
    & \texttt{25/30/25/20\%} \\

$\mathcal{D}_{\mathrm{pred}}$ internal split
    & \texttt{85/15\%} \texttt{train/validation} \\

$\mathcal{D}_{\mathrm{score}}$ internal split
    & \texttt{80/20\%} \texttt{train/validation} \\

\midrule

Point predictor
    & \texttt{MLP} \\

Hidden dimensions
    & \texttt{[64,32]} \\

Activation
    & \texttt{ReLU} \\

Loss
    & \texttt{MSE} \\

Optimizer
    & \texttt{Adam} \\

Learning rate
    & \texttt{1e-3} \\
    
Batch size
    & \texttt{256} \\

Maximum epochs
    & \texttt{200} \\

Early stopping
    & \texttt{patience=20, min\_delta=1e-5} \\

\bottomrule
\end{tabular}
\end{table}

For each dataset and each random seed, a frozen and shared point predictor $F(\mathbf{x})$, trained on $\mathcal{D}_{\mathrm{pred}}$, is used to ensure a fair comparison between methods through identical residuals, so that the results reflect only the performance of the predicted uncertainty regions. Complete data partitioning, training configurations, and preprocessing details are shown in Table~\ref{tab:shared-hyperparams}.

\paragraph{Computational Environment.}
Experiments are conducted using NVIDIA Quadro RTX 6000 GPU (24 GB).

\paragraph{CoCoNet Training and Hyperparameters.}
\label{app:coconet_training}
The two variants follow the same overall training pipeline, including the context encoder $\mathbf{h}(\mathbf{x})$, which takes the covariates together with the frozen point prediction $F(\mathbf{x})$, and use the same loss-weight search space. The score-specific architectures and optimization settings are selected separately based on each parameterization, as summarized in Table~\ref{tab:coco-hyperparams}.

\begin{table}[!t]
\centering
\caption{
Training configuration and hyperparameter search spaces for the proposed
\method{} score families. The context encoder $\mathbf{h(x)}$ and loss-weight search
spaces are shared across both families. Hyperparameter search is performed
using only the validation portion of $\mathcal{D}_{\mathrm{score}}$; values in
braces are searched, while single values are fixed.
}
\label{tab:coco-hyperparams}

\small
\setlength{\tabcolsep}{4pt}
\renewcommand{\arraystretch}{1.02}

\resizebox{\linewidth}{!}{%
\begin{tabular}{lll}
\toprule
\textbf{Component}
& \textbf{Hyperparameter}
& \textbf{Value / Search Space} \\
\midrule

\multirow{6}{*}{Shared}

& Coverage temperature $T$
& $0.1$ \\

& Optimizer
& \texttt{Adam} \\

& Gradient clipping
& \texttt{5.0} \\

& $\lambda_{\mathrm{cov}}$
& \texttt{\{0, 0.1, 1\}} \\

& $\lambda_{\mathrm{miss}}$ 
& \texttt{\{0.01, 0.03, 1, 10\}} \\

& Training threshold $\tau_{\mathrm{tr}}$
& $1$ \\

\midrule

\multirow{3}{*}{Encoder $\mathbf{h}(\mathbf{x})$}
& Input
& $[x,F(x)]$ \\

& Hidden dimensions
& \texttt{\{[64,32], [256,128]\}} \\

& Activation
& \texttt{ReLU} \\

\midrule

\multirow{5}{*}{$\method{}_{\mathrm{Spline}}$}
& Number of spline segments $K$
& \texttt{\{2,4,6,8\}} \\

& Minimum slope $\varepsilon_{\mathrm{slope}}$
& \texttt{1e-4} \\

& Learning rate
& \texttt{2e-3} \\

& Batch size (capped at the training-set size)
& \texttt{\{32,64,128\}} \\

& Maximum epochs / patience
& \texttt{300 / 80} \\

\midrule

\multirow{7}{*}{$\method{}_{\mathrm{ICNN}}$}
& Convex-layer activation
& \texttt{softplus} \\

& ICNN hidden dimension / depth
& \texttt{64 / 2} \\

& Strong convexity $\varepsilon_{\mathrm{ICNN}}$
& \texttt{1e-3} \\

& Learning rate
& \texttt{2e-3} \\

& Batch size
& \texttt{32} \\

& Maximum epochs / patience
& \texttt{400 / 80} \\

& $M_{\mathrm{train}}$ / $M_{\mathrm{valid}}$ / $M_{\mathrm{test}}$
& $2^{9}$ / $2^{12}$ / $2^{16}$ \\

\bottomrule
\end{tabular}%
}
\end{table}

For \method{}$_{\mathrm{ICNN}}$, the numerical volume estimator also requires a
directional budget $M$, which determines the number of directions used to
approximate the radial volume integral in~\eqref{eq:icnn-volume-basic-estimator}.
Different values of $M$ are used across the training and evaluation stages:
$M_{\mathrm{train}}=2^9$ for score optimization,
$M_{\mathrm{valid}}=2^{12}$ for validation and model selection, and
$M_{\mathrm{test}}=2^{16}$ for the final reported prediction-region sizes.
A smaller budget is used during training to reduce the cost of repeated volume
computations, while larger budgets are used for more accurate validation and
final evaluation.

\paragraph{Baseline Configurations.}
\label{app:baseline_config}

The baselines for multi-dimensional datasets already introduced in the main paper, although the primary porpuse of this study wasn't focusing on univariate prediction, however, For the univariate experiments, \textbf{Locally Weighted Split Conformal prediction (LW-SC)}~\citep{lei2018distribution} is additionally included. LW-SC fits a spread model $\hat{\sigma}(\mathbf{x})$ on $\mathcal{D}_{\mathrm{score}}$ to predict the absolute residual magnitude and uses the scaled conformity score $|r|/\hat{\sigma}(\mathbf{x})$, producing symmetric prediction intervals whose width adapts to the input.

The extended multivariate comparison additionally includes \textbf{Pullback Conformal Sets (PBS)}~\citep{kondratyev2025neural}. The AC-NQR$_U$ variant is used because it operates on residuals relative to a fixed point predictor and is therefore compatible with the common $F(\mathbf{x})$ used across methods. PBS learns a conditional vector-quantile map of the residuals using neural optimal transport, with conformity score
$\|Q^{-1}(\mathbf{r},\mathbf{x})\|_2$.

All baseline implementations follow the configurations and model-selection procedures reported in their corresponding source papers. When multiple candidate settings are provided, the corresponding search spaces are evaluated using only the validation portion of $\mathcal{D}_{\mathrm{score}}$. The exact configurations and search spaces used in the experiments are summarized in Table~\ref{tab:baseline-hyperparams}.

\begin{table}[!t]
\centering
\caption{
Configurations and hyperparameter search spaces for the baseline methods.
Search spaces are evaluated using only the validation portion of
$\mathcal{D}_{\mathrm{score}}$.
}
\label{tab:baseline-hyperparams}

\small
\setlength{\tabcolsep}{4pt}
\renewcommand{\arraystretch}{1.02}

\resizebox{\linewidth}{!}{%
\begin{tabular}{lll}
\toprule
\textbf{Method}
& \textbf{Hyperparameter}
& \textbf{Value / Search Space} \\
\midrule

\multirow{1}{*}{Global Ellipsoid}
& Ridge
& \texttt{1e-6} \\

\midrule

\multirow{3}{*}{Local Ellipsoid}
& Number of neighbors
& $\max(2p+5,\lceil0.05n\rceil)$ \\
& Local / global covariance weight
& \texttt{0.95 / 0.05} \\
& Ridge
& \texttt{1e-6} \\

\midrule

\multirow{3}{*}{Empirical Copula}
& Normalizer hidden dimensions
& \texttt{[64,32]} \\
& Scale offset
& \texttt{0.1} \\
& Copula
& \texttt{empirical diagonal copula} \\

\midrule

\multirow{10}{*}{MVCS}
& Center
& $F(x)$ \\
& Matrix hidden dimension
& \texttt{256} \\
& Matrix depth
& \texttt{\{3,5\}} \\
& Matrix learning rate
& \texttt{\{1e-4, 1e-3, 4e-3, 5e-3, 1e-2\}} \\
& Norm-exponent learning rate
& \texttt{\{1e-3, 1e-2\}} \\
& Norm exponent
& \texttt{q $\geq 1$} \\
& Batch size
& \texttt{\{100,200,1000\}} \\
& Maximum epochs
& \texttt{4000} \\
& Early stopping
& \texttt{patience=500} \\

\midrule

\multirow{5}{*}{LW-SC}
& Hidden dimensions
& \texttt{[64,32]} \\
& Learning rate
& \texttt{1e-3} \\
& Batch size
& \texttt{min(256,n)} \\
& Maximum epochs
& \texttt{200} \\
& Early stopping
& \texttt{patience=20} \\

\midrule

\multirow{5}{*}{PBS}

& (Width, depth, batch size)
& \texttt{\{(10,1,2048), (12,4,512), (16,4,512),} \\
&
& \texttt{\ (46,4,8192), (18,8,256)\}} \\

& Optimizer / learning rate
& \texttt{AdamW / 1e-2} \\

& Maximum epochs
& \texttt{800} \\

\bottomrule
\end{tabular}%
}
\end{table}

\paragraph{Hyperparameter Selection, Calibration, and Evaluation Protocol.}
\label{app:selection_protocol}

For each dataset and random seed, all model and hyperparameter selection is
performed using only $\mathcal{D}_{\mathrm{score}}$. Candidate models are
trained on its training portion, while the held-out validation portion is used
for architecture, hyperparameter, and checkpoint selection. For \method{},
among candidate configurations and checkpoints with validation coverage above
$1-\alpha-\epsilon$, the one with the smallest validation region size is
selected, with $\epsilon=0.02$ in all reported experiments. For the baseline
methods, hyperparameter and model selection is likewise restricted to the
validation portion of $\mathcal{D}_{\mathrm{score}}$.

After all model-selection decisions have been fixed, the selected conformity
score is calibrated using the independent calibration set
$\mathcal{D}_{\mathrm{cal}}$. For each
$(\mathbf{x}_i,\mathbf{y}_i)\in\mathcal{D}_{\mathrm{cal}}$, define the
calibration score
$s_i=S(\mathbf{x}_i,\mathbf{y}_i-F(\mathbf{x}_i))$. The split-conformal
threshold is then
\begin{align}
\widehat{\tau}_{\alpha}
&=
s_{\left(
\left\lceil
(n_{\mathrm{cal}}+1)(1-\alpha)
\right\rceil
\right)},
\end{align}
where $s_{(k)}$ denotes the $k$th order statistic of the calibration scores.
The resulting prediction region for a new input $\mathbf{x}$ is
\begin{align}
\widehat{\mathcal{C}}(\mathbf{x})
&=
\left\{
\mathbf{y}:
S\!\left(\mathbf{x},\mathbf{y}-F(\mathbf{x})\right)
\leq \widehat{\tau}_{\alpha}
\right\}.
\end{align}

The test set $\mathcal{D}_{\mathrm{test}}$ is used only after model selection
and conformal calibration are complete. No hyperparameters, checkpoints, or
calibration thresholds are adjusted using test-set performance. The same data
splits are used across all compared methods for each random seed, and reported
results are aggregated over the five seeds described in
Table~\ref{tab:shared-hyperparams}.

\paragraph{Evaluation Metrics.}
\label{app:evaluation_metrics}

Prediction regions are evaluated primarily in terms of empirical coverage and
size. For a test set of $n_{\mathrm{test}}$ observations, empirical coverage is
computed as
\begin{align}
\widehat{\mathrm{Cov}}
&=
\frac{1}{n_{\mathrm{test}}}
\sum_{i=1}^{n_{\mathrm{test}}}
\mathbbm{1}
\left\{
\mathbf{y}_i\in\widehat{\mathcal{C}}(\mathbf{x}_i)
\right\}.
\end{align}

Region size is summarized by the dimension-normalized size
\begin{align}
\mathrm{Size}
&=
\left(
\frac{1}{n_{\mathrm{test}}}
\sum_{i=1}^{n_{\mathrm{test}}}
V\!\left(\widehat{\mathcal{C}}(\mathbf{x}_i)\right)
\right)^{1/p},
\end{align}
where $V(\widehat{\mathcal{C}}(\mathbf{x}_i))$ denotes the $p$-dimensional
volume of the prediction region. Taking the $p$-th root places volumes from
different response dimensions on a comparable linear scale.

For \method{}$_{\mathrm{ICNN}}$, the numerical stability of the final volume
estimates is assessed using a budget-doubling diagnostic. Specifically, the
dimension-normalized test-set size is evaluated using nested directional
budgets $M=2^{15}$ and $2M=2^{16}$, with both estimates using prefixes of the
same scrambled Sobol sequence. An estimate is considered stable when doubling
the directional budget changes the dimension-normalized size by less than
$0.1\%$. All reported \method{}$_{\mathrm{ICNN}}$ results satisfy this
stability criterion, and the values reported in the tables use the larger
$2^{16}$-direction estimate.

For fair evaluation, all reported prediction-region volumes in
Table~\ref{tab:multivariate-results} are recomputed using the common numerical
estimator in~\eqref{eq:icnn-volume-basic-estimator}. During model
training and selection, however, each method retains its native volume
computation, including exact volume formulas when available; the common
estimator is used only for the final cross-method comparison.

For the synthetic experiments, the known oracle region additionally permits
direct evaluation of geometric recovery. Intersection over union is computed as
\begin{align}
\mathrm{IoU}
=
\frac{
V\!\left(\widehat{\mathcal{C}}\cap\mathcal{C}^{\star}\right)
}{
V\!\left(\widehat{\mathcal{C}}\cup\mathcal{C}^{\star}\right)
}.
\end{align}
where $\mathcal{C}^{\star}$ denotes the population-optimal region. Relative
size is also reported through
\begin{align}
\frac{V(\widehat{\mathcal{C}})}{V^{\star}},
\end{align}
where $V^{\star}$ is the oracle volume. An IoU close to one indicates accurate
geometric recovery, while a volume ratio close to one indicates that the
learned region has a size close to the population optimum.

All reported metrics are computed on $\mathcal{D}_{\mathrm{test}}$ after model
selection and conformal calibration are complete. Results across real-data
experiments are reported as mean $\pm$ standard deviation over the five random
seeds.

\section{Additional Experimental Results}
\label{app:additional-results}

\begin{figure}[!t]
    \centering
    \includegraphics[width=\linewidth]{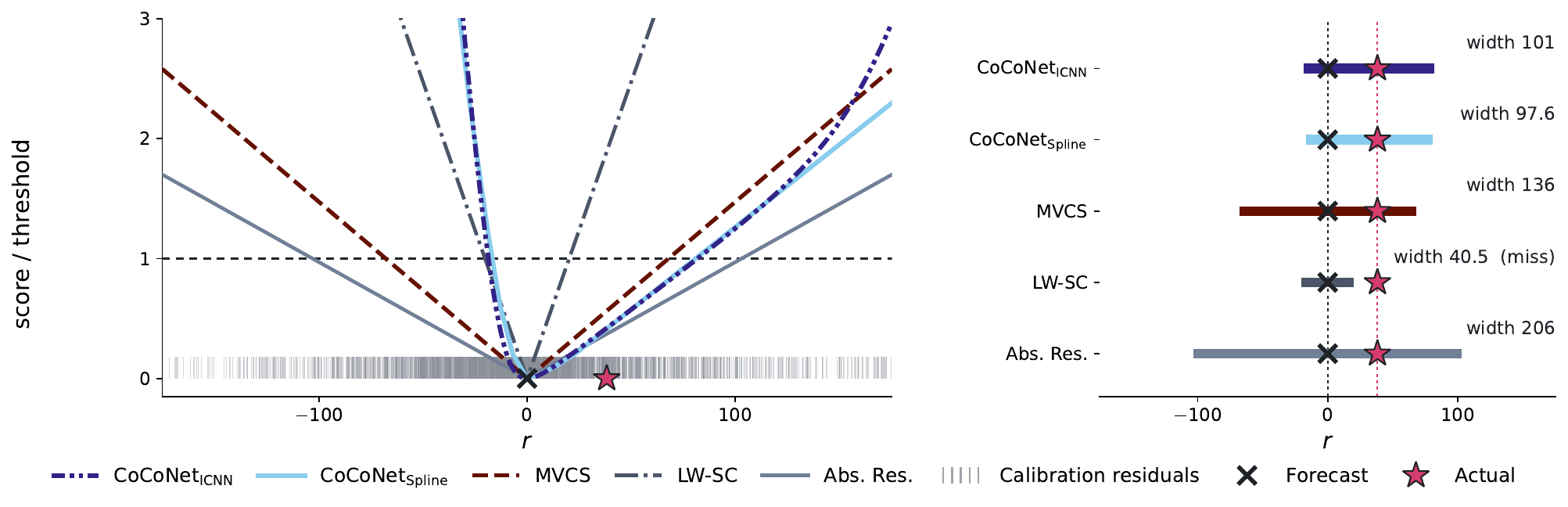}
    \caption{
    \textbf{Univariate conformity scores and prediction intervals on Beijing PM2.5}
    (one test observation). \emph{Left:} conformity scores as functions of the
    residual $\mathbf{r}=\mathbf{y}-F(\mathbf{x})$, normalized by the calibrated threshold; gray ticks denote
    calibration residuals. \emph{Right:} the corresponding $90\%$ prediction
    intervals, with the forecast (cross) and actual observation (star). The \method{}
    variants learn asymmetric intervals, while MVCS and Absolute Residual remain
    symmetric; LW-SC gives the narrowest interval but misses the observation.
    }
    \label{fig:univariate-beijing-pm25}
\end{figure}

\begin{table*}[!t]
\centering
\caption{%
\textbf{Univariate regression results at $90\%$ target coverage.} Each method reports, in gray, the
empirical test coverage in percent, followed by the prediction-region size (lower is better);
subscripts are standard deviations over five seeds. Where a factor ($\times 10^{k}$) follows the
dataset name, the sizes in that row are in units of $10^{k}$. Best and second-best sizes are shown
in \textcolor{bestblue}{\textbf{blue}} and \underline{underline}.%
}
\label{tab:univariate-results}
\renewcommand{\arraystretch}{1.3}
\setlength{\tabcolsep}{2pt}
\resizebox{\linewidth}{!}{%
\begin{tabular}{l r@{\hspace{2pt}}l@{\hspace{7pt}}r@{\hspace{2pt}}l@{\hspace{7pt}}r@{\hspace{2pt}}l@{\hspace{7pt}}r@{\hspace{2pt}}l@{\hspace{7pt}}r@{\hspace{2pt}}l}
\toprule
& \multicolumn{6}{c}{\large Baselines}
& \multicolumn{4}{c}{\large \method{} (this work)} \\
\cmidrule[0.08em](lr){2-7}
\cmidrule[0.08em](lr){8-11}
{\large Dataset}
& \multicolumn{2}{c}{\large Abs.\ Res.}
& \multicolumn{2}{c}{\large LW-SC}
& \multicolumn{2}{c}{\large MVCS}
& \multicolumn{2}{c}{\large \method{}$_{\mathrm{Spline}}$}
& \multicolumn{2}{c}{\large \method{}$_{\mathrm{ICNN}}$} \\
& \multicolumn{1}{c}{\footnotesize\color{covgray} Cov.\,\%}
& \multicolumn{1}{c}{\footnotesize Size}
& \multicolumn{1}{c}{\footnotesize\color{covgray} Cov.\,\%}
& \multicolumn{1}{c}{\footnotesize Size}
& \multicolumn{1}{c}{\footnotesize\color{covgray} Cov.\,\%}
& \multicolumn{1}{c}{\footnotesize Size}
& \multicolumn{1}{c}{\footnotesize\color{covgray} Cov.\,\%}
& \multicolumn{1}{c}{\footnotesize Size}
& \multicolumn{1}{c}{\footnotesize\color{covgray} Cov.\,\%}
& \multicolumn{1}{c}{\footnotesize Size} \\
\midrule

{\large Beijing PM2.5 {\footnotesize($\times 10^{2}$)}}
    & {\color{covgray}\large $91.1_{\pm 0.8}$} & {\Large $2.10_{\pm 0.05}$}  %
    & {\color{covgray}\large $90.6_{\pm 1.1}$} & {\Large $2.05_{\pm 0.12}$}  %
    & {\color{covgray}\large $90.5_{\pm 1.1}$} & {\Large $1.89_{\pm 0.07}$}  %
    & {\color{covgray}\large $90.3_{\pm 0.6}$} & {\Large $\color{bestblue}\mathbf{1.64_{\pm 0.02}}$}  %
    & {\color{covgray}\large $90.7_{\pm 0.9}$} & {\Large $\underline{1.68_{\pm 0.02}}$} \\  %

{\large Bike Sharing {\footnotesize($\times 10^{2}$)}}
    & {\color{covgray}\large $90.6_{\pm 0.6}$} & {\Large $1.49_{\pm 0.05}$}  %
    & {\color{covgray}\large $90.2_{\pm 1.2}$} & {\Large $1.75_{\pm 0.23}$}  %
    & {\color{covgray}\large $90.0_{\pm 1.4}$} & {\Large $1.29_{\pm 0.02}$}  %
    & {\color{covgray}\large $90.0_{\pm 0.9}$} & {\Large $\color{bestblue}\mathbf{1.22_{\pm 0.03}}$}  %
    & {\color{covgray}\large $90.5_{\pm 0.8}$} & {\Large $\underline{1.28_{\pm 0.03}}$} \\  %

{\large CCPP {\footnotesize($\times 10^{1}$)}}
    & {\color{covgray}\large $90.1_{\pm 0.9}$} & {\Large $1.28_{\pm 0.02}$}  %
    & {\color{covgray}\large $90.1_{\pm 0.9}$} & {\Large $1.28_{\pm 0.02}$}  %
    & {\color{covgray}\large $90.4_{\pm 1.0}$} & {\Large $1.25_{\pm 0.02}$}  %
    & {\color{covgray}\large $90.2_{\pm 0.7}$} & {\Large $\underline{1.24_{\pm 0.03}}$}  %
    & {\color{covgray}\large $90.3_{\pm 0.7}$} & {\Large $\color{bestblue}\mathbf{1.22_{\pm 0.01}}$} \\  %

{\large Concrete {\footnotesize($\times 10^{1}$)}}
    & {\color{covgray}\large $90.2_{\pm 1.5}$} & {\Large $2.52_{\pm 0.21}$}  %
    & {\color{covgray}\large $91.4_{\pm 2.8}$} & {\Large $2.60_{\pm 0.46}$}  %
    & {\color{covgray}\large $91.7_{\pm 1.9}$} & {\Large $2.52_{\pm 0.06}$}  %
    & {\color{covgray}\large $91.0_{\pm 1.7}$} & {\Large $\color{bestblue}\mathbf{2.34_{\pm 0.15}}$}  %
    & {\color{covgray}\large $92.4_{\pm 2.4}$} & {\Large $\underline{2.49_{\pm 0.13}}$} \\  %

{\large Kin8nm}
    & {\color{covgray}\large $91.1_{\pm 0.9}$} & {\Large $0.30_{\pm 0.00}$}  %
    & {\color{covgray}\large $90.4_{\pm 1.0}$} & {\Large $0.30_{\pm 0.00}$}  %
    & {\color{covgray}\large $90.7_{\pm 0.8}$} & {\Large $0.31_{\pm 0.01}$}  %
    & {\color{covgray}\large $90.2_{\pm 1.3}$} & {\Large $\color{bestblue}\mathbf{0.29_{\pm 0.01}}$}  %
    & {\color{covgray}\large $90.7_{\pm 1.0}$} & {\Large $\underline{0.29_{\pm 0.01}}$} \\  %

{\large Protein {\footnotesize($\times 10^{1}$)}}
    & {\color{covgray}\large $90.8_{\pm 0.9}$} & {\Large $1.56_{\pm 0.02}$}  %
    & {\color{covgray}\large $90.8_{\pm 0.6}$} & {\Large $1.46_{\pm 0.04}$}  %
    & {\color{covgray}\large $90.8_{\pm 0.6}$} & {\Large $1.35_{\pm 0.02}$}  %
    & {\color{covgray}\large $90.8_{\pm 0.7}$} & {\Large $\underline{1.26_{\pm 0.08}}$}  %
    & {\color{covgray}\large $91.1_{\pm 0.6}$} & {\Large $\color{bestblue}\mathbf{1.22_{\pm 0.01}}$} \\  %
\midrule
{\large Mean rank $\downarrow$}
& \multicolumn{2}{c}{\Large $4.33_{\pm 0.82}$}
& \multicolumn{2}{c}{\Large $4.17_{\pm 0.75}$}
& \multicolumn{2}{c}{\Large $3.50_{\pm 0.84}$}
& \multicolumn{2}{c}{\Large $\color{bestblue}\mathbf{1.33_{\pm 0.52}}$}
& \multicolumn{2}{c}{\Large $\underline{1.67_{\pm 0.52}}$} \\
\bottomrule
\end{tabular}%
}
\end{table*}

\paragraph{Univariate Regression Results.}
\label{app:univariate-results}

While \method{} is primarily designed to construct efficient uncertainty regions for multi-dimensional outputs, it can also learn asymmetric conformity scores in the univariate setting. Table~\ref{tab:univariate-results} reports the efficiency (i.e., interval width) and empirical coverage across six univariate datasets. For $p=1$, the $\ell_1$ and $\ell_\infty$ aggregations of a single coordinate score coincide, so $\method{}_{\mathrm{Spline}\text{-}\ell_1}$ and $\method{}_{\mathrm{Spline}\text{-}\ell_\infty}$ are identical and are reported as $\method{}_{\mathrm{Spline}}$.

The results clearly show the efficiency of \method{} compared with the baselines. Abs. Res., which uses the basic symmetric residual score, has the lowest efficiency rank among the methods (4.33). LW-SC improves the rank to 4.17 by introducing an input-dependent scale for the residuals, showing the benefit of input adaptivity. MVCS further improves the efficiency rank to 3.50. Since norm-based regions reduce to intervals in one dimension, MVCS can also adapt the interval scale based on the input, providing stronger adaptivity than LW-SC. However, while both LW-SC and MVCS can use the input to adapt the scale of the score, neither can produce asymmetric scores.

In contrast, \method{} combines input-dependent adaptation with the ability to learn asymmetric score functions. This leads to substantially better efficiency ranks for both variants, with the two variants together producing the smallest intervals across all six datasets. In particular, $\method{}_{\mathrm{Spline}}$ achieves an efficiency rank of 1.33, while $\method{}_{\mathrm{ICNN}}$ achieves a rank of 1.67, both substantially better than the baselines. These results show the benefit of learning asymmetric and flexible conformity scores even in the one-dimensional setting, and also indicate that $\method{}_{\mathrm{Spline}}$ can be particularly effective for univariate regression.

Figures~\ref{fig:univariate-beijing-pm25} and~\ref{fig:univariate-protein} further illustrate the numerical results using two example instances from two different datasets. The figures show how learning more flexible conformity scores can improve efficiency even when $p=1$. In both examples, $\method{}_{\mathrm{Spline}}$ and $\method{}_{\mathrm{ICNN}}$ learn asymmetric score functions around the prediction, resulting in substantially smaller intervals than the other baselines. The empirical coverage of all methods remains close to the target coverage of $90\%$, indicating that the differences in interval size mainly come from the different conformity scores learned by each method.

\paragraph{Multivariate Results and Further Analysis.}
\label{app:multivariate-analysis}
The main text reports the primary quantitative comparison across 30 multivariate regression tasks with response dimensions ranging from $p=2$ to $p=16$. This section complements those results by examining the learned prediction-region geometry, the consistency of efficiency across datasets, and the effect of expanding the comparison to more flexible MVCS variants and Pullback Conformal Sets (PBS). Together, these analyses provide a more detailed view of when the additional geometric flexibility of \method{} is beneficial and whether its efficiency persists beyond the convex structured baselines used in the primary comparison.

\subparagraph{Qualitative Analysis of Learned Geometry.}

Figures~\ref{fig:online-retail-3d} and \ref{fig:argoverse2-motion-3d} provide representative examples of the learned conformity-score geometry on two-dimensional multivariate tasks. Each figure shows $\method{}_{\mathrm{ICNN}}$, $\method{}_{\mathrm{Spline}\text{-}\ell_\infty}$, $\method{}_{\mathrm{Spline}\text{-}\ell_1}$, MVCS, and Local Ellipsoid on the corresponding dataset, allowing the geometric behavior of \method{} to be compared with different structured alternatives. For each method, the left panel shows the calibration residuals and realized test residual, the middle panel visualizes the conformity score normalized by its calibrated threshold, and the right panel shows the resulting score contours and $90\%$ prediction-region boundary.  The two examples illustrate contrasting uncertainty structures: Online Retail II is strongly asymmetric, whereas Argoverse2 Motion is substantially closer to an elliptical geometry.

Figure~\ref{fig:online-retail-3d} clearly reveals the advantage of learning the residual geometry through a flexible conformity score. The left panel shows a strongly asymmetric and non-ellipsoidal residual distribution, while the middle panel shows the learned score in the residual space. $\method{}_{\mathrm{ICNN}}$ learns a curved, asymmetric, and convex score that closely reflects the residual distribution. Among the strong baselines, $\method{}_{\mathrm{Spline}\text{-}\ell_\infty}$ is also capable of learning efficient asymmetric and rotated hyper-rectangular regions, and $\method{}_{\mathrm{Spline}\text{-}\ell_1}$ learns asymmetric polygonal regions. In contrast, MVCS and Local Ellipsoid are not able to represent asymmetric norm-based or ellipsoidal regions, which leads to larger prediction regions. This is also consistent with the numerical results, where $\method{}_{\mathrm{ICNN}}$ produces a prediction region with $7.2\%$ smaller area compared to the next best method.

\begin{figure}[!t]
    \centering
    \includegraphics[width=\linewidth]{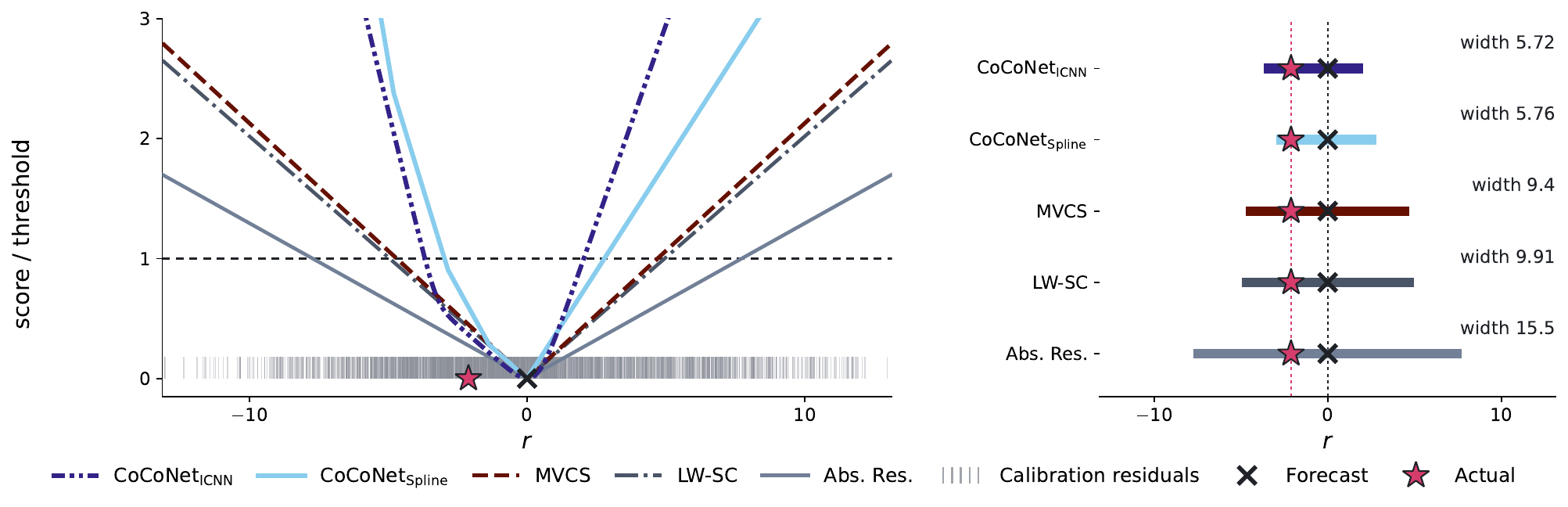}
    \caption{
    \textbf{Univariate conformity scores and prediction intervals on Protein}
    (one test observation). \emph{Left:} conformity scores as functions of the
    residual $r=\mathbf{y}-F(\mathbf{x})$, normalized by the calibrated threshold; gray ticks denote
    calibration residuals. \emph{Right:} the corresponding $90\%$ prediction
    intervals, with the forecast (cross) and actual observation (star).
    $\method{}_{\mathrm{ICNN}}$ and $\method{}_{\mathrm{Spline}}$ produce the
    smallest intervals in this example.
    }
    \label{fig:univariate-protein}
\end{figure}

Figure~\ref{fig:argoverse2-motion-3d} shows a different setting, where the residual geometry is quasi-symmetric and closer to an ellipsoidal shape. This example further shows the stable performance of $\method{}_{\mathrm{ICNN}}$ across both asymmetric and more structured geometries. In this setting, Local Ellipsoid and MVCS also produce efficient prediction regions, since their geometric assumptions are better aligned with the residual distribution. However, the additional flexibility of $\method{}_{\mathrm{ICNN}}$ still provides an improvement, although with a smaller margin. In particular, $\method{}_{\mathrm{ICNN}}$ produces a prediction region with about $1.8\%$ smaller area than the next best method, showing its consistent efficiency across different uncertainty structures.

These examples further demonstrate the efficiency of \method{} across different uncertainty geometries. Rather than depending on a specific underlying shape, \method{} performs consistently across different settings. This is also supported by the synthetic results, where $\method{}_{\mathrm{ICNN}}$ performs close to Local Ellipsoid and MVCS on the ellipsoidal example. In real datasets, however, the residual geometry is rarely a perfect ellipsoid. The ability of $\method{}_{\mathrm{ICNN}}$ to capture the remaining asymmetry and local deviations allows it to improve over the more structured baselines even in these near-ellipsoidal settings.

\subparagraph{Higher-Dimensional Projection Analysis.}

Since prediction regions cannot be directly visualized in higher dimensions $(p>2)$, Figures~\ref{fig:household-projections} and~\ref{fig:qm8-projections} show two-dimensional projections of the learned regions for Household Power $(p=3)$ and QM8 $(p=4)$. The residual distributions again show asymmetric structures across different coordinate pairs, and $\method{}_{\mathrm{ICNN}}$ is able to adapt its geometry accordingly, learning different shapes across the projections. In comparison, MVCS remains centrally symmetric, while $\method{}_{\mathrm{Spline}\text{-}\ell_\infty}$ can learn asymmetric rotated hyper-rectangular regions and $\method{}_{\mathrm{Spline}\text{-}\ell_1}$ asymmetric polytopes. This behavior is also reflected in the numerical results, where $\method{}_{\mathrm{ICNN}}$ produces an approximately $18.5\%$ smaller prediction-region volume on Household Power and approximately $42.0\%$ smaller volume on QM8 compared with the next best method.

\begin{figure}[!t]
    \centering
    \includegraphics[width=\linewidth]{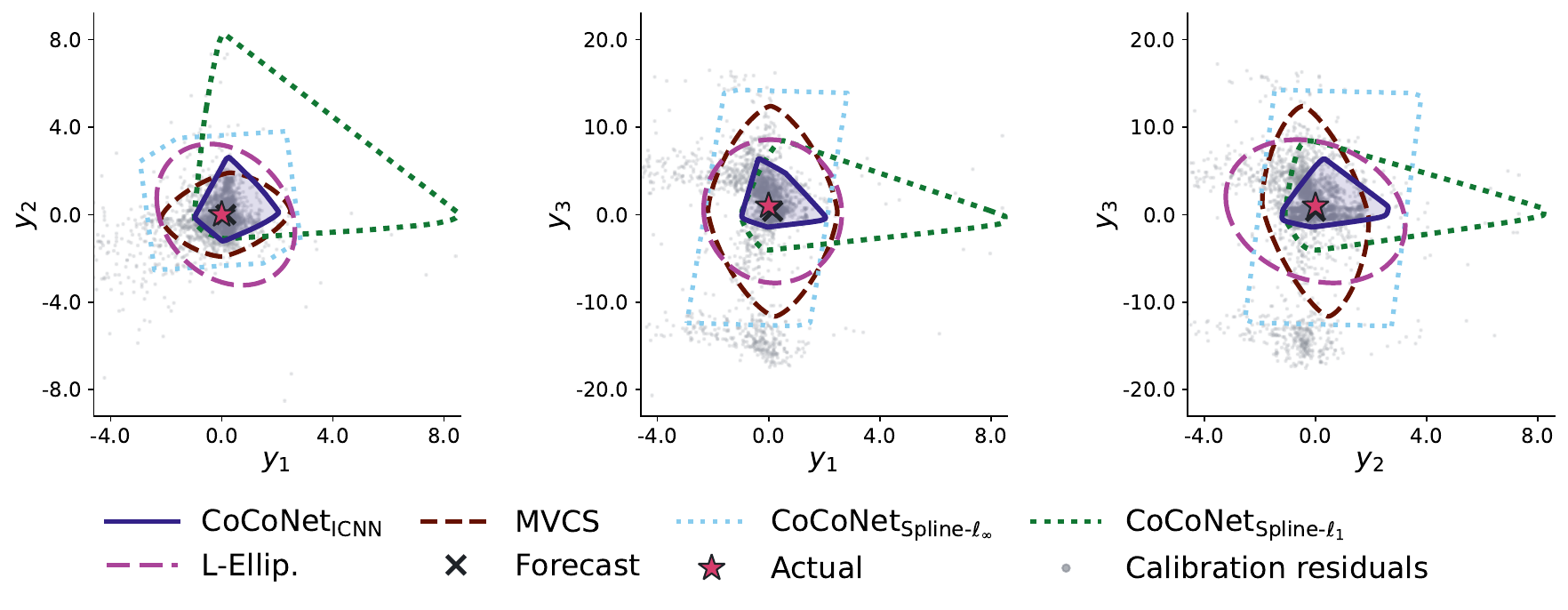}
    \caption{\textbf{Two-dimensional projections of three-dimensional prediction regions on Household Power} (one test observation). Each panel shows the exact projection of the $90\%$ prediction region onto one pair of response coordinates, together with the point forecast (cross) and the observed response (star). The $\method{}_{\mathrm{ICNN}}$ region adapts its shape to each pair of coordinates and is asymmetric around the forecast, whereas the MVCS region is centrally symmetric about the forecast and the $\method{}_{\mathrm{Spline}\text{-}\ell_\infty}$ region is a rotated, asymmetric hyperrectangle whose projections are parallelograms; the $\method{}_{\mathrm{Spline}\text{-}\ell_1}$ region is an asymmetric polytope.}
    \label{fig:household-projections}
\end{figure}

\begin{figure}[!t]
    \centering
    \includegraphics[width=1\linewidth]{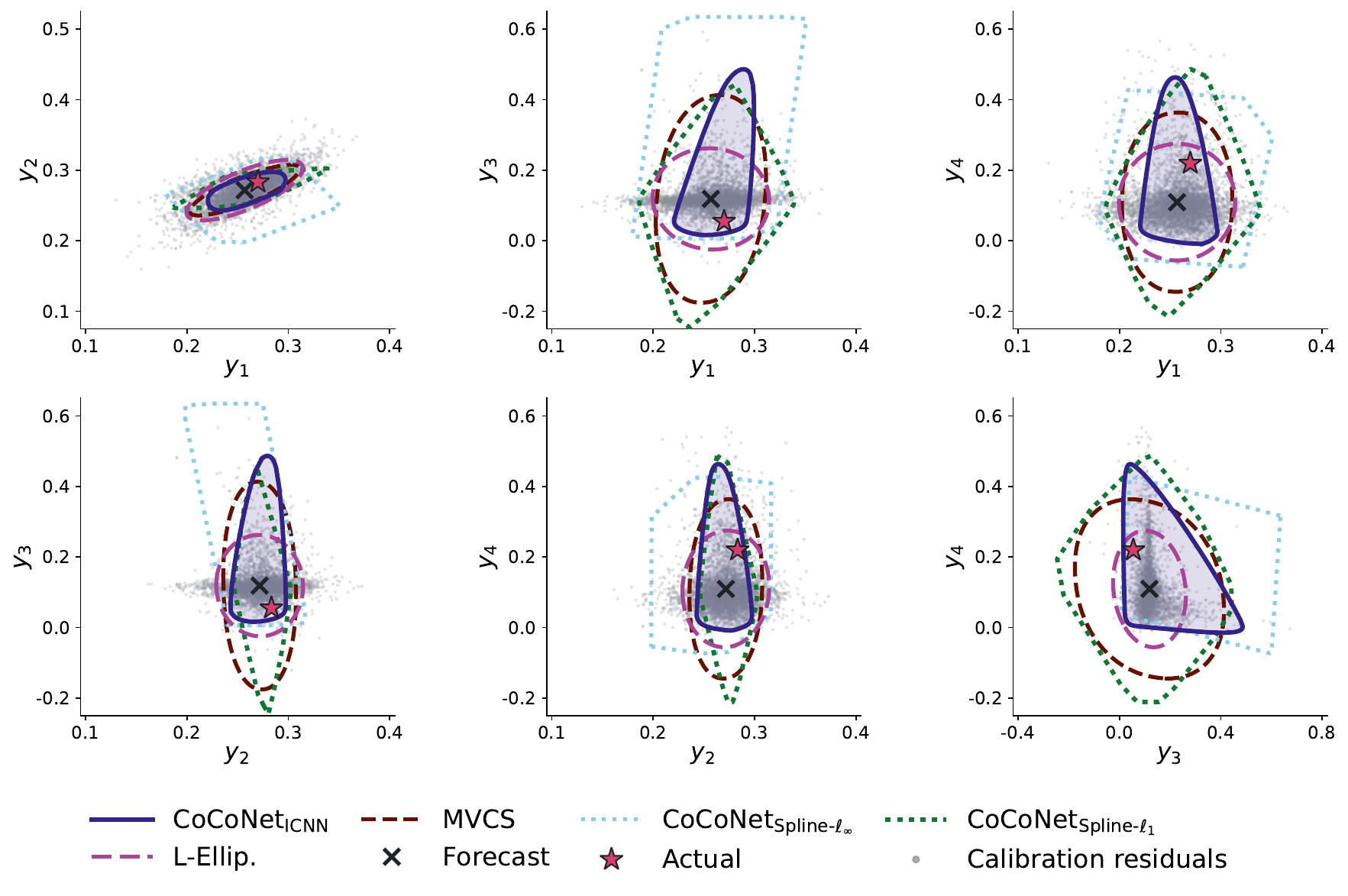}
    \caption{\textbf{Two-dimensional projections of four-dimensional prediction regions on QM8} (one test observation). Each panel shows the exact projection of the $90\%$ prediction region onto one pair of response coordinates, together with the point forecast (cross), the observed response (star), and the calibration residuals centered at the point forecast (gray points). The $\method{}_{\mathrm{ICNN}}$ region adapts its shape to each pair of coordinates and is asymmetric around the forecast, whereas the MVCS region is centrally symmetric about the forecast and the $\method{}_{\mathrm{Spline}\text{-}\ell_\infty}$ region is a rotated, asymmetric hyperrectangle whose projections are polygons with parallel opposite edges; the $\method{}_{\mathrm{Spline}\text{-}\ell_1}$ region is an asymmetric polytope.}
    \label{fig:qm8-projections}
\end{figure}

\subparagraph{Consistency Across Multivariate Tasks.}

Figure~\ref{fig:multivariate-ranks} summarizes the efficiency ranks of all methods across the 30 multivariate regression tasks. For each dataset, the methods are ranked based on their dimension-normalized prediction-region size, where a lower rank represents better efficiency. This figure provides a better view of the consistency of each method across the different datasets, complementing the individual results reported in Table~\ref{tab:multivariate-results}.

The results show a clear and consistent advantage for $\method{}_{\mathrm{ICNN}}$, with a mean rank of $1.45$. As already mentioned, $\method{}_{\mathrm{ICNN}}$ is the most efficient method on 22 out of the 30 datasets, while it remains among the three best-performing methods on 29. As discussed in the previous section, this shows the stable performance of $\method{}_{\mathrm{ICNN}}$ across different application areas and response dimensions. $\method{}_{\mathrm{Spline}\text{-}\ell_1}$ provides the second-best mean rank of $2.75$ and ranks first on four other tasks, followed by MVCS with $3.58$, while $\method{}_{\mathrm{Spline}\text{-}\ell_\infty}$ achieves a mean rank of $4.28$ and ranks first on two tasks. Among the remaining two tasks, MVCS and Local Ellipsoid each rank first once.

The other methods show wider rank distributions, indicating less consistent performance across the datasets. Local and Global Ellipsoid achieve mean ranks of $4.47$ and $5.15$, respectively, while Absolute Residual and Empirical Copula obtain mean ranks of $7.18$ and $7.13$. Overall, these results further show that the efficiency of $\method{}_{\mathrm{ICNN}}$ is consistent across the benchmark and is not limited to a specific dataset, response dimension, or uncertainty geometry.

\begin{figure}[!t]
    \centering
    \includegraphics[width=\linewidth]{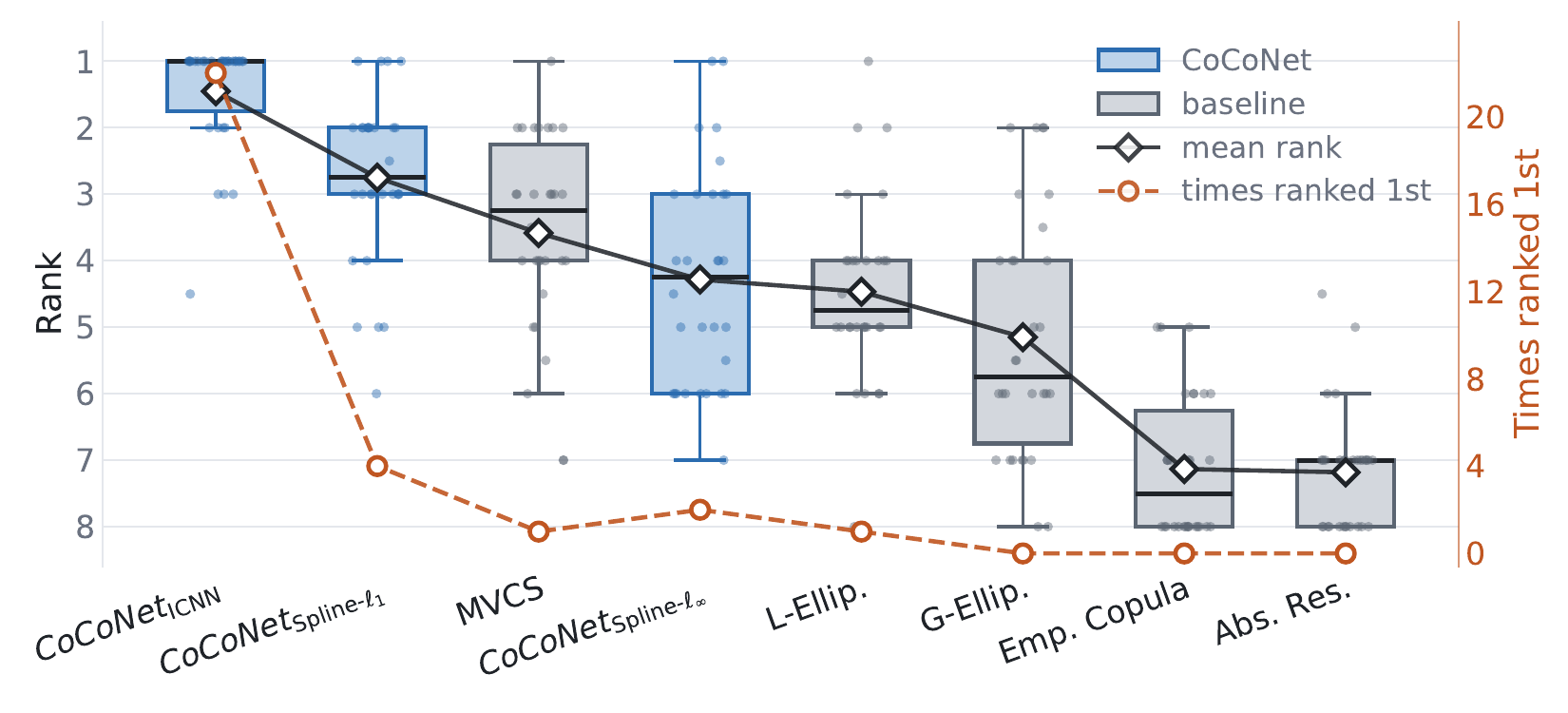}
    \caption{
    \textbf{Distribution of prediction-region efficiency ranks across 30
    multivariate regression tasks.}
    Lower rank indicates greater efficiency. Boxplots show the distribution of
    per-dataset ranks, diamonds denote mean rank, and the dashed curve reports the
    number of tasks on which each method ranks first.
    }
    \label{fig:multivariate-ranks}
\end{figure}

\clearpage

\begin{figure}[!t]
    \centering
    \includegraphics[width=0.85\linewidth]{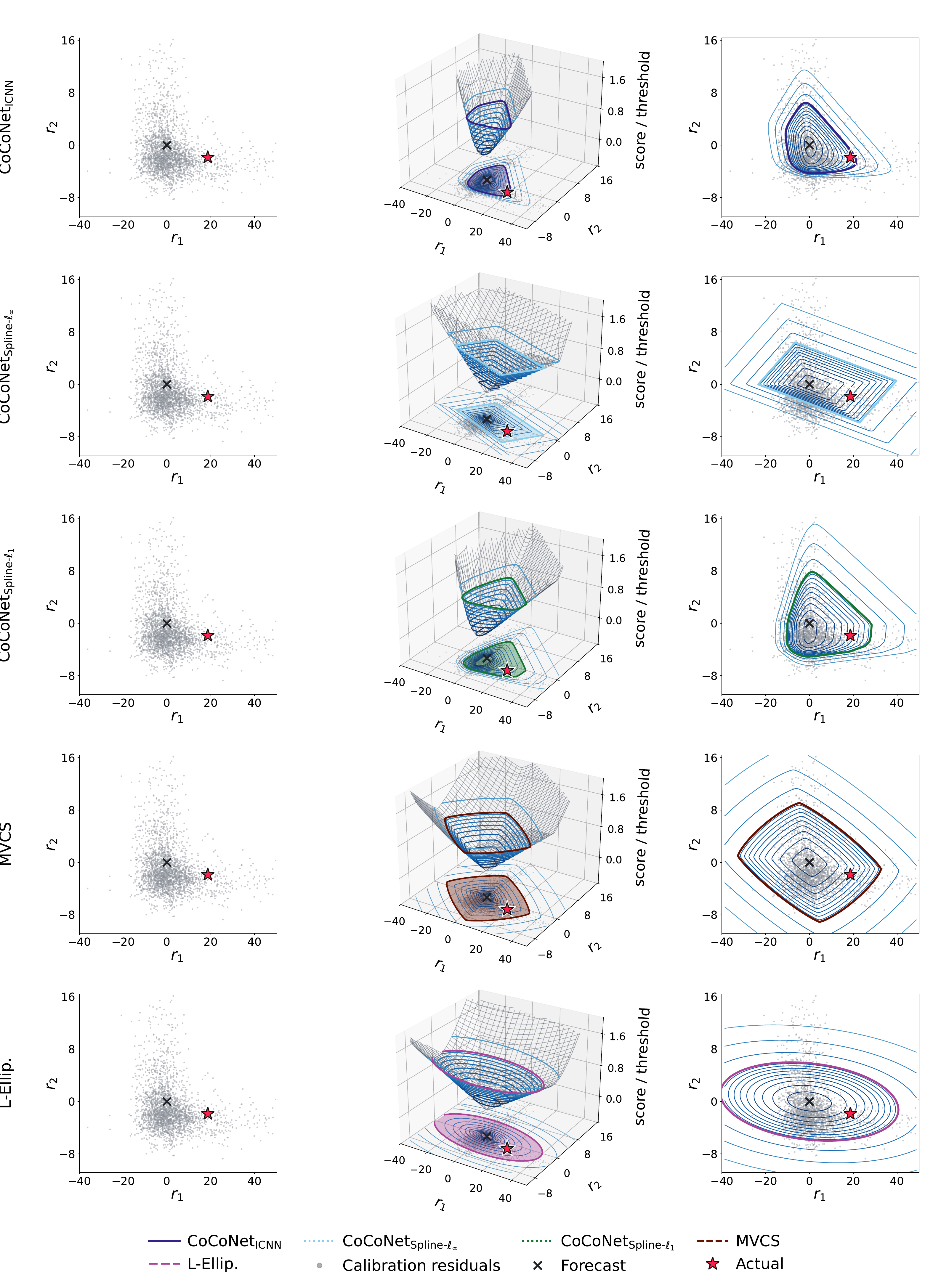}
    \caption{\textbf{Learned conformity scores and prediction regions on Online Retail II.} Each row shows one method at the same test input. \emph{Left:} calibration residuals $\mathbf{r} = \mathbf{y} - F(\mathbf{x})$ with the point forecast (cross, at the origin) and the actual (star). \emph{Middle:} the learned conformity score over the residual space, normalized by its calibrated threshold, so that the $90\%$ prediction region is the sublevel set at height one; the region is traced on the surface and projected, with the score level curves, onto the floor. \emph{Right:} level curves of the score together with the boundary of the prediction region. The residuals are strongly skewed; the $\method{}_{\mathrm{ICNN}}$ score bends its level sets toward the mass of the residuals and yields an asymmetric region, whereas the $\method{}_{\mathrm{Spline}\text{-}\ell_\infty}$ region is a rotated rectangle, the $\method{}_{\mathrm{Spline}\text{-}\ell_1}$ region an asymmetric polygon, and the MVCS and Local Ellipsoid regions remains centrally symmetric about the forecast, so both must cover low-density areas to reach the target coverage.}
    \label{fig:online-retail-3d}
\end{figure}
\clearpage

\begin{figure}[!t]
    \centering
    \includegraphics[width=0.85\linewidth]{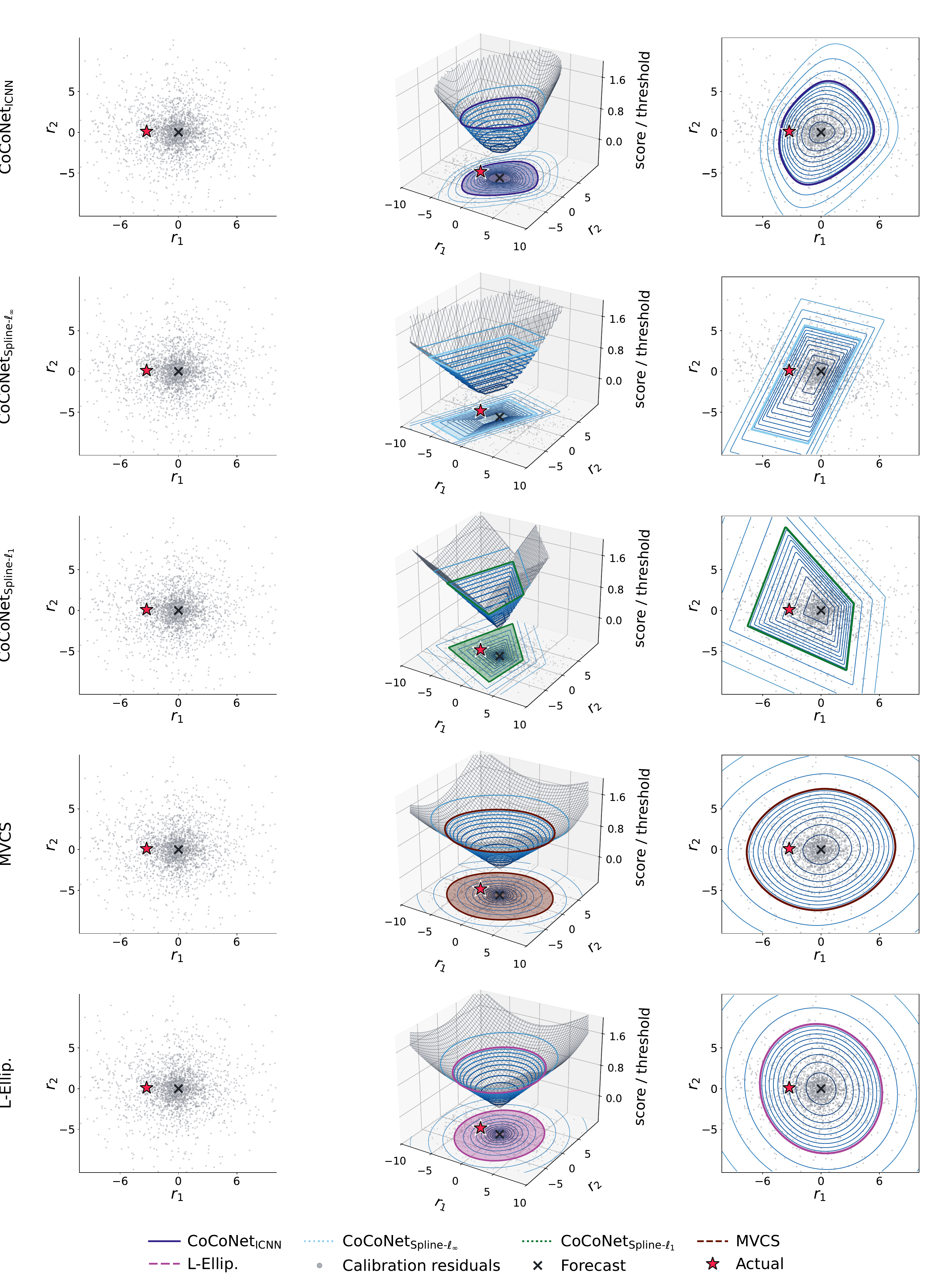}
    \caption{\textbf{Learned conformity scores and prediction regions on Argoverse2 Motion.} Each row shows one method at the same test input. \emph{Left:} calibration residuals $\mathbf{r} = \mathbf{y} - F(\mathbf{x})$ with the point forecast (cross, at the origin) and the realized residual (star). \emph{Middle:} the learned conformity score over the residual space, normalized by its calibrated threshold, so that the $90\%$ prediction region is the sublevel set at height one; the region is traced on the surface and projected, with the score level curves, onto the floor. \emph{Right:} level curves of the score together with the boundary of the prediction region. Here the residuals are close to elliptical and the three regions are similar in extent; the $\method{}_{\mathrm{ICNN}}$ region departs from an ellipse only where the residual density does, while the $\method{}_{\mathrm{Spline}\text{-}\ell_\infty}$ and Local Ellipsoid regions keep the rectangular and elliptical shapes imposed by their score families, and the $\method{}_{\mathrm{Spline}\text{-}\ell_1}$ region stays polygonal.}
    \label{fig:argoverse2-motion-3d}
\end{figure}
\clearpage

\begin{table}[!t]
\centering
\caption{%
\textbf{Extended multivariate comparison with MVCS variants and PBS at 90\% target coverage.}
Each method reports empirical test coverage in gray followed by the
dimension-normalized prediction-region size (lower is better); subscripts denote
standard deviations over five seeds. MVCS variants with $q\geq1$ preserve
convexity, whereas the free-$q$ variants do not impose this restriction.
Where a factor ($\times 10^{k}$) follows the dataset name, reported sizes are
scaled by $10^{k}$. The best and second-best sizes are shown in
\textcolor{bestblue}{\textbf{blue}} and \underline{underlined}.
}
\label{tab:mvcs-variants-pbs}

\renewcommand{\arraystretch}{1.5}
\setlength{\tabcolsep}{2pt}
\makebox[\textwidth][c]{\resizebox{1.12\textwidth}{!}{%
\begin{tabular}{
l c
r@{\hspace{2pt}}l@{\hspace{4pt}}
r@{\hspace{2pt}}l@{\hspace{4pt}}
r@{\hspace{2pt}}l@{\hspace{4pt}}
r@{\hspace{2pt}}l@{\hspace{4pt}}
r@{\hspace{2pt}}l@{\hspace{4pt}}
r@{\hspace{2pt}}l@{\hspace{4pt}}
r@{\hspace{2pt}}l@{\hspace{4pt}}
r@{\hspace{2pt}}l
}
\toprule
& & \multicolumn{10}{c}{\large Baselines}
& \multicolumn{6}{c}{\large \method{}} \\
\cmidrule[0.08em](lr){3-12}
\cmidrule[0.08em](lr){13-18}

Dataset
& $p$
& \multicolumn{2}{c}{\large MVCS$_{q\geq1}$}
& \multicolumn{2}{c}{\large MVCS$_{\mathrm{free}\ q}$}
& \multicolumn{2}{c}{\large MVCS$_{\mathrm{multi},\,q\geq1}$}
& \multicolumn{2}{c}{\large MVCS$_{\mathrm{multi},\,\mathrm{free}\ q}$}
& \multicolumn{2}{c}{\large PBS}
& \multicolumn{2}{c}{\large Spline-$\ell_\infty$}
& \multicolumn{2}{c}{\large Spline-$\ell_1$}
& \multicolumn{2}{c}{\large ICNN} \\

& & \multicolumn{1}{c}{\large\color{covgray} Cov.\,\%}
& \multicolumn{1}{c}{\large Size}
& \multicolumn{1}{c}{\large\color{covgray} Cov.\,\%}
& \multicolumn{1}{c}{\large Size}
& \multicolumn{1}{c}{\large\color{covgray} Cov.\,\%}
& \multicolumn{1}{c}{\large Size}
& \multicolumn{1}{c}{\large\color{covgray} Cov.\,\%}
& \multicolumn{1}{c}{\large Size}
& \multicolumn{1}{c}{\large\color{covgray} Cov.\,\%}
& \multicolumn{1}{c}{\large Size}
& \multicolumn{1}{c}{\large\color{covgray} Cov.\,\%}
& \multicolumn{1}{c}{\large Size}
& \multicolumn{1}{c}{\large\color{covgray} Cov.\,\%}
& \multicolumn{1}{c}{\large Size}
& \multicolumn{1}{c}{\large\color{covgray} Cov.\,\%}
& \multicolumn{1}{c}{\large Size} \\
\midrule

Argoverse2-2D {\footnotesize($\times 10^{1}$)} & 2
& {\color{covgray}\large $90.2_{\pm 0.7}$} & {\Large $3.68_{\pm 0.08}$}
& {\color{covgray}\large $90.1_{\pm 0.6}$} & {\Large $3.68_{\pm 0.05}$}
& {\color{covgray}\large $90.0_{\pm 0.4}$} & {\Large $3.89_{\pm 0.03}$}
& {\color{covgray}\large $89.9_{\pm 0.5}$} & {\Large $3.89_{\pm 0.03}$}
& {\color{covgray}\large $90.2_{\pm 0.6}$} & {\Large $3.99_{\pm 0.11}$}
& {\color{covgray}\large $90.1_{\pm 0.6}$} & {\Large $3.58_{\pm 0.03}$}
& {\color{covgray}\large $90.2_{\pm 0.7}$} & {\Large $\color{bestblue}\mathbf{3.50_{\pm 0.10}}$}
& {\color{covgray}\large $89.8_{\pm 0.6}$} & {\Large $\underline{3.51_{\pm 0.06}}$} \\

Argoverse2 Motion {\footnotesize($\times 10^{1}$)} & 2
& {\color{covgray}\large $90.0_{\pm 0.7}$} & {\Large $1.14_{\pm 0.03}$}
& {\color{covgray}\large $90.1_{\pm 0.8}$} & {\Large $1.14_{\pm 0.03}$}
& {\color{covgray}\large $90.3_{\pm 0.5}$} & {\Large $1.20_{\pm 0.01}$}
& {\color{covgray}\large $90.3_{\pm 0.5}$} & {\Large $1.20_{\pm 0.01}$}
& {\color{covgray}\large $90.1_{\pm 0.8}$} & {\Large $1.22_{\pm 0.02}$}
& {\color{covgray}\large $90.1_{\pm 0.6}$} & {\Large $\underline{1.11_{\pm 0.02}}$}
& {\color{covgray}\large $90.3_{\pm 0.9}$} & {\Large $1.11_{\pm 0.03}$}
& {\color{covgray}\large $89.9_{\pm 0.6}$} & {\Large $\color{bestblue}\mathbf{1.10_{\pm 0.01}}$} \\

Bias-Corr.\ Temp. & 2
& {\color{covgray}\large $89.8_{\pm 0.8}$} & {\Large $4.08_{\pm 0.08}$}
& {\color{covgray}\large $89.9_{\pm 1.0}$} & {\Large $4.06_{\pm 0.05}$}
& {\color{covgray}\large $89.8_{\pm 0.9}$} & {\Large $4.09_{\pm 0.04}$}
& {\color{covgray}\large $89.8_{\pm 0.9}$} & {\Large $4.09_{\pm 0.04}$}
& {\color{covgray}\large $90.2_{\pm 0.9}$} & {\Large $4.21_{\pm 0.07}$}
& {\color{covgray}\large $89.7_{\pm 0.8}$} & {\Large $\underline{3.86_{\pm 0.10}}$}
& {\color{covgray}\large $89.9_{\pm 0.9}$} & {\Large $\color{bestblue}\mathbf{3.82_{\pm 0.13}}$}
& {\color{covgray}\large $90.4_{\pm 0.5}$} & {\Large $3.88_{\pm 0.13}$} \\

ENB & 2
& {\color{covgray}\large $90.0_{\pm 2.5}$} & {\Large $9.56_{\pm 2.32}$}
& {\color{covgray}\large $90.6_{\pm 3.1}$} & {\Large $9.28_{\pm 1.13}$}
& {\color{covgray}\large $89.1_{\pm 4.5}$} & {\Large $10.80_{\pm 1.05}$}
& {\color{covgray}\large $89.0_{\pm 4.4}$} & {\Large $11.15_{\pm 1.48}$}
& {\color{covgray}\large $89.7_{\pm 4.2}$} & {\Large $\underline{8.00_{\pm 0.44}}$}
& {\color{covgray}\large $90.3_{\pm 3.0}$} & {\Large $8.14_{\pm 1.24}$}
& {\color{covgray}\large $90.5_{\pm 1.8}$} & {\Large $8.19_{\pm 0.39}$}
& {\color{covgray}\large $89.4_{\pm 3.5}$} & {\Large $\color{bestblue}\mathbf{7.55_{\pm 0.13}}$} \\

Geographical Music {\footnotesize($\times 10^{2}$)} & 2
& {\color{covgray}\large $89.1_{\pm 2.6}$} & {\Large $1.26_{\pm 0.15}$}
& {\color{covgray}\large $89.6_{\pm 2.4}$} & {\Large $1.24_{\pm 0.12}$}
& {\color{covgray}\large $90.5_{\pm 0.9}$} & {\Large $\underline{1.08_{\pm 0.09}}$}
& {\color{covgray}\large $90.6_{\pm 0.9}$} & {\Large $1.09_{\pm 0.08}$}
& {\color{covgray}\large $90.2_{\pm 1.1}$} & {\Large $1.33_{\pm 0.06}$}
& {\color{covgray}\large $90.7_{\pm 1.6}$} & {\Large $\color{bestblue}\mathbf{1.06_{\pm 0.02}}$}
& {\color{covgray}\large $91.5_{\pm 2.1}$} & {\Large $1.08_{\pm 0.04}$}
& {\color{covgray}\large $91.2_{\pm 2.0}$} & {\Large $1.14_{\pm 0.13}$} \\

CASP {\footnotesize($\times 10^{1}$)} & 2
& {\color{covgray}\large $89.8_{\pm 0.6}$} & {\Large $2.48_{\pm 0.08}$}
& {\color{covgray}\large $89.8_{\pm 0.8}$} & {\Large $2.47_{\pm 0.05}$}
& {\color{covgray}\large $90.1_{\pm 0.6}$} & {\Large $2.85_{\pm 0.04}$}
& {\color{covgray}\large $90.2_{\pm 0.6}$} & {\Large $2.85_{\pm 0.04}$}
& {\color{covgray}\large $90.2_{\pm 0.9}$} & {\Large $2.90_{\pm 0.05}$}
& {\color{covgray}\large $89.7_{\pm 0.9}$} & {\Large $2.42_{\pm 0.06}$}
& {\color{covgray}\large $89.7_{\pm 0.4}$} & {\Large $\underline{2.39_{\pm 0.08}}$}
& {\color{covgray}\large $89.8_{\pm 0.8}$} & {\Large $\color{bestblue}\mathbf{2.34_{\pm 0.09}}$} \\

Online Retail II {\footnotesize($\times 10^{1}$)} & 2
& {\color{covgray}\large $89.9_{\pm 0.4}$} & {\Large $2.25_{\pm 0.12}$}
& {\color{covgray}\large $89.9_{\pm 0.4}$} & {\Large $2.28_{\pm 0.12}$}
& {\color{covgray}\large $89.8_{\pm 0.5}$} & {\Large $2.18_{\pm 0.35}$}
& {\color{covgray}\large $89.5_{\pm 0.5}$} & {\Large $2.17_{\pm 0.37}$}
& {\color{covgray}\large $90.1_{\pm 0.4}$} & {\Large $3.68_{\pm 0.49}$}
& {\color{covgray}\large $89.7_{\pm 0.4}$} & {\Large $2.05_{\pm 0.06}$}
& {\color{covgray}\large $89.8_{\pm 0.6}$} & {\Large $\underline{1.92_{\pm 0.11}}$}
& {\color{covgray}\large $90.0_{\pm 0.5}$} & {\Large $\color{bestblue}\mathbf{1.85_{\pm 0.12}}$} \\

QM9-2D & 2
& {\color{covgray}\large $89.8_{\pm 0.5}$} & {\Large $5.30_{\pm 0.06}$}
& {\color{covgray}\large $89.5_{\pm 0.6}$} & {\Large $5.26_{\pm 0.07}$}
& {\color{covgray}\large $89.3_{\pm 0.6}$} & {\Large $5.73_{\pm 0.05}$}
& {\color{covgray}\large $89.3_{\pm 0.6}$} & {\Large $5.72_{\pm 0.06}$}
& {\color{covgray}\large $90.0_{\pm 0.5}$} & {\Large $5.65_{\pm 0.18}$}
& {\color{covgray}\large $89.6_{\pm 0.4}$} & {\Large $\color{bestblue}\mathbf{5.06_{\pm 0.07}}$}
& {\color{covgray}\large $89.4_{\pm 0.4}$} & {\Large $5.10_{\pm 0.10}$}
& {\color{covgray}\large $89.5_{\pm 0.8}$} & {\Large $\underline{5.08_{\pm 0.11}}$} \\

Household Power & 3
& {\color{covgray}\large $89.9_{\pm 0.5}$} & {\Large $5.46_{\pm 0.37}$}
& {\color{covgray}\large $89.9_{\pm 0.5}$} & {\Large $\underline{5.43_{\pm 0.35}}$}
& {\color{covgray}\large $90.1_{\pm 0.6}$} & {\Large $12.08_{\pm 1.48}$}
& {\color{covgray}\large $90.1_{\pm 0.6}$} & {\Large $12.02_{\pm 1.42}$}
& {\color{covgray}\large $89.9_{\pm 0.2}$} & {\Large $15.64_{\pm 1.07}$}
& {\color{covgray}\large $90.1_{\pm 0.4}$} & {\Large $7.15_{\pm 0.71}$}
& {\color{covgray}\large $90.2_{\pm 0.4}$} & {\Large $6.96_{\pm 0.86}$}
& {\color{covgray}\large $89.9_{\pm 0.4}$} & {\Large $\color{bestblue}\mathbf{5.10_{\pm 0.66}}$} \\

SCPF & 3
& {\color{covgray}\large $90.9_{\pm 1.1}$} & {\Large $5.82_{\pm 0.32}$}
& {\color{covgray}\large $90.7_{\pm 1.6}$} & {\Large $5.71_{\pm 0.25}$}
& {\color{covgray}\large $89.3_{\pm 1.9}$} & {\Large $7.33_{\pm 0.60}$}
& {\color{covgray}\large $89.3_{\pm 1.9}$} & {\Large $7.33_{\pm 0.60}$}
& {\color{covgray}\large $90.9_{\pm 2.0}$} & {\Large $9.50_{\pm 2.60}$}
& {\color{covgray}\large $89.6_{\pm 1.5}$} & {\Large $6.07_{\pm 0.99}$}
& {\color{covgray}\large $88.2_{\pm 3.1}$} & {\Large $\underline{5.19_{\pm 0.80}}$}
& {\color{covgray}\large $89.0_{\pm 3.2}$} & {\Large $\color{bestblue}\mathbf{5.16_{\pm 0.68}}$} \\

Argoverse2-4D {\footnotesize($\times 10^{1}$)} & 4
& {\color{covgray}\large $90.1_{\pm 0.5}$} & {\Large $1.61_{\pm 0.02}$}
& {\color{covgray}\large $90.3_{\pm 0.4}$} & {\Large $1.61_{\pm 0.02}$}
& {\color{covgray}\large $90.5_{\pm 0.6}$} & {\Large $2.18_{\pm 0.10}$}
& {\color{covgray}\large $90.5_{\pm 0.6}$} & {\Large $2.18_{\pm 0.10}$}
& {\color{covgray}\large $89.7_{\pm 0.7}$} & {\Large $1.77_{\pm 0.11}$}
& {\color{covgray}\large $89.9_{\pm 0.3}$} & {\Large $1.64_{\pm 0.02}$}
& {\color{covgray}\large $90.2_{\pm 0.4}$} & {\Large $\underline{1.60_{\pm 0.04}}$}
& {\color{covgray}\large $89.9_{\pm 0.6}$} & {\Large $\color{bestblue}\mathbf{1.57_{\pm 0.02}}$} \\

QM8 {\footnotesize($\times 10^{-2}$)} & 4
& {\color{covgray}\large $89.4_{\pm 0.6}$} & {\Large $11.89_{\pm 0.09}$}
& {\color{covgray}\large $89.6_{\pm 0.6}$} & {\Large $11.94_{\pm 0.13}$}
& {\color{covgray}\large $89.6_{\pm 1.1}$} & {\Large $15.41_{\pm 0.47}$}
& {\color{covgray}\large $89.6_{\pm 1.1}$} & {\Large $15.41_{\pm 0.47}$}
& {\color{covgray}\large $90.0_{\pm 0.4}$} & {\Large $13.74_{\pm 0.14}$}
& {\color{covgray}\large $89.7_{\pm 0.8}$} & {\Large $11.37_{\pm 0.23}$}
& {\color{covgray}\large $89.6_{\pm 0.3}$} & {\Large $\underline{11.07_{\pm 0.09}}$}
& {\color{covgray}\large $89.3_{\pm 0.8}$} & {\Large $\color{bestblue}\mathbf{9.66_{\pm 0.21}}$} \\

QM9-4D & 4
& {\color{covgray}\large $89.6_{\pm 0.5}$} & {\Large $3.48_{\pm 0.04}$}
& {\color{covgray}\large $89.6_{\pm 0.7}$} & {\Large $3.48_{\pm 0.05}$}
& {\color{covgray}\large $89.5_{\pm 0.5}$} & {\Large $4.15_{\pm 0.14}$}
& {\color{covgray}\large $89.5_{\pm 0.5}$} & {\Large $4.15_{\pm 0.14}$}
& {\color{covgray}\large $89.7_{\pm 0.4}$} & {\Large $4.01_{\pm 0.33}$}
& {\color{covgray}\large $89.8_{\pm 0.2}$} & {\Large $3.55_{\pm 0.04}$}
& {\color{covgray}\large $89.6_{\pm 0.8}$} & {\Large $\underline{3.47_{\pm 0.03}}$}
& {\color{covgray}\large $89.9_{\pm 0.6}$} & {\Large $\color{bestblue}\mathbf{3.34_{\pm 0.02}}$} \\

SGEMM {\footnotesize($\times 10^{1}$)} & 4
& {\color{covgray}\large $89.9_{\pm 0.5}$} & {\Large $3.03_{\pm 0.17}$}
& {\color{covgray}\large $90.1_{\pm 0.6}$} & {\Large $\color{bestblue}\mathbf{3.02_{\pm 0.13}}$}
& {\color{covgray}\large $90.0_{\pm 0.9}$} & {\Large $9.51_{\pm 0.86}$}
& {\color{covgray}\large $90.0_{\pm 0.9}$} & {\Large $9.46_{\pm 0.83}$}
& {\color{covgray}\large $89.8_{\pm 0.5}$} & {\Large $16.08_{\pm 1.10}$}
& {\color{covgray}\large $90.1_{\pm 0.4}$} & {\Large $3.22_{\pm 0.20}$}
& {\color{covgray}\large $90.1_{\pm 0.6}$} & {\Large $3.08_{\pm 0.16}$}
& {\color{covgray}\large $90.1_{\pm 0.4}$} & {\Large $\underline{3.02_{\pm 0.18}}$} \\

Beijing AQ-6D {\footnotesize($\times 10^{2}$)} & 6
& {\color{covgray}\large $89.7_{\pm 0.5}$} & {\Large $1.90_{\pm 0.03}$}
& {\color{covgray}\large $89.4_{\pm 0.6}$} & {\Large $1.89_{\pm 0.04}$}
& {\color{covgray}\large $89.6_{\pm 0.4}$} & {\Large $2.53_{\pm 0.06}$}
& {\color{covgray}\large $89.6_{\pm 0.4}$} & {\Large $2.53_{\pm 0.06}$}
& {\color{covgray}\large $90.1_{\pm 0.6}$} & {\Large $2.29_{\pm 0.24}$}
& {\color{covgray}\large $89.9_{\pm 0.5}$} & {\Large $1.97_{\pm 0.02}$}
& {\color{covgray}\large $89.9_{\pm 0.3}$} & {\Large $\underline{1.82_{\pm 0.01}}$}
& {\color{covgray}\large $89.9_{\pm 0.6}$} & {\Large $\color{bestblue}\mathbf{1.68_{\pm 0.02}}$} \\

Communities Crime {\footnotesize($\times 10^{2}$)} & 6
& {\color{covgray}\large $91.7_{\pm 1.1}$} & {\Large $6.30_{\pm 0.35}$}
& {\color{covgray}\large $91.4_{\pm 1.3}$} & {\Large $6.16_{\pm 0.07}$}
& {\color{covgray}\large $91.3_{\pm 1.6}$} & {\Large $6.76_{\pm 0.33}$}
& {\color{covgray}\large $91.3_{\pm 1.6}$} & {\Large $6.76_{\pm 0.33}$}
& {\color{covgray}\large $90.8_{\pm 2.1}$} & {\Large $8.28_{\pm 0.42}$}
& {\color{covgray}\large $91.6_{\pm 1.7}$} & {\Large $6.94_{\pm 0.54}$}
& {\color{covgray}\large $90.6_{\pm 2.2}$} & {\Large $\underline{6.07_{\pm 0.10}}$}
& {\color{covgray}\large $90.6_{\pm 1.7}$} & {\Large $\color{bestblue}\mathbf{5.94_{\pm 0.17}}$} \\

ATP1D {\footnotesize($\times 10^{2}$)} & 6
& {\color{covgray}\large $89.4_{\pm 3.1}$} & {\Large $4.61_{\pm 1.07}$}
& {\color{covgray}\large $89.1_{\pm 3.6}$} & {\Large $4.61_{\pm 1.13}$}
& {\color{covgray}\large $90.3_{\pm 3.8}$} & {\Large $5.55_{\pm 1.03}$}
& {\color{covgray}\large $90.3_{\pm 3.8}$} & {\Large $5.55_{\pm 1.03}$}
& {\color{covgray}\large $90.9_{\pm 3.0}$} & {\Large $\underline{4.18_{\pm 0.45}}$}
& {\color{covgray}\large $89.4_{\pm 5.5}$} & {\Large $4.20_{\pm 0.47}$}
& {\color{covgray}\large $88.5_{\pm 4.9}$} & {\Large $4.50_{\pm 0.91}$}
& {\color{covgray}\large $88.8_{\pm 3.8}$} & {\Large $\color{bestblue}\mathbf{3.60_{\pm 0.43}}$} \\

QM9-6D & 6
& {\color{covgray}\large $90.0_{\pm 0.7}$} & {\Large $5.07_{\pm 0.06}$}
& {\color{covgray}\large $89.8_{\pm 0.9}$} & {\Large $5.06_{\pm 0.04}$}
& {\color{covgray}\large $89.5_{\pm 0.7}$} & {\Large $7.25_{\pm 0.32}$}
& {\color{covgray}\large $89.5_{\pm 0.7}$} & {\Large $7.25_{\pm 0.32}$}
& {\color{covgray}\large $90.1_{\pm 0.7}$} & {\Large $5.75_{\pm 0.19}$}
& {\color{covgray}\large $90.0_{\pm 0.5}$} & {\Large $5.35_{\pm 0.07}$}
& {\color{covgray}\large $89.8_{\pm 1.0}$} & {\Large $\underline{4.90_{\pm 0.08}}$}
& {\color{covgray}\large $90.1_{\pm 0.6}$} & {\Large $\color{bestblue}\mathbf{4.85_{\pm 0.11}}$} \\

SARCOS & 7
& {\color{covgray}\large $89.7_{\pm 0.5}$} & {\Large $3.61_{\pm 0.05}$}
& {\color{covgray}\large $89.9_{\pm 0.6}$} & {\Large $3.63_{\pm 0.02}$}
& {\color{covgray}\large $89.9_{\pm 0.8}$} & {\Large $4.66_{\pm 0.06}$}
& {\color{covgray}\large $89.9_{\pm 0.8}$} & {\Large $4.66_{\pm 0.06}$}
& {\color{covgray}\large $90.0_{\pm 0.9}$} & {\Large $4.44_{\pm 0.39}$}
& {\color{covgray}\large $89.9_{\pm 0.8}$} & {\Large $3.87_{\pm 0.07}$}
& {\color{covgray}\large $89.9_{\pm 0.4}$} & {\Large $\underline{3.48_{\pm 0.09}}$}
& {\color{covgray}\large $89.8_{\pm 0.4}$} & {\Large $\color{bestblue}\mathbf{3.34_{\pm 0.01}}$} \\

Argoverse2-8D & 8
& {\color{covgray}\large $90.1_{\pm 0.4}$} & {\Large $\underline{6.84_{\pm 0.11}}$}
& {\color{covgray}\large $89.9_{\pm 0.4}$} & {\Large $6.84_{\pm 0.15}$}
& {\color{covgray}\large $90.2_{\pm 0.4}$} & {\Large $16.69_{\pm 1.00}$}
& {\color{covgray}\large $90.2_{\pm 0.4}$} & {\Large $16.69_{\pm 1.00}$}
& {\color{covgray}\large $90.0_{\pm 0.4}$} & {\Large $9.20_{\pm 0.87}$}
& {\color{covgray}\large $90.0_{\pm 0.8}$} & {\Large $7.83_{\pm 0.24}$}
& {\color{covgray}\large $90.0_{\pm 0.7}$} & {\Large $6.96_{\pm 0.12}$}
& {\color{covgray}\large $90.1_{\pm 0.6}$} & {\Large $\color{bestblue}\mathbf{6.76_{\pm 0.10}}$} \\

QM9-8D & 8
& {\color{covgray}\large $90.2_{\pm 0.8}$} & {\Large $4.62_{\pm 0.09}$}
& {\color{covgray}\large $90.2_{\pm 0.7}$} & {\Large $4.61_{\pm 0.06}$}
& {\color{covgray}\large $89.8_{\pm 0.5}$} & {\Large $7.80_{\pm 0.32}$}
& {\color{covgray}\large $89.8_{\pm 0.5}$} & {\Large $7.80_{\pm 0.32}$}
& {\color{covgray}\large $89.6_{\pm 0.8}$} & {\Large $5.24_{\pm 0.23}$}
& {\color{covgray}\large $89.9_{\pm 0.5}$} & {\Large $5.06_{\pm 0.09}$}
& {\color{covgray}\large $89.9_{\pm 0.7}$} & {\Large $\color{bestblue}\mathbf{4.38_{\pm 0.09}}$}
& {\color{covgray}\large $90.1_{\pm 0.9}$} & {\Large $\underline{4.41_{\pm 0.09}}$} \\

Beijing AQ-12D {\footnotesize($\times 10^{2}$)} & 12
& {\color{covgray}\large $89.7_{\pm 0.3}$} & {\Large $1.34_{\pm 0.03}$}
& {\color{covgray}\large $89.7_{\pm 0.3}$} & {\Large $\underline{1.33_{\pm 0.03}}$}
& {\color{covgray}\large $89.8_{\pm 0.8}$} & {\Large $2.56_{\pm 0.11}$}
& {\color{covgray}\large $89.8_{\pm 0.8}$} & {\Large $2.56_{\pm 0.11}$}
& {\color{covgray}\large $90.0_{\pm 0.7}$} & {\Large $2.92_{\pm 0.52}$}
& {\color{covgray}\large $89.5_{\pm 0.2}$} & {\Large $1.55_{\pm 0.02}$}
& {\color{covgray}\large $89.7_{\pm 0.6}$} & {\Large $\color{bestblue}\mathbf{1.31_{\pm 0.01}}$}
& {\color{covgray}\large $89.4_{\pm 0.3}$} & {\Large $1.33_{\pm 0.05}$} \\

Knee Contact Force {\footnotesize($\times 10^{1}$)} & 12
& {\color{covgray}\large $90.1_{\pm 1.8}$} & {\Large $6.81_{\pm 0.06}$}
& {\color{covgray}\large $90.0_{\pm 1.9}$} & {\Large $\underline{6.68_{\pm 0.22}}$}
& {\color{covgray}\large $89.9_{\pm 1.9}$} & {\Large $16.55_{\pm 0.58}$}
& {\color{covgray}\large $89.9_{\pm 1.9}$} & {\Large $16.55_{\pm 0.58}$}
& {\color{covgray}\large $90.5_{\pm 0.7}$} & {\Large $9.06_{\pm 0.36}$}
& {\color{covgray}\large $90.4_{\pm 1.2}$} & {\Large $8.27_{\pm 0.22}$}
& {\color{covgray}\large $89.8_{\pm 1.6}$} & {\Large $7.16_{\pm 0.11}$}
& {\color{covgray}\large $90.3_{\pm 2.3}$} & {\Large $\color{bestblue}\mathbf{6.65_{\pm 0.33}}$} \\

WQ & 14
& {\color{covgray}\large $89.4_{\pm 2.5}$} & {\Large $\color{bestblue}\mathbf{5.95_{\pm 0.27}}$}
& {\color{covgray}\large $90.1_{\pm 2.2}$} & {\Large $6.39_{\pm 0.35}$}
& {\color{covgray}\large $89.2_{\pm 3.4}$} & {\Large $6.36_{\pm 0.24}$}
& {\color{covgray}\large $89.2_{\pm 3.4}$} & {\Large $6.36_{\pm 0.24}$}
& {\color{covgray}\large $87.7_{\pm 2.4}$} & {\Large $7.21_{\pm 0.71}$}
& {\color{covgray}\large $89.0_{\pm 1.8}$} & {\Large $6.93_{\pm 0.19}$}
& {\color{covgray}\large $89.3_{\pm 1.4}$} & {\Large $6.28_{\pm 0.53}$}
& {\color{covgray}\large $87.9_{\pm 2.1}$} & {\Large $\underline{6.11_{\pm 0.26}}$} \\

QM7b & 14
& {\color{covgray}\large $90.4_{\pm 0.7}$} & {\Large $\underline{1.88_{\pm 0.03}}$}
& {\color{covgray}\large $90.0_{\pm 0.9}$} & {\Large $1.89_{\pm 0.04}$}
& {\color{covgray}\large $90.5_{\pm 0.5}$} & {\Large $4.24_{\pm 0.13}$}
& {\color{covgray}\large $90.5_{\pm 0.5}$} & {\Large $4.24_{\pm 0.13}$}
& {\color{covgray}\large $90.4_{\pm 0.6}$} & {\Large $2.54_{\pm 0.07}$}
& {\color{covgray}\large $90.5_{\pm 0.7}$} & {\Large $2.32_{\pm 0.02}$}
& {\color{covgray}\large $89.9_{\pm 1.2}$} & {\Large $1.97_{\pm 0.03}$}
& {\color{covgray}\large $90.1_{\pm 0.6}$} & {\Large $\color{bestblue}\mathbf{1.78_{\pm 0.03}}$} \\

OES10 {\footnotesize($\times 10^{3}$)} & 16
& {\color{covgray}\large $90.2_{\pm 5.4}$} & {\Large $2.77_{\pm 0.68}$}
& {\color{covgray}\large $90.5_{\pm 5.0}$} & {\Large $2.77_{\pm 0.67}$}
& {\color{covgray}\large $90.8_{\pm 5.6}$} & {\Large $2.81_{\pm 0.66}$}
& {\color{covgray}\large $90.8_{\pm 5.6}$} & {\Large $2.81_{\pm 0.66}$}
& {\color{covgray}\large $90.0_{\pm 5.8}$} & {\Large $\color{bestblue}\mathbf{1.81_{\pm 0.30}}$}
& {\color{covgray}\large $90.8_{\pm 4.7}$} & {\Large $3.86_{\pm 1.13}$}
& {\color{covgray}\large $90.5_{\pm 4.4}$} & {\Large $2.95_{\pm 0.64}$}
& {\color{covgray}\large $91.0_{\pm 4.8}$} & {\Large $\underline{2.39_{\pm 0.51}}$} \\

OES97 {\footnotesize($\times 10^{3}$)} & 16
& {\color{covgray}\large $92.4_{\pm 4.7}$} & {\Large $4.85_{\pm 1.16}$}
& {\color{covgray}\large $91.8_{\pm 4.3}$} & {\Large $4.16_{\pm 0.98}$}
& {\color{covgray}\large $90.6_{\pm 2.9}$} & {\Large $3.26_{\pm 0.26}$}
& {\color{covgray}\large $90.6_{\pm 2.9}$} & {\Large $3.26_{\pm 0.26}$}
& {\color{covgray}\large $91.8_{\pm 3.5}$} & {\Large $39.09_{\pm 16.09}$}
& {\color{covgray}\large $90.0_{\pm 3.9}$} & {\Large $4.63_{\pm 0.93}$}
& {\color{covgray}\large $91.5_{\pm 3.9}$} & {\Large $\underline{3.16_{\pm 0.42}}$}
& {\color{covgray}\large $92.7_{\pm 2.2}$} & {\Large $\color{bestblue}\mathbf{2.87_{\pm 0.29}}$} \\

BDG2-Fox {\footnotesize($\times 10^{1}$)} & 16
& {\color{covgray}\large $89.5_{\pm 0.9}$} & {\Large $\underline{7.72_{\pm 0.11}}$}
& {\color{covgray}\large $89.8_{\pm 0.7}$} & {\Large $7.73_{\pm 0.15}$}
& {\color{covgray}\large $89.7_{\pm 0.5}$} & {\Large $9.90_{\pm 0.16}$}
& {\color{covgray}\large $89.7_{\pm 0.5}$} & {\Large $9.90_{\pm 0.16}$}
& {\color{covgray}\large $89.8_{\pm 0.3}$} & {\Large $9.27_{\pm 0.18}$}
& {\color{covgray}\large $89.9_{\pm 0.7}$} & {\Large $9.73_{\pm 0.31}$}
& {\color{covgray}\large $89.7_{\pm 0.8}$} & {\Large $10.97_{\pm 1.65}$}
& {\color{covgray}\large $89.7_{\pm 1.2}$} & {\Large $\color{bestblue}\mathbf{7.54_{\pm 0.16}}$} \\

BDG2-Eagle {\footnotesize($\times 10^{1}$)} & 16
& {\color{covgray}\large $89.8_{\pm 0.5}$} & {\Large $\underline{7.15_{\pm 0.06}}$}
& {\color{covgray}\large $90.0_{\pm 0.3}$} & {\Large $7.18_{\pm 0.05}$}
& {\color{covgray}\large $90.0_{\pm 0.6}$} & {\Large $8.36_{\pm 0.11}$}
& {\color{covgray}\large $90.0_{\pm 0.6}$} & {\Large $8.36_{\pm 0.11}$}
& {\color{covgray}\large $89.8_{\pm 0.6}$} & {\Large $7.97_{\pm 0.24}$}
& {\color{covgray}\large $89.4_{\pm 0.8}$} & {\Large $8.99_{\pm 0.16}$}
& {\color{covgray}\large $89.7_{\pm 0.4}$} & {\Large $7.35_{\pm 0.08}$}
& {\color{covgray}\large $89.8_{\pm 1.1}$} & {\Large $\color{bestblue}\mathbf{7.13_{\pm 0.12}}$} \\

BDG2-Lamb {\footnotesize($\times 10^{1}$)} & 16
& {\color{covgray}\large $89.6_{\pm 0.7}$} & {\Large $\underline{7.31_{\pm 0.12}}$}
& {\color{covgray}\large $89.8_{\pm 0.7}$} & {\Large $7.34_{\pm 0.12}$}
& {\color{covgray}\large $89.8_{\pm 0.2}$} & {\Large $12.11_{\pm 0.15}$}
& {\color{covgray}\large $89.8_{\pm 0.2}$} & {\Large $12.11_{\pm 0.15}$}
& {\color{covgray}\large $89.8_{\pm 0.8}$} & {\Large $9.23_{\pm 0.15}$}
& {\color{covgray}\large $90.2_{\pm 0.6}$} & {\Large $8.95_{\pm 0.20}$}
& {\color{covgray}\large $89.6_{\pm 0.4}$} & {\Large $7.51_{\pm 0.15}$}
& {\color{covgray}\large $89.6_{\pm 0.7}$} & {\Large $\color{bestblue}\mathbf{7.19_{\pm 0.25}}$} \\

\midrule
\multicolumn{2}{l}{Mean rank $\downarrow$}
& \multicolumn{2}{c}{\Large $3.92 \pm 1.46$}
& \multicolumn{2}{c}{\Large $3.75 \pm 1.28$}
& \multicolumn{2}{c}{\Large $6.58 \pm 1.27$}
& \multicolumn{2}{c}{\Large $6.53 \pm 1.19$}
& \multicolumn{2}{c}{\Large $6.33 \pm 1.92$}
& \multicolumn{2}{c}{\Large $4.52 \pm 1.74$}
& \multicolumn{2}{c}{\Large $\underline{2.93 \pm 1.61}$}
& \multicolumn{2}{c}{\Large $\color{bestblue}\mathbf{1.43 \pm 0.85}$} \\
\bottomrule
\end{tabular}%
}}
\end{table}

\begin{table}[!t]
\centering
\caption{%
\textbf{Structural properties of the MVCS variants and
$\method{}_{\mathrm{ICNN}}$.}
Single-norm MVCS remains centrally symmetric for both constrained and free
$q$, whereas the multi-norm formulation can represent asymmetric regions.
The evaluated multi-norm variants use the non-input-dependent formulation of
MVCS \cite{braun2026minimum}. The $q\geq1$ variants preserve convexity, while
$\method{}_{\mathrm{ICNN}}$ is input-dependent, asymmetric, and convex by
construction.
}
\label{tab:mvcs-properties}
\renewcommand{\arraystretch}{1.15}
\setlength{\tabcolsep}{7pt}
\begin{tabular}{lccc}
\toprule
Method
& Input-dependent
& Asymmetric
& Convex guarantee \\
\midrule
MVCS$_{q\geq1}$
& Yes & No & Yes \\

MVCS$_{\mathrm{free}\ q}$
& Yes & No & No \\

MVCS$_{\mathrm{multi},\,q\geq1}$
& No & Yes & Yes \\

MVCS$_{\mathrm{multi},\,\mathrm{free}\ q}$
& No & Yes & No \\

\method{}$_{\mathrm{ICNN}}$
& Yes & Yes & Yes \\
\bottomrule
\end{tabular}
\end{table}
\subparagraph{Extended Comparison with MVCS Variants.}

Table~\ref{tab:multivariate-results} reports the MVCS results using the single-norm variant with the convexity restriction $q \geq 1$. This setting was selected to provide a fair comparison across the convex baselines. Here, the study further compares \method{} with other variants of MVCS to examine whether its efficiency advantage remains when the convexity restriction is removed. Table~\ref{tab:mvcs-properties} summarizes the properties of the evaluated MVCS variants, covering the four combinations of single- and multi-norm formulations with constrained and unrestricted norm exponents.

Removing the restriction allows nonconvex quasi-norm regions and increases the flexibility of MVCS. However, the single-norm variants remain centrally symmetric around their center, even when $q$ is unrestricted. Therefore, relaxing $q$ can change the curvature and sparsity of the boundary, but it still cannot capture asymmetric uncertainty geometries. The multi-norm extension of~\cite{braun2026minimum} further increases the flexibility by using different norm parameters across different regions of the space, which removes this symmetry restriction.

This distinction helps explain the relatively modest effect of removing the $q\geq1$ constraint in the present experiments. The unrestricted single-norm variant improves the aggregate mean rank from $3.27$ to $3.00$, but retains the same central-symmetry restriction. In contrast,
$\method{}_{\mathrm{ICNN}}$ can represent asymmetric convex geometries while remaining input-dependent, allowing its prediction regions to adapt both their shape and their local geometry to the contextual input.

The multi-norm formulation gives MVCS more geometric flexibility and allows asymmetric prediction regions. However, the multi-norm model introduced by~\citet{braun2026minimum} is not the locally adaptive regression formulation used in their main experiments, since making the multi-norm transformations input-dependent increases the risk of overfitting, as reported in their paper.

The multi-norm variants are still included here as an additional comparison. However, their additional flexibility does not improve the overall efficiency on these regression tasks. The $q\geq1$ and unrestricted multi-norm variants obtain mean ranks of $6.58$ and $6.53$, compared with $3.92$ and $3.75$ for the corresponding single-norm variants, $2.93$ for $\method{}_{\mathrm{Spline}\text{-}\ell_1}$, and $1.43$ for $\method{}_{\mathrm{ICNN}}$. These results show that the efficiency advantage of $\method{}_{\mathrm{ICNN}}$ is not caused only by restricting MVCS to $q\geq1$. As also shown in Figure~\ref{fig:mvcs-variant-ratio-ranks}, the rank of $\method{}_{\mathrm{ICNN}}$ remains highly competitive against all MVCS variants, while the multi-norm variants result in worse overall ranking and efficiency. This further shows that increasing geometric flexibility alone does not necessarily lead to more efficient prediction regions, and that the combination of flexible geometry and input-dependent adaptation in $\method{}_{\mathrm{ICNN}}$ provides an important advantage.

\begin{figure}[!t]
    \centering
    \includegraphics[width=\linewidth]{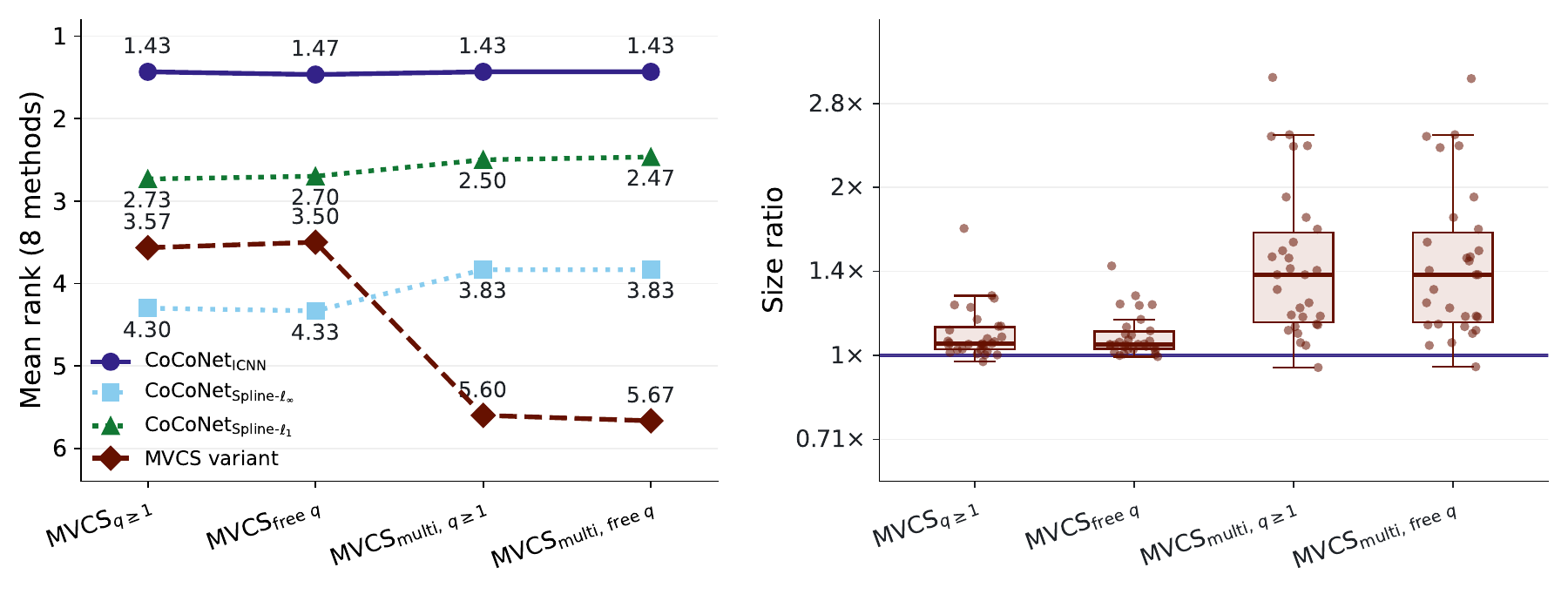}
    \caption{
    \textbf{MVCS variants versus $\method{}$.}
    \emph{Left:} mean rank of $\method{}_{\mathrm{ICNN}}$,
    $\method{}_{\mathrm{Spline}\text{-}\ell_\infty}$, $\method{}_{\mathrm{Spline}\text{-}\ell_1}$, and the corresponding MVCS variant across
    the 30 multivariate regression tasks, with one MVCS variant included at a
    time in the primary eight-method comparison (lower is better).
    \emph{Right:} dimension-normalized prediction-region size of each MVCS
    variant relative to $\method{}_{\mathrm{ICNN}}$ on each task (log scale);
    the horizontal line marks equal size, boxes show quartiles, whiskers extend
    to $1.5$ times the interquartile range, and points denote individual tasks.
    The label $q\geq1$ denotes variants whose norm exponents are constrained
    to preserve convexity.
    }
    \label{fig:mvcs-variant-ratio-ranks}
\end{figure}

\subparagraph{Comparison with Flexible Non-Convex Methods.}

Together with the discussed MVCS variants, the study also includes Pullback Conformal Sets (PBS)~\citet{kondratyev2025neural}. PBS is included as a flexible non-convex baseline because it provides a close comparison to \method{} among recent multivariate conformal methods. It learns input-dependent joint prediction-region geometry through neural optimal transport and convex neural potentials, while it does not impose convexity on the resulting prediction regions. This provides an additional comparison against a recent flexible non-convex method and allows the efficiency of \method{} to be evaluated beyond the convex baselines. Therefore, $\method{}_{\mathrm{ICNN}}$ is compared against a broader set of recent and strong multivariate conformal baselines.

Table~\ref{tab:mvcs-variants-pbs} shows the overall comparison of \method{} vs the mentioned basaelines, while PBS is competetive on several datasets It achieves the smallest dimension-normalized size on OES10 and the second-smallest size on ENB and ATP1D. However, it performance varies across datasets with a mean efficiency rank of 5.47 vs 1.30 of $\method{}_{\mathrm{ICNN}}$ 

These results provide strong evidence that the advantage of \method{} is not limited to predefined convex geometric families. It remains highly efficient compared with a flexible transport-based multivariate method, while still producing convex prediction regions by construction. This provides an additional benefit of \method{}: besides achieving strong efficiency, it preserves convex uncertainty sets, which are particularly useful for downstream decision-making and optimization.

\subsection{Ablation Studies}
\label{app:ablation}

The ablation study evaluates three design choices in $\method{}_{\mathrm{ICNN}}$: the benefit of learning an input-dependent conformity score, the effect of providing different contextual inputs to the model, and the benefit of using an explicit contextual encoder. Ten representative multivariate datasets are considered, with response dimensions ranging from $p=2$ to $p=16$ and covering different sample sizes and application areas.

\begin{figure}[!t]
    \centering
    \includegraphics[width=\linewidth]{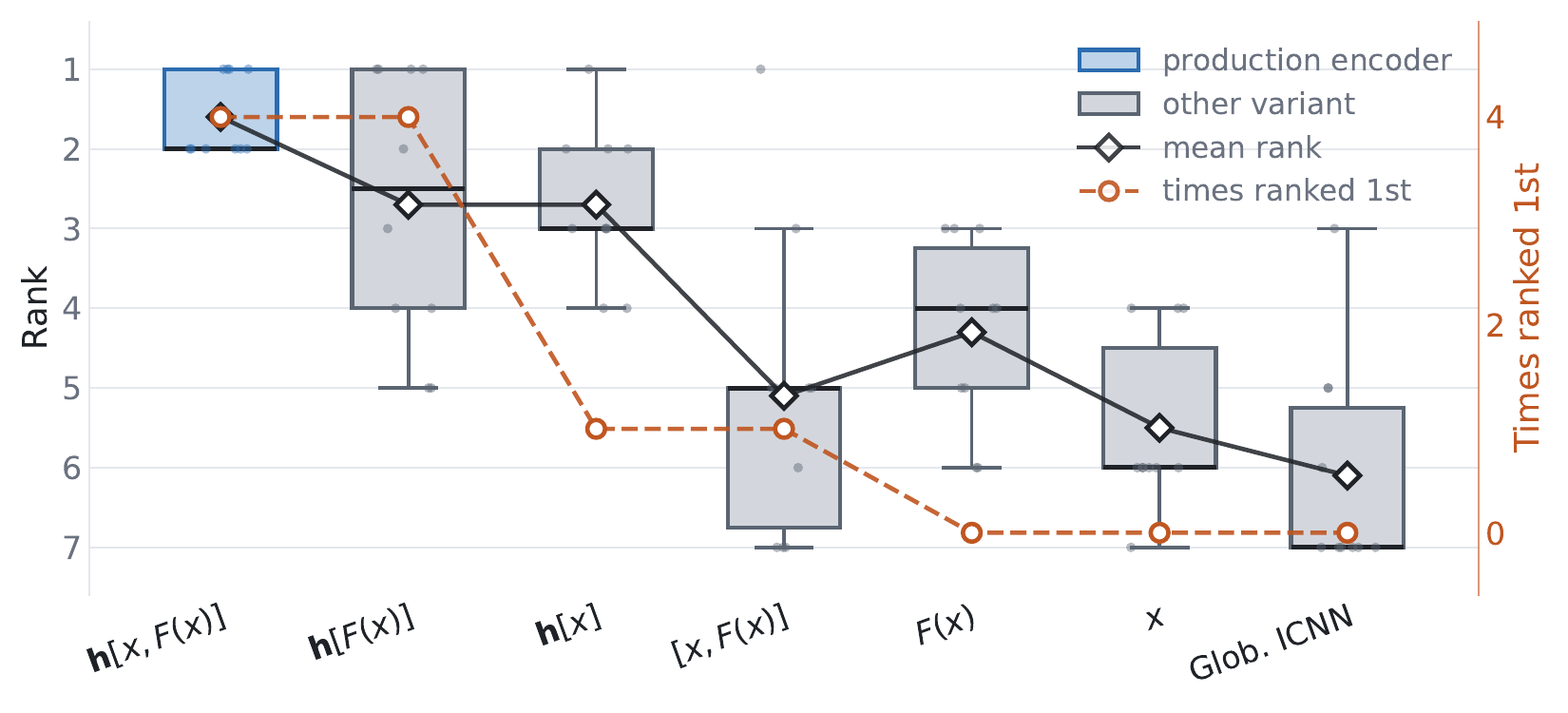}
    \caption{%
    \textbf{Efficiency-rank distribution for the
    $\method{}_{\mathrm{ICNN}}$ contextual-ablation variants.}
    Lower rank indicates smaller dimension-normalized prediction-region size.
    The comparison isolates the effects of the contextual encoder, the information
    provided to it, and input-dependent versus global scoring. $\mathbf{h}[\cdot]$ denotes the
    contextual encoder applied to the listed input; the other variants pass that input to
    the score model directly, and Glob.\ ICNN uses a single input-independent score. Diamonds denote mean
    rank, and the dashed curve reports the number of datasets on which each
    configuration ranks first.
    }
    \label{fig:encoder-rank-box}
\end{figure}

Table~\ref{tab:encoder-ablation} compares seven configurations corresponding to these three design choices. First, the Global $\method{}_{\mathrm{ICNN}}$ evaluates the benefit of input-dependent score learning by using a single unconditional conformity score for all inputs. Second, three configurations directly condition the ICNN on different contextual inputs: both the input covariates and point prediction $[\mathbf{x},F(\mathbf{x})]$, only the point prediction $F(\mathbf{x})$, or only the input covariates $\mathbf{x}$. Finally, three additional configurations use the contextual encoder $\mathbf{h}[\cdot]$ with the same three input choices, allowing the effect of explicitly encoding the contextual information before conditioning the ICNN to be evaluated.

Figure~\ref{fig:encoder-rank-box} summarizes the efficiency ranks of the seven configurations. As shown in the figure, the Global $\method{}_{\mathrm{ICNN}}$ has the worst overall performance, with a mean efficiency rank of $6.10$, showing the importance of making the conformity score input-dependent. Providing contextual inputs directly to the ICNN improves the efficiency across the next three configurations, supporting the benefit of conditioning the score on the input covariates, the point prediction, or both. The results further improve when the same contextual inputs are first processed through the encoder. Among all seven configurations, using the encoder with both the input covariates $\mathbf{x}$ and the point prediction $F(\mathbf{x})$ achieves the best overall performance and the lowest mean efficiency rank. This configuration also shows more stable performance across the selected datasets and consistently remains among the top-performing variants, supporting the final design of \method{}.

\begin{table}[!t]
\centering
\caption{%
\textbf{Contextual-encoder ablation for $\method{}_{\mathrm{ICNN}}$ at 90\% target coverage.}
The comparison isolates the effects of input dependence, the contextual encoder,
and the information supplied to the score model.
$\mathbf{h}[\cdot]$ denotes the contextual encoder applied to the listed input; the columns
without $\mathbf{h}$ pass that input to the score model directly, and Global uses a single input-independent
$\method{}_{\mathrm{ICNN}}$ score. Each configuration reports
empirical test coverage in gray followed by dimension-normalized prediction-region
size (lower is better); subscripts denote standard deviations over five seeds.
Where a factor ($\times 10^{k}$) follows the dataset name, reported sizes are scaled by
$10^{k}$. Best and second-best sizes are highlighted in
\textcolor{bestblue}{\textbf{blue}} and by underlining, respectively.
}
\label{tab:encoder-ablation}

\renewcommand{\arraystretch}{1.7}
\setlength{\tabcolsep}{1.2pt}

\resizebox{\linewidth}{!}{%
\begin{tabular}{
l c
r@{\hspace{1pt}}l@{\hspace{4pt}}
r@{\hspace{1pt}}l@{\hspace{4pt}}
r@{\hspace{1pt}}l@{\hspace{5pt}}
r@{\hspace{1pt}}l@{\hspace{4pt}}
r@{\hspace{1pt}}l@{\hspace{4pt}}
r@{\hspace{1pt}}l@{\hspace{5pt}}
r@{\hspace{1pt}}l
}
\toprule

{\large Dataset}
& {\large $p$}
& \multicolumn{2}{c}{\Large $\mathbf{h}[x,F(x)]$}
& \multicolumn{2}{c}{\Large $\mathbf{h}[F(x)]$}
& \multicolumn{2}{c}{\Large $\mathbf{h}[x]$}
& \multicolumn{2}{c}{\Large $[x,F(x)]$}
& \multicolumn{2}{c}{\Large $F(x)$}
& \multicolumn{2}{c}{\Large $x$}
& \multicolumn{2}{c}{\Large Global} \\

& &
\multicolumn{1}{c}{\large\color{covgray} Cov.\,\%}
& \multicolumn{1}{c}{\large Size}
& \multicolumn{1}{c}{\large\color{covgray} Cov.\,\%}
& \multicolumn{1}{c}{\large Size}
& \multicolumn{1}{c}{\large\color{covgray} Cov.\,\%}
& \multicolumn{1}{c}{\large Size}
& \multicolumn{1}{c}{\large\color{covgray} Cov.\,\%}
& \multicolumn{1}{c}{\large Size}
& \multicolumn{1}{c}{\large\color{covgray} Cov.\,\%}
& \multicolumn{1}{c}{\large Size}
& \multicolumn{1}{c}{\large\color{covgray} Cov.\,\%}
& \multicolumn{1}{c}{\large Size}
& \multicolumn{1}{c}{\large\color{covgray} Cov.\,\%}
& \multicolumn{1}{c}{\large Size} \\
\midrule

ENB & 2
& {\color{covgray}\large $89.4_{\pm3.5}$}
& {\Large $\underline{7.55_{\pm0.13}}$}
& {\color{covgray}\large $90.0_{\pm3.2}$}
& {\Large $9.10_{\pm1.57}$}
& {\color{covgray}\large $89.6_{\pm4.4}$}
& {\Large $7.64_{\pm0.29}$}
& {\color{covgray}\large $92.5_{\pm2.5}$}
& {\Large $\color{bestblue}\mathbf{7.40_{\pm0.59}}$}
& {\color{covgray}\large $90.5_{\pm5.4}$}
& {\Large $9.53_{\pm1.02}$}
& {\color{covgray}\large $92.6_{\pm3.9}$}
& {\Large $8.14_{\pm0.81}$}
& {\color{covgray}\large $87.5_{\pm4.1}$}
& {\Large $11.05_{\pm1.86}$} \\

Bias-Corr.\ Temp. & 2
& {\color{covgray}\large $90.4_{\pm0.5}$}
& {\Large $\color{bestblue}\mathbf{3.88_{\pm0.13}}$}
& {\color{covgray}\large $89.9_{\pm1.4}$}
& {\Large $4.03_{\pm0.10}$}
& {\color{covgray}\large $90.5_{\pm1.0}$}
& {\Large $\underline{3.89_{\pm0.10}}$}
& {\color{covgray}\large $90.5_{\pm0.9}$}
& {\Large $\underline{3.89_{\pm0.09}}$}
& {\color{covgray}\large $90.2_{\pm1.0}$}
& {\Large $4.05_{\pm0.05}$}
& {\color{covgray}\large $90.3_{\pm0.8}$}
& {\Large $\underline{3.89_{\pm0.12}}$}
& {\color{covgray}\large $90.1_{\pm1.3}$}
& {\Large $4.15_{\pm0.07}$} \\

Online Retail II {\footnotesize($\times10^{1}$)} & 2
& {\color{covgray}\large $91.2_{\pm1.1}$}
& {\Large $\underline{2.02_{\pm0.29}}$}
& {\color{covgray}\large $90.9_{\pm0.8}$}
& {\Large $\underline{2.02_{\pm0.18}}$}
& {\color{covgray}\large $90.7_{\pm0.8}$}
& {\Large $\color{bestblue}\mathbf{1.93_{\pm0.19}}$}
& {\color{covgray}\large $90.9_{\pm0.8}$}
& {\Large $2.33_{\pm0.16}$}
& {\color{covgray}\large $90.9_{\pm0.6}$}
& {\Large $2.18_{\pm0.14}$}
& {\color{covgray}\large $91.0_{\pm0.6}$}
& {\Large $2.36_{\pm0.20}$}
& {\color{covgray}\large $90.6_{\pm1.0}$}
& {\Large $2.81_{\pm0.96}$} \\

SCPF & 3
& {\color{covgray}\large $89.0_{\pm3.2}$}
& {\Large $\color{bestblue}\mathbf{5.16_{\pm0.68}}$}
& {\color{covgray}\large $91.3_{\pm2.2}$}
& {\Large $\color{bestblue}\mathbf{5.16_{\pm0.40}}$}
& {\color{covgray}\large $90.4_{\pm2.4}$}
& {\Large $\underline{5.84_{\pm0.87}}$}
& {\color{covgray}\large $90.8_{\pm1.9}$}
& {\Large $7.12_{\pm1.26}$}
& {\color{covgray}\large $91.1_{\pm1.6}$}
& {\Large $5.93_{\pm0.61}$}
& {\color{covgray}\large $90.9_{\pm2.2}$}
& {\Large $7.64_{\pm1.37}$}
& {\color{covgray}\large $89.5_{\pm3.2}$}
& {\Large $7.21_{\pm0.64}$} \\

QM8 {\footnotesize($\times10^{-1}$)} & 4
& {\color{covgray}\large $90.6_{\pm1.1}$}
& {\Large $\underline{1.10_{\pm0.03}}$}
& {\color{covgray}\large $90.2_{\pm0.7}$}
& {\Large $\color{bestblue}\mathbf{1.08_{\pm0.03}}$}
& {\color{covgray}\large $90.3_{\pm0.6}$}
& {\Large $1.19_{\pm0.04}$}
& {\color{covgray}\large $90.3_{\pm0.7}$}
& {\Large $1.21_{\pm0.02}$}
& {\color{covgray}\large $89.9_{\pm0.9}$}
& {\Large $1.17_{\pm0.09}$}
& {\color{covgray}\large $90.2_{\pm0.5}$}
& {\Large $1.26_{\pm0.02}$}
& {\color{covgray}\large $90.0_{\pm1.1}$}
& {\Large $1.44_{\pm0.04}$} \\

Communities Crime {\footnotesize($\times10^{2}$)} & 6
& {\color{covgray}\large $90.6_{\pm1.7}$}
& {\Large $\underline{5.94_{\pm0.17}}$}
& {\color{covgray}\large $91.5_{\pm2.2}$}
& {\Large $\color{bestblue}\mathbf{5.70_{\pm0.18}}$}
& {\color{covgray}\large $91.6_{\pm1.2}$}
& {\Large $6.07_{\pm0.07}$}
& {\color{covgray}\large $91.6_{\pm1.7}$}
& {\Large $7.37_{\pm0.80}$}
& {\color{covgray}\large $90.4_{\pm1.5}$}
& {\Large $6.54_{\pm0.56}$}
& {\color{covgray}\large $91.8_{\pm1.4}$}
& {\Large $7.26_{\pm0.73}$}
& {\color{covgray}\large $91.6_{\pm1.8}$}
& {\Large $6.72_{\pm0.67}$} \\

QM9-8D & 8
& {\color{covgray}\large $90.0_{\pm0.9}$}
& {\Large $\color{bestblue}\mathbf{4.94_{\pm0.08}}$}
& {\color{covgray}\large $90.5_{\pm0.8}$}
& {\Large $5.02_{\pm0.10}$}
& {\color{covgray}\large $90.3_{\pm0.9}$}
& {\Large $\underline{4.99_{\pm0.08}}$}
& {\color{covgray}\large $90.2_{\pm0.7}$}
& {\Large $5.11_{\pm0.12}$}
& {\color{covgray}\large $89.8_{\pm0.9}$}
& {\Large $5.02_{\pm0.14}$}
& {\color{covgray}\large $90.3_{\pm0.8}$}
& {\Large $5.16_{\pm0.10}$}
& {\color{covgray}\large $90.4_{\pm0.9}$}
& {\Large $5.57_{\pm0.27}$} \\

Knee Contact Force {\footnotesize($\times10^{1}$)} & 12
& {\color{covgray}\large $90.3_{\pm2.3}$}
& {\Large $\underline{6.65_{\pm0.33}}$}
& {\color{covgray}\large $90.0_{\pm1.7}$}
& {\Large $\color{bestblue}\mathbf{6.64_{\pm0.23}}$}
& {\color{covgray}\large $90.0_{\pm2.0}$}
& {\Large $6.76_{\pm0.40}$}
& {\color{covgray}\large $89.9_{\pm2.0}$}
& {\Large $7.31_{\pm0.62}$}
& {\color{covgray}\large $90.5_{\pm1.3}$}
& {\Large $6.91_{\pm0.25}$}
& {\color{covgray}\large $89.7_{\pm1.6}$}
& {\Large $6.77_{\pm0.11}$}
& {\color{covgray}\large $90.0_{\pm1.8}$}
& {\Large $7.60_{\pm0.39}$} \\

QM7b & 14
& {\color{covgray}\large $90.1_{\pm0.6}$}
& {\Large $\color{bestblue}\mathbf{1.78_{\pm0.03}}$}
& {\color{covgray}\large $91.0_{\pm0.9}$}
& {\Large $1.93_{\pm0.12}$}
& {\color{covgray}\large $90.5_{\pm1.1}$}
& {\Large $\underline{1.84_{\pm0.03}}$}
& {\color{covgray}\large $90.2_{\pm0.7}$}
& {\Large $3.17_{\pm0.65}$}
& {\color{covgray}\large $90.2_{\pm0.8}$}
& {\Large $1.89_{\pm0.05}$}
& {\color{covgray}\large $90.4_{\pm0.6}$}
& {\Large $2.49_{\pm0.34}$}
& {\color{covgray}\large $89.8_{\pm0.6}$}
& {\Large $2.10_{\pm0.05}$} \\

OES97 {\footnotesize($\times10^{3}$)} & 16
& {\color{covgray}\large $92.7_{\pm2.2}$}
& {\Large $\underline{2.87_{\pm0.29}}$}
& {\color{covgray}\large $90.0_{\pm5.6}$}
& {\Large $\color{bestblue}\mathbf{2.79_{\pm0.34}}$}
& {\color{covgray}\large $91.8_{\pm3.9}$}
& {\Large $3.56_{\pm0.43}$}
& {\color{covgray}\large $91.8_{\pm5.2}$}
& {\Large $20.51_{\pm7.39}$}
& {\color{covgray}\large $93.3_{\pm3.0}$}
& {\Large $3.87_{\pm0.52}$}
& {\color{covgray}\large $91.5_{\pm4.3}$}
& {\Large $18.35_{\pm5.40}$}
& {\color{covgray}\large $90.3_{\pm4.9}$}
& {\Large $2.94_{\pm0.36}$} \\

\midrule
\multicolumn{2}{l}{Mean rank $\downarrow$}
& \multicolumn{2}{c}{\Large $\color{bestblue}\mathbf{1.60\pm0.49}$}
& \multicolumn{2}{c}{\Large $\underline{2.70\pm1.62}$}
& \multicolumn{2}{c}{\Large $\underline{2.70\pm0.90}$}
& \multicolumn{2}{c}{\Large $5.10\pm1.81$}
& \multicolumn{2}{c}{\Large $4.30\pm1.10$}
& \multicolumn{2}{c}{\Large $5.50\pm1.02$}
& \multicolumn{2}{c}{\Large $6.10\pm1.30$} \\

\bottomrule
\end{tabular}%
}
\vspace{2pt}
\begin{minipage}{0.98\linewidth}
\scriptsize
\textit{Note. Large datasets are randomly subsampled to $10{,}000$ observations for the ablation study, with the same sampled data used across all variants within each seed.}
\end{minipage}
\end{table}

\end{document}